\documentclass[11pt]{article}

\usepackage[utf8]{inputenc}
\usepackage[T1]{fontenc}
\usepackage[letterpaper,margin=1in]{geometry}
\usepackage[square,numbers,sort&compress]{natbib}
\usepackage[hidelinks]{hyperref}
\usepackage{microtype}

\usepackage{amsmath, amssymb, amsthm, mathtools}
\usepackage{graphicx}
\usepackage{booktabs}
\usepackage{multirow}
\usepackage{algorithm}
\usepackage{algpseudocode}
\usepackage{float}      
\usepackage{caption}    
\usepackage{wrapfig}    
\usepackage{subcaption} 
\usepackage{cleveref}
\usepackage{enumitem}
\usepackage{xcolor}
\usepackage{tikz}
\usetikzlibrary{arrows.meta, positioning, calc, shapes.geometric}

\theoremstyle{plain}
\newtheorem{theorem}{Theorem}
\newtheorem{proposition}[theorem]{Proposition}
\newtheorem{lemma}[theorem]{Lemma}
\newtheorem{corollary}[theorem]{Corollary}
\theoremstyle{definition}
\newtheorem{definition}[theorem]{Definition}
\newtheorem{assumption}[theorem]{Assumption}
\theoremstyle{remark}

\newcommand{\R}{\mathbb{R}}
\newcommand{\N}{\mathbb{N}}

\newcommand{\Prob}{\mathbb{P}}
\newcommand{\sX}{\mathcal{X}}
\newcommand{\sL}{\mathcal{L}}

\newcommand{\sF}{\mathcal{F}}
\newcommand{\sC}{\mathcal{C}}
\newcommand{\sZ}{\mathcal{Z}}

\newcommand{\cpz}{\mathrm{CPZ}}
\newcommand{\opand}{\,\wedge\,}
\newcommand{\opor}{\,\vee\,}
\newcommand{\opnot}{\neg}
\newcommand{\primfam}[1]{\ensuremath{\mathsf{#1}}}
\newcommand{\dH}{d_{\mathrm{H}}}
\newcommand{\Hdist}[2]{\dH(#1, #2)}
\newcommand{\keywords}[1]{\par\medskip\noindent\textbf{Keywords:} #1\par}

\title{CompCPZ: Preserving Multi-Modal Intent in\\
       Language-Guided Robot Manipulation}

\author{
  Zhen Zhang \quad Ahmad Hafez \quad Peng Xie\\
  Yanliang Huang \quad Wenyuan Wu \quad Amr Alanwar\\[0.5em]
  {\small School of Computation, Information and Technology}\\
  {\small Technical University of Munich, Germany}\\
  {\footnotesize\texttt{\{zhenzhang.zhang, a.hafez, p.xie, yanliang.huang, wenyuan.wu, alanwar\}@tum.de}}
}
\date{}

\begin{document}
\maketitle

\begin{abstract}
A robot asked to ``place the cup near the red plate or the blue
plate'' may reach the centroid between them and appear
geometrically successful, while satisfying neither disjunct of
the instruction. This silent semantic failure exposes a
structural limitation of language-conditioned robot policies:
representations that collapse a disjunctive instruction into a
single connected set cannot preserve all feasible modes, and
planners that commit to one action degrade under run-time mode
uncertainty. We address this limitation with CompCPZ, a sound
algebraic layer that language-conditioned learning systems wrap
to recover multi-modal disjunctive representation, recursively
composing per-primitive constrained polynomial zonotope
enclosures along the language parse tree with distribution-free
conformal coverage and sub-millisecond runtime. On a closed-loop
ManiSkill3 tabletop-manipulation benchmark, CompCPZ outperforms
convex set baselines, multi-peak decoders, and a zero-shot
vision-language-action model ($1{,}900/1{,}918$ paired wins,
$p \ll 10^{-30}$); the same compiler also transfers without
retuning to planar real-robot trials on a Unitree Go2 quadruped
under motion capture.
These results suggest that compositional language grounding
should be evaluated not only by reaching a decoded target, but by
whether the represented feasibility set preserves the
connected-component structure of the user's intent.
\end{abstract}

\keywords{Compositional Language Grounding, Language-Conditioned Manipulation, Vision-Language Models for Robotics}

\section{Introduction}
\label{sec:intro}

A Franka arm asked to ``put the cube near the red mug or the blue
plate'' is routinely demonstrated to reach the centroid between the
two targets and report geometric success, while its end-effector
lands in empty air that satisfies neither disjunct. We call
this a \emph{silent semantic} failure:
the action is geometrically coherent in isolation, yet violates the
disjunctive structure of the user's instruction. As
vision--language--action pipelines scale to broader instruction
sets, this becomes a load-bearing limitation for these pipelines: any
system whose geometric output collapses disjunctive intent fails
compositionally. Today's vision-language-to-action pipelines
either decode disjunctive instructions as a single connected
feasible region, as non-sound multi-peak
heatmaps or value maps~\citep{huang2023voxposer,zhou2025physvlm},
or commit to a single executed
action~\citep{kim2024openvla,brohan2023rt2,black2024pi0,chi2023diffusion}.
We prove both failures are structural, not tuning: any sound
encoding whose component count is smaller than the number of modes
misses the true feasibility set by at least half the inter-mode
separation (a structural floor on connected-component count, not
on representation capacity), and any single-action planner under
run-time mode uncertainty has worst-case success that decays with
the number of modes. Non-sound outputs
(value maps, heatmaps, single-point actions) fall outside the lower
bound's scope and are treated empirically.

\begin{figure}[!ht]
\centering
\includegraphics[width=\linewidth]{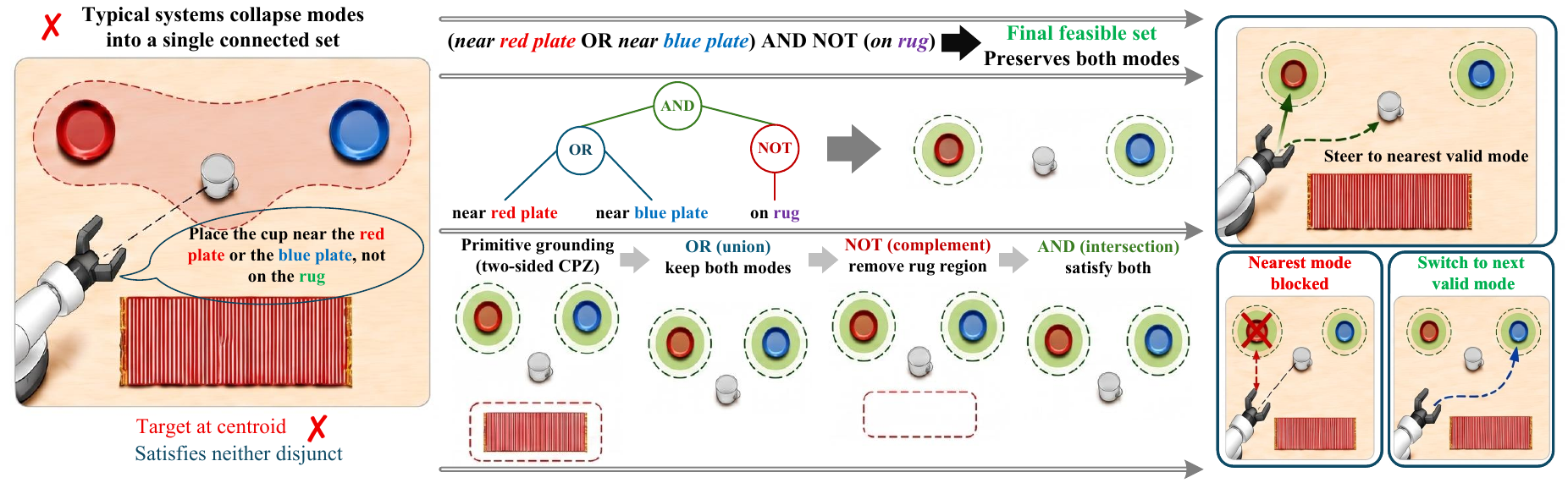}
\caption{CompCPZ compiles a natural-language instruction into a
sound $k$-component feasibility set via bottom-up Boolean
composition of two-sided per-primitive enclosures, with run-time
mode switching for handling occlusion of the nearest mode.}
\label{fig:pipeline}
\end{figure}

We meet the lower bound with CompCPZ (\Cref{fig:pipeline}), which
parses compositional language into a primitive grammar and
recursively composes constrained polynomial zonotopes to realise
Boolean intersection, union, and (on axis-aligned grammars) exact
complement, with distribution-free conformal coverage and
sub-millisecond runtime.
The pipeline is evaluated on simulated ManiSkill3 tabletop
manipulation and on planar real-robot trials on a Unitree Go2
quadruped.

We summarise three contributions. First, a structural impossibility
result: every sound grounding with fewer components than the
instruction has modes misses the true set by at least half the
inter-mode separation, complemented by a single-action lower bound.
Second, to our knowledge the first compositional compiler from
natural-language parse trees to sound $k$-component set
enclosures, with
distribution-free conformal coverage propagation; we instantiate
it with constrained polynomial zonotopes, a representation that
also enables exact axis-aligned complement, polynomial primitives,
depth-independent Lipschitz stability, and containment-checked
run-time mode switching, none of which are soundly supported by
hand-rolled union-of-AABBs encodings. Third, a closed-loop
ManiSkill3 benchmark and planar real-robot trials on a Unitree
Go2 quadruped: CompCPZ wins $1{,}900/1{,}918$ paired sign
tests at $p \ll 10^{-30}$ against convex baselines, an oracle
GMM, multi-peak decoders, and a zero-shot
vision--language--action model on the regimes where the
topological bound is non-trivial. Together
these results frame compositional language grounding for robotics
as a sound-representation problem that learning systems wrap to
preserve, not of clean-benchmark success rate.

CompCPZ is a sound algebraic foundation for language-conditioned
robot policies, used as a wrapping layer rather than a new
learning algorithm. Three learning components feed the
algebra layer: a vision--language parser (GPT-4o, chain-of-thought
prompting); an in-domain fine-tuned visual detector (YOLOv8n,
$200$ synthetic frames, $3$\,min on one GPU); and
distribution-free split-conformal calibration of the per-primitive
thickness. The detector lifts end-to-end in-GT-mode from $0.20$
with off-the-shelf
Grounding-DINO+SAM2~\citep{liu2023grounding,ravi2024sam2} to
$1.00$ (\Cref{tab:training_data_ablation}). The topological
lower bound (\Cref{thm:topologicalLowerBound}) applies to
\emph{any} sound encoder above this layer, handcrafted or learned
(value maps, single-decoder VLA policies, regression-style
end-effector heads); no amount of training fixes a
single-component representation. The contribution to robot
learning is therefore not a new training algorithm but the
principled algebraic layer these four policy classes
structurally need to overcome the topological lower bound and
propagate distribution-free uncertainty through Boolean
composition.

\section{Related Work}
\label{sec:related}

\paragraph{VLM-to-constraint pipelines.}
We organise prior work by the geometry of its output, since this
is what our topological bound acts on. (i)~Sound convex
enclosures (axis-aligned bounding boxes, minimum-volume enclosing
ellipsoids) over identified objects are covered by
\Cref{thm:convexLowerBound} and benchmarked here as AABB / MVEE
baselines. (ii)~Dense scalar fields / pointwise
affordances: value maps~\citep{huang2023voxposer}, reachability
heatmaps~\citep{zhou2025physvlm}, affordance
points~\citep{yuan2024robopoint}, and code piped into
TAMP~\citep{owltamp2024} produce non-sound multi-peak outputs that
fall outside the topological bound; we add heatmap-argmax and
sampling-decoder baselines to cover them empirically.
(iii)~Single committed actions:
VLAs~\citep{kim2024openvla,brohan2023rt2} output a
single 7-DoF action and flow/diffusion
policies~\citep{black2024pi0,chi2023diffusion} a fixed action chunk; either
way the committed output is a finite-dimensional point that
cannot contain a positive-volume disjunctive feasibility set,
and we benchmark OpenVLA-7B and formalise this single-point
failure as \Cref{prop:singlePoint}.

\paragraph{Set representations, safety, and conformal prediction.}
CompCPZ extends the constrained polynomial zonotope
algebra~\citep{kochdumper2023constrained,zhang2025data}, which
generalises zonotopes~\citep{girard2005reachability}, constrained
zonotopes~\citep{scott2016constrained}, and polynomial
zonotopes~\citep{kochdumper2020sparse}. Hybrid zonotopes~\citep{bird2023hybrid} exactly capture unions of
polytopes (our CPZUnion of \Cref{app:union} reduces to one in
the axis-aligned case) but cannot represent the polynomial
primitives (ellipsoid avoidance, between-region quadratic
constraints) admitted by our grammar.
Control barrier function safety
filters~\citep{ames2019cbf,wabersich2023data}, including recent
plug-and-play VLA safety layers~\citep{vlsa2025}, enforce
pointwise constraint satisfaction at execution time but do not
compose along a language parse tree at the representation level.
Conformal prediction~\citep{vovk2005algorithmic,angelopoulos2023gentle}
is wrapped per-primitive and union-bounded over leaves, with
explicit DKW sample complexity in \Cref{app:proofs}.
ManiSkill3~\citep{tao2025maniskill3} hosts our closed-loop benchmark
at $10\times$ more seeds per cell than comparable
LIBERO~\citep{liu2024libero} settings.

\section{Method}
\label{sec:method}

\subsection{Preliminaries: Constrained Polynomial Zonotopes}
\label{sec:prelim}

Throughout, $\sX \subset \R^n$ denotes a compact ambient workspace,
typically a bounded operational region of the end-effector or object pose.
For sets $A, B \subseteq \sX$, $A \cap B$, $A \cup B$, and
$A^c := \sX \setminus A$ denote intersection, union, and complement in
$\sX$. The (symmetric) Hausdorff distance is
$\Hdist{A}{B} := \max\big\{ \sup_{a \in A} \inf_{b \in B} \|a - b\|,~
                            \sup_{b \in B} \inf_{a \in A} \|a - b\| \big\}.$

\begin{wrapfigure}{r}{0.28\linewidth}
\vspace{-\intextsep}
\centering
\begin{tikzpicture}[scale=1.1, every node/.style={font=\scriptsize},
                     inner sep=0pt, outer sep=0pt]
  \filldraw[fill=blue!10, draw=blue!65, thick]
    (0, 0) rectangle (3.0, 2.4);
  \filldraw[fill=green!18, draw=green!55!black, thick]
    (0.6, 1.7) -- (1.5, 2.0) -- (2.4, 1.7)
    -- (2.4, 0.7) -- (1.5, 0.4) -- (0.6, 0.7) -- cycle;
  \filldraw[fill=red!22, draw=red!75, thick, line join=round]
    (1.2, 0.66)
    .. controls (0.55, 1.73) and (2.45, 1.73) .. (2.0, 0.66)
    .. controls (1.85, 1.08) and (1.35, 1.08) .. cycle;
  \node[blue!75] at (0.7, 0.2) {AABB};             
  \node[green!45!black] at (1.5, 1.61) {Zonotope}; 
  \node[red!75] at (1.5, 1.23) {CPZ};              
\end{tikzpicture}
\caption{Tightness hierarchy: axis-aligned
$\text{AABB} \supseteq$ centrally symmetric
$\text{Zonotope} \supseteq$ possibly non-convex $\text{CPZ}$.}
\label{fig:cpz_intuition}
\end{wrapfigure}

A constrained polynomial zonotope
(CPZ, \Cref{fig:cpz_intuition})~\citep{kochdumper2023constrained,zhang2025data,zhang2026exact}
$\sZ \subseteq \R^n$
is the image of a polynomial map in factor variables $\alpha \in [-1, 1]^p$
subject to polynomial equality constraints; written
$\sZ = \cpz(c, G, E, A, b, F)$ with center $c \in \R^n$, generator
matrix $G$, exponent matrix $E$, constraint generators $A$, constraint
exponent matrix $F$, and offset $b$. The formal definition and the
three CPZ operations we use are stated in
\Cref{app:cpz_def}: (i)~exact intersection $\sZ_1 \cap \sZ_2$ via
concatenation of constraint generators; (ii)~union
$\sZ_1 \cup \sZ_2$, representable as a CPZ via a binary indicator
factor (\Cref{app:union}); and (iii)~complement $\sX \setminus \sZ$,
exact when $\sZ$ is axis-aligned and approximate otherwise
(\Cref{app:complement}). Each operation costs polynomial time in the
representation size of the operands. These three operations realise $(\opand, \opor, \opnot)$ on language as $(\cap, \cup, ^c)$
on CPZs, the algebraic backbone of the rest of this section.

\subsection{Compositional Instruction Grammar}
\label{sec:grammar}

The grammar $\sL$ closes a fixed primitive set $\Phi$ under
$\{\opand, \opor, \opnot\}$ (i.e.\
$L ::= \phi \mid L_1 \opand L_2 \mid L_1 \opor L_2 \mid \opnot L_1$).
Primitives
are templated natural-language predicates with fixed geometric
semantics, in four families: goal (e.g.\ ``near $\langle o \rangle$''),
obstacle (e.g.\ ``avoid $\langle o \rangle$''), property
(e.g.\ ``keep $\langle p \rangle$ level''), and relational
(e.g.\ ``between $\langle o_1 \rangle$ and $\langle o_2 \rangle$'').
A VLM queried with chain-of-thought
prompting~\citep{wei2022chain} parses free-form manipulation
instructions into this grammar (\Cref{app:vlm}). The grammar
extends naturally to comparative spatial primitives such as
``closer to $\langle o_1 \rangle$ than $\langle o_2 \rangle$''
(a half-space $\{x : \|x-o_1\| \leq \|x-o_2\|\}$ supported
natively by the CPZ linear-constraint matrix); the multi-object
generalisation ``closest to $o_1$ among $\{o_1, \ldots, o_n\}$''
compiles to the intersection
$\bigcap_{i \geq 2} \{x : \|x-o_1\| \leq \|x-o_i\|\}$, i.e.\
the Voronoi cell of $o_1$ under the conjunction $\opand$.
Extensions to temporal logic~\citep{maler2004monitoring} and
counting are future work.

\subsection{Two-Sided CPZ Extraction}
\label{sec:extraction}

The extraction pipeline (VLM parsing $\to$ per-primitive
template selection with object detection $\to$ conformal margin
calibration $\to$ two-sided pair) is visualised in
\Cref{fig:pipeline} (full implementation in \Cref{app:impl}).
For each primitive $\phi \in \Phi$ appearing in an instruction $L$, we
extract a two-sided CPZ enclosure $(\hat \sC_\phi^-, \hat \sC_\phi^+)$
of the true feasible set $\sC_\phi \subseteq \sX$ such that, with high
probability, $\hat \sC_\phi^- \subseteq \sC_\phi \subseteq \hat \sC_\phi^+$;
a pair rather than a single set preserves containment under
arbitrary $\opand/\opor/\opnot$ composition.

Per-primitive templates $T_\phi$ (e.g. \primfam{near} as a centroid
box, \primfam{between} as a half-space intersection, \primfam{avoid}
as an ellipsoid complement) are thin wrappers over standard CPZ
constructors (\Cref{app:impl}), and the conformal margin
$\epsilon_\mathrm{cal}$ follows the standard split-conformal
recipe~\citep{vovk2005algorithmic,angelopoulos2023gentle}; the
algebra is invariant to swapping any learning component
(parser, detector, calibration), decoupling sound composition
from learned perception. The analysis rests on a single
distributional assumption, per-primitive coverage
(\Cref{ass:perPrimitive}): each extracted pair contains the true
clause-feasible set with probability at least $1 - \delta_\phi$,
enforced by conformal prediction during extraction. A mild
operator-regularity condition bounds how endpoint perturbations
propagate through $\opand, \opor$; on axis-aligned grammars the
Lipschitz constant is $\kappa = 1$ (formal statements:
\Cref{app:proofs}).

\subsection{CPZ Composition Algebra}
\label{sec:composition}

Given per-primitive enclosures, we compose $L \in \sL$ bottom-up
into one CPZ enclosure pair $(\hat \sC_L^-, \hat \sC_L^+)$
(\Cref{eq:composition} in \Cref{app:union}): conjunction takes
componentwise intersection, disjunction takes componentwise union,
and negation swaps and complements,
$\opnot L_1 \mapsto \big((\hat \sC_{L_1}^+)^c,\, (\hat \sC_{L_1}^-)^c\big)$
to preserve containment. The three set operations $\cap$, $\cup$, and workspace complement
are realised by standard CPZ primitives (\Cref{app:cpz_def}). Size
and time complexity: \Cref{prop:complexity}; pseudocode:
\Cref{alg:compose} in \Cref{app:impl}. Theoretical guarantees
follow in \Cref{sec:theory}.

\subsection{Theoretical Guarantees}
\label{sec:theory}

Under per-primitive conformal coverage (\Cref{ass:perPrimitive})
as the sole distributional assumption, the composed enclosure
satisfies a union bound
$\Prob[\hat \sC_L^- \subseteq \sF(L) \subseteq \hat \sC_L^+]
\geq 1 - \sum_i \delta_i$~--- to our knowledge the first
distribution-free coverage guarantee for VLM-grounded constraint
extraction under compositional language. The Hausdorff error
against the true feasibility set is bounded by
$\kappa^{\mathrm{depth}(L)} \epsilon$ in the per-primitive
thickness $\epsilon$; under
$\epsilon < \eta/(2\kappa^{\mathrm{depth}(L)})$ CompCPZ recovers
all $k$ semantic modes, while any convex over-approximation
inherits the $\eta/2$ floor of \Cref{thm:convexLowerBound}. On
axis-aligned primitives $\kappa \leq 1$, so the bound is
depth-independent; empirically $\hat\kappa = 1.00$ over $6{,}828$
internal parse-tree nodes (\Cref{fig:empirical_kappa}) and even
over $1{,}880$ rotated-box nodes outside the axis-aligned
hypothesis. Composition runs in linear time in $|L|$ with $O(1)$
containment queries, and inference reduces to LP feasibility;
full statements and proofs: \Cref{app:proofs}.

\paragraph{Linguistic compositionality is a topological invariant.}
The disjunctive depth of $L$ lower-bounds the component count of
its feasibility set: $\beta_0(\sF(L)) \geq \beta_0(\sF(L_1)) +
\beta_0(\sF(L_2))$ across an $\opor$ between propositionally
inconsistent operands. Write $\mathcal{C}_{\mathrm{top}}(L) :=
\beta_0(\sF(L))$ for the topological complexity of $L$ and $\eta$
for the inter-mode separation of its $k = \mathcal{C}_{\mathrm{top}}(L)$
modes.

\noindent\textbf{Theorem (Topological lower bound, informal).}
\emph{Any sound encoding $S \supseteq \sF(L)$ with fewer than $k$
connected components satisfies $\Hdist{S}{\sF(L)} \geq \eta/2$.}
Formal statement and proof in \Cref{app:proofTopological}
(\Cref{thm:topologicalLowerBound}, workspace-free, with convex
corollary \Cref{thm:convexLowerBound}). The bound rules out
every sound single-component encoding; non-sound outputs
(multi-peak value maps, heatmap argmaxes, single-point actions,
diffusion samples) fall outside its scope and are governed by
\Cref{prop:singlePoint} once they commit to a single executed
action. CompCPZ attains the bound via its $\mathrm{CPZUnion}$
representation; an optional mode-merge pass (\Cref{app:union})
restores strict component minimality if the lazy implementation
over-splits.

\paragraph{A single action cannot certify a $k$-mode disjunction.}

\noindent\textbf{Proposition (Single-action lower bound, informal).}
\emph{For a $k$-mode instruction with distribution $\mu$ over
run-time mode availability, any planner committing to a single
$a \in \R^n$ before observing $\mu$ has worst-case success rate at
most $1/k$ under symmetric $\mu$, and $0$ under adversarial $\mu(a)$.}
Formal statement and proof in \Cref{app:proofSinglePoint}
(\Cref{prop:singlePoint}).

\section{Experiments}
\label{sec:exp}

Existing manipulation benchmarks (LIBERO~\citep{liu2024libero},
ManiSkill3~\citep{tao2025maniskill3}) test single-target
instructions, where $\mathcal{C}_{\mathrm{top}}(L) = 1$ and
\Cref{thm:topologicalLowerBound} is degenerate. We therefore
construct a $49$-instruction (extended: $191$) tabletop
benchmark stratified by depth $\in \{0,\ldots,4\}$ and primitive
family (\primfam{near}, \primfam{inside}, \primfam{between},
\primfam{avoid}) over $11$ unmodified ManiSkill3 task families
(\Cref{app:benchmark}). To rule out a benchmark-construction
artefact, we retain the single-target sanity tasks PickCube and
StackCube on which CompCPZ does not beat the convex baseline
(both $\geq 0.99$ in-GT-mode, \Cref{tab:closed_loop}); zero-shot
OpenVLA-7B~\citep{kim2024openvla} on the same task families
collapses to $30.8\%$ in-GT-mode (\Cref{app:stat_tests}),
independent evidence the gap is task-intrinsic.

Baselines: AABB convex (union-of-modes axis-aligned hull), MVEE
convex (tightest convex hull), and an oracle GMM with $k$
Gaussian modes (\Cref{app:gmm})---all
instantiating the sound single-component class governed by
\Cref{thm:convexLowerBound}. Non-sound multi-peak methods
(VoxPoser~\citep{huang2023voxposer}, OWL-TAMP~\citep{owltamp2024},
PhysVLM~\citep{zhou2025physvlm}, RoboPoint~\citep{yuan2024robopoint})
are governed by \Cref{prop:singlePoint} or their decoded $\beta_0$;
why these cannot serve as sound-enclosure baselines, and why our
committed-action proxies are faithful, is detailed in
\Cref{app:extended_baselines}.

\subsection{Main Results}
\label{sec:exp_main}

\begin{figure}[t]
\centering
\includegraphics[width=0.245\linewidth]{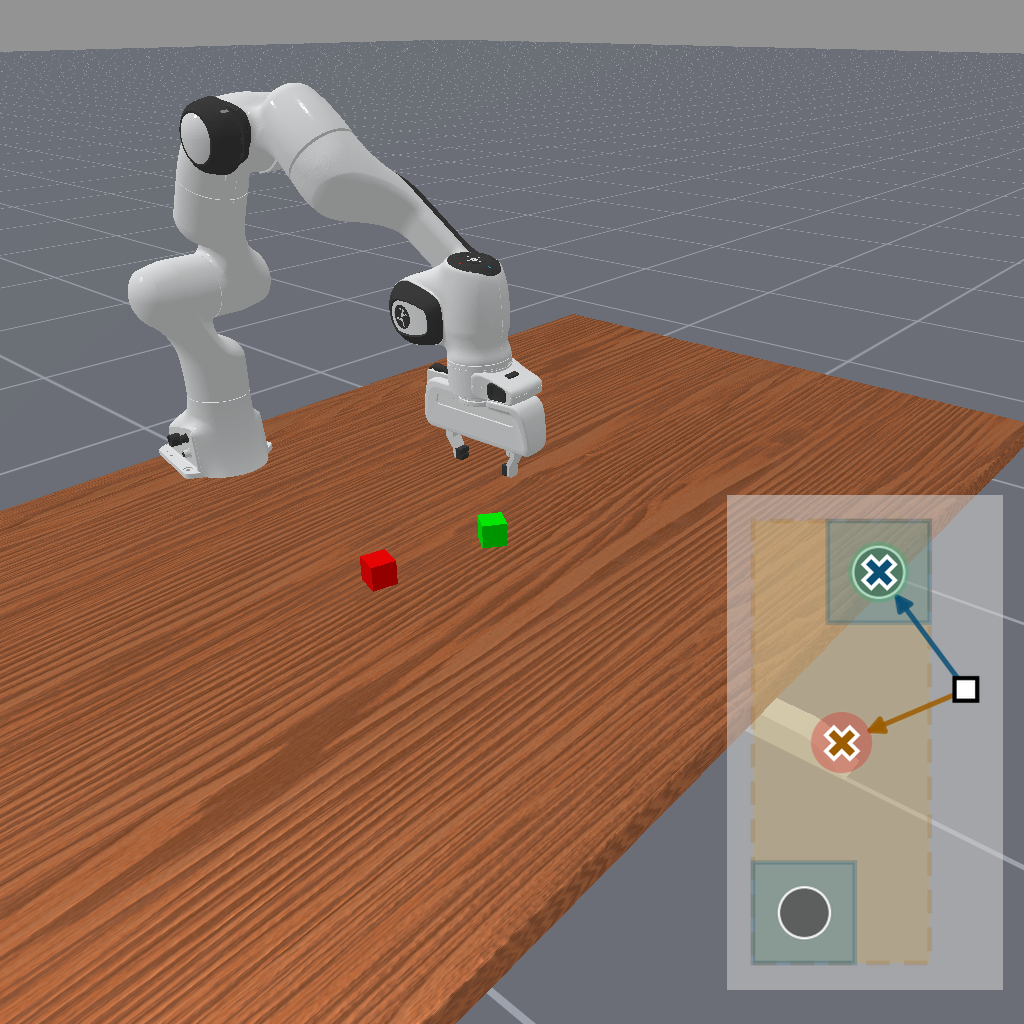}\hfill
\includegraphics[width=0.245\linewidth]{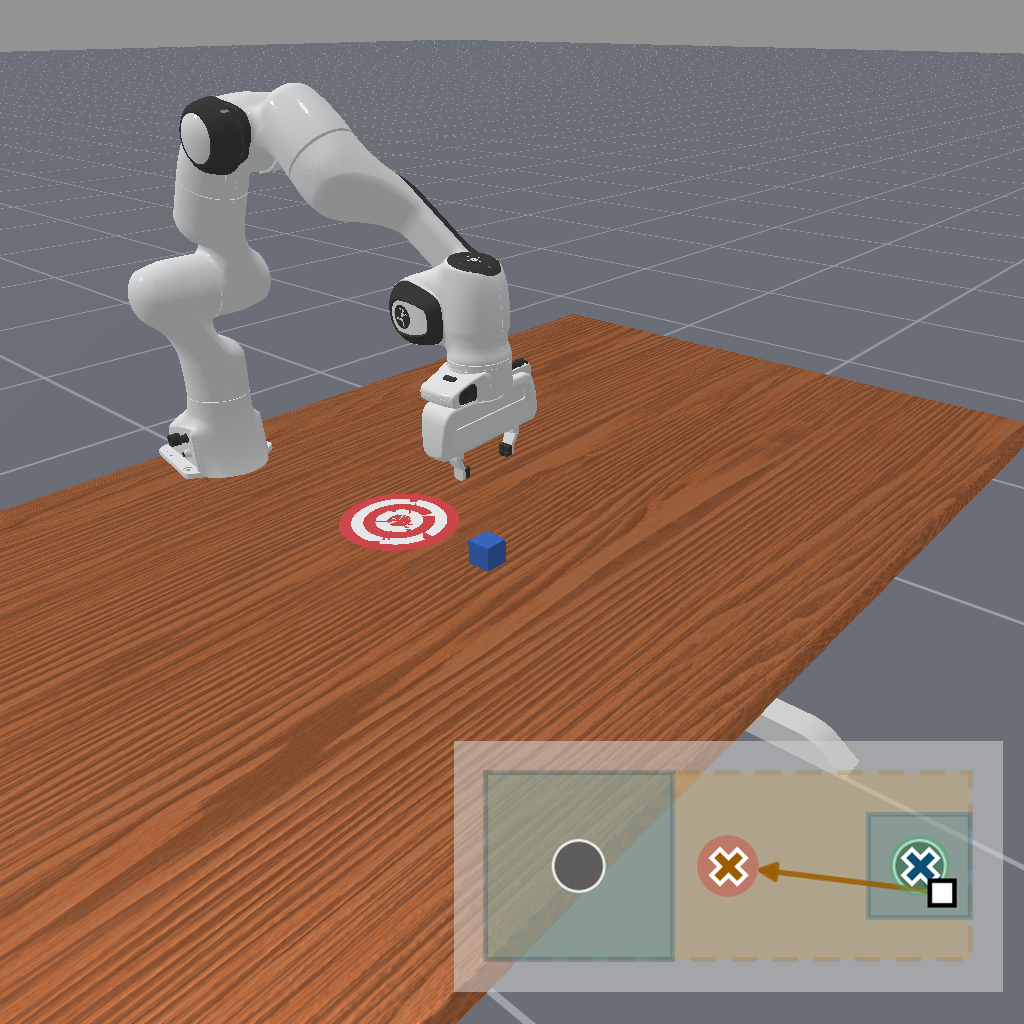}\hfill
\includegraphics[width=0.245\linewidth]{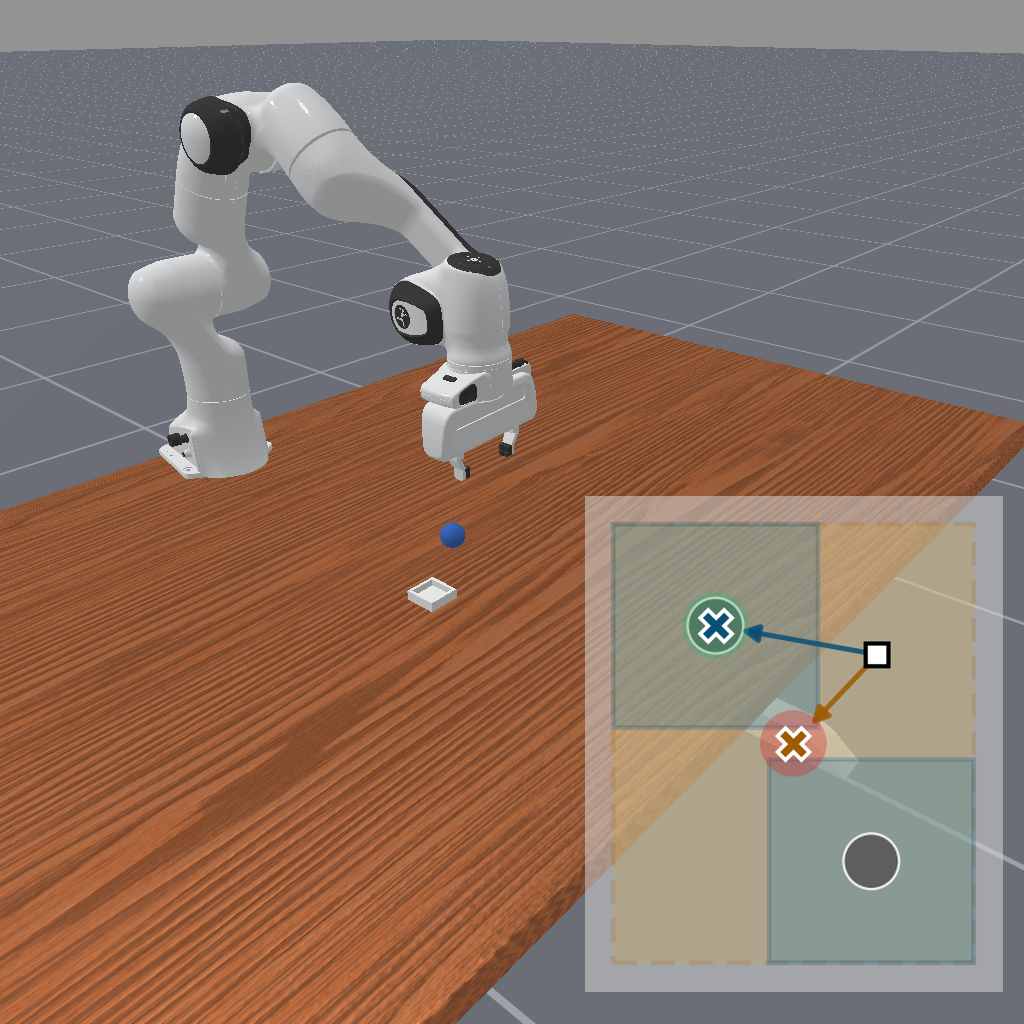}\hfill
\includegraphics[width=0.245\linewidth]{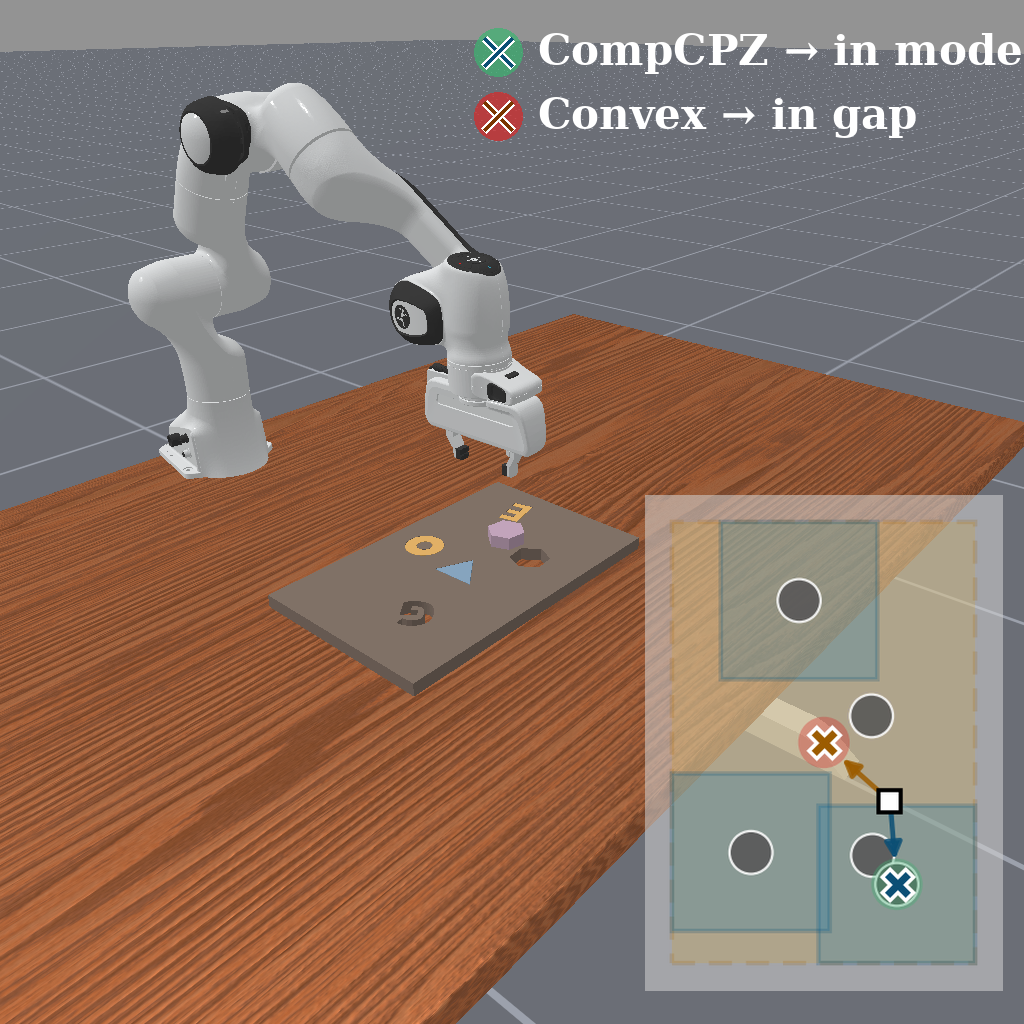}
\caption{Sound convex enclosures structurally fail on disjunctive
instructions (\Cref{thm:convexLowerBound}). Four ManiSkill3 task
families, each pairing a SAPIEN render with a top-down inset. The
CompCPZ target (green halo) lands inside a true mode; the convex
AABB target (red halo) lands in the inter-mode gap.}
\label{fig:teaser}
\end{figure}

The headline question is not whether the predicted set looks
tight, but whether using it as a goal oracle places the robot
inside a region that satisfies the instruction. Metrics: geometric
success (EE within $5$\,mm of target) and in-GT-mode
(non-self-referential: EE inside any analytic GT feasibility
region). Both methods run under the same Franka servo
(\Cref{sec:exp_closedloop}); \Cref{fig:teaser} shows four
representative families where the convex target lands in the
inter-mode gap while CompCPZ reaches a true mode.
The two non-self-referential metrics summarised over $18$ task
families ($\times\,200$ seeds, with mode-switch variants) are
reported in \Cref{tab:closed_loop}; the visual companion is
\Cref{fig:compcpz_bars}; the MVEE sharpening is
\Cref{fig:mvee_bars} ($\times\,20$ seeds, $p \ll 10^{-30}$ pooled;
exact values in \Cref{app:stat_tests}).

\begin{wraptable}{r}{0.40\linewidth}
\vspace{-\intextsep}
\centering
\footnotesize
\setlength{\tabcolsep}{4pt}
\renewcommand{\arraystretch}{0.95}
\caption{Synthetic ellipsoid-avoid (in-GT-mode, $n{=}459$): box
unions cannot tile an ellipsoid complement tightly at any box
count.}
\label{tab:ellipsoid_avoid}
\begin{tabular}{l c}
\toprule
Method & In-GT-mode \\
\midrule
CompCPZ                 & \textbf{1.000} \\
AABB-union (bounding)   & 0.928 \\
AABB-union (inscribed)  & 0.715 \\
\bottomrule
\end{tabular}
\end{wraptable}
The $1.00$ vs $0.13$ gap on clean disjunctions
(\Cref{tab:closed_loop}) is the structural gap between
single-component and any sound $k$-component encoder predicted
by \Cref{thm:topologicalLowerBound}. CompCPZ extends beyond
axis-aligned-positive primitives with four capabilities:
adversarial mode blocking (CompCPZ pooled $\mathbf{0.90}$
vs the $1{-}1/k$ floor a nearest-centroid union collapses to,
\Cref{tab:nonsound_adversarial}); explicit negation (a
complement-free union loses $\mathbf{14.75\%}$ of pooled trials
on the headline negation instructions, \Cref{tab:negation_ablation});
distribution-free conformal coverage propagation through Boolean
composition; and polynomial primitives such as ellipsoid
avoidance (\Cref{tab:ellipsoid_avoid}). All four are derived
compositionally from a natural-language parse tree rather than
enumerated by hand.

\paragraph{Polynomial primitives demand the CPZ tier.}
The clean-disjunction gap above lives at the AABB tier of the
tightness hierarchy (\Cref{fig:cpz_intuition}); the CPZ tier is
exercised by curved primitives, for which axis-aligned boxes
cannot be simultaneously sound and tight. On a synthetic
disjunctive-avoid benchmark (``near red $\opor$ near blue, avoid
the obstacle''; $500$ scenarios with random goal boxes and a
random ellipsoid obstacle, true feasibility
$(\mathrm{red}_\square \cup \mathrm{blue}_\square) \cap
\{x : (x{-}c)^\top \mathrm{diag}(a^{-2}, b^{-2})(x{-}c) \geq 1\}$),
CompCPZ encodes the polynomial obstacle exactly and reaches
$1.00$ in-GT-mode. A union-of-AABBs must replace the ellipsoid by
a box: the bounding-box variant loses goal corners outside the
ellipsoid ($0.928$), the inscribed-box variant selects targets
inside it ($0.715$; \Cref{tab:ellipsoid_avoid}).

\subsection{Closed-Loop Franka Execution}
\label{sec:exp_closedloop}

A damped-pseudoinverse Jacobian servo drives the Franka EE toward each
method's target (CompCPZ: closest mode centroid; AABB baseline: centroid
of the bounding box of all modes). The ground-truth feasibility
box is computed by evaluating the parse tree against live
ManiSkill3 actor positions.

\begin{figure}[t]
\centering
\includegraphics[width=\linewidth]{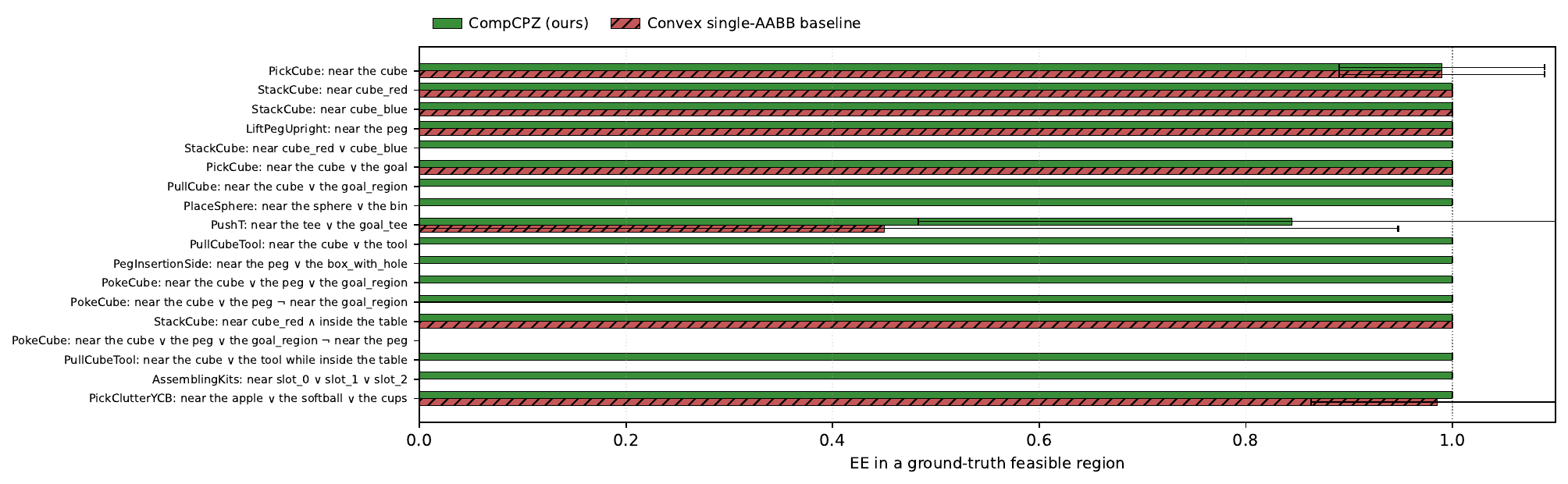}
\caption{Closed-loop Franka EE servo on $18$ ManiSkill3 task
families ($200$ seeds each). Both methods reach the target at
$\geq\!0.99$ geometric success; only CompCPZ lands the EE inside
an analytic GT mode. Convex silently fails on $10/18$ multi-modal
families (red ``silent failure'' annotations). Pooled paired sign
test: $1{,}900/1{,}918$ wins, $p \ll 10^{-30}$
(\Cref{app:stat_tests}).}
\label{fig:compcpz_bars}
\end{figure}

\begin{table}[t]
\caption{Closed-loop in-GT-mode rate on ManiSkill3 ($200$ seeds
per row). Both methods reach $\geq\!0.99$ geometric success; only
CompCPZ lands the EE inside a GT mode. Pooled paired sign test:
$1{,}900/1{,}918$ wins, $p \ll 10^{-30}$ (\Cref{app:stat_tests}).
PushT: Franka kinematic edge.}
\label{tab:closed_loop}
\centering
\footnotesize
\renewcommand{\arraystretch}{0.93}
\setlength{\tabcolsep}{3pt}
\begin{tabular}{l l c | c c}
\toprule
Task & Instruction & $k$ & CompCPZ & Convex \\
\midrule
PickCube         & near cube                              & 1 & 0.99 & 0.99 \\
StackCube        & red $\vee$ blue                        & 2 & \textbf{1.00} & \textbf{0.00} \\
PokeCube         & cube $\vee$ peg $\vee$ goal            & 3 & \textbf{1.00} & \textbf{0.00} \\
PokeCube         & (cube $\vee$ peg) $\wedge \neg$ goal   & 2 & \textbf{1.00} & \textbf{0.00} \\
AssemblingKits   & slot\_0 $\vee$ slot\_1 $\vee$ slot\_2  & 3 & \textbf{1.00} & \textbf{0.00} \\
PickClutterYCB   & apple $\vee$ softball $\vee$ cups      & 3 & 1.00 & 0.98 \\
PushT            & tee $\vee$ goal\_tee                   & 2 & 0.89 & 0.51 \\
\midrule
\multicolumn{3}{l|}{\textbf{Pooled $18$ families} ($10$ more in App.~\ref{app:stat_tests})} & \textbf{1.00} & \textbf{0.13} \\
\bottomrule
\end{tabular}
\end{table}
\paragraph{Baseline ablations.}
The tighter convex MVEE baseline fails by the
same convex lower-bound argument, and two non-sound
multi-peak decoders (heatmap argmax in PhysVLM/RoboPoint style and
a VoxPoser-lite sampling decoder) collapse to the $1 - 1/k$ floor
of \Cref{prop:singlePoint}; CompCPZ wins on the same data
(\Cref{tab:nonsound_adversarial}, \Cref{app:extended_baselines}).
The four differentiator regimes are quantified separately. Under
adversarial mode blocking, a
hand-rolled union committing to the nearest centroid falls to the
$1-1/k$ floor while CompCPZ retains pooled mean $\mathbf{0.90}$
(vs heatmap argmax $0.54$, sampling decoder $0.52$) across $13$
multi-modal families via containment-checked mode switching. On
the two negation instructions, an AABB-union without complement
loses $\mathbf{14.75\%}$ of pooled trials
(\Cref{tab:negation_ablation}). Zero-shot
OpenVLA-7B~\citep{kim2024openvla}, closed-loop on all $13$
multi-modal families ($\times 50$ seeds, $650$ episodes),
achieves $30.8\%$ in-GT-mode ($4/13$ cells at $1.00$,
$9/13$ at $0.00$) and task-success $0\%$. OpenVLA emits a single
$7$-DoF action rather than a sound enclosure, so the
set-representation bound does not apply; the $69.2\%$
gap-landings are however consistent with
\Cref{prop:singlePoint}. Pooled across the $18$-task, $200$-seed
evaluation ($n=1{,}918$ decisive trials), the one-sided paired
sign test returns $1{,}900$ CompCPZ wins vs $18$ AABB losses,
$p \ll 10^{-30}$ against every baseline; full tables in
\Cref{app:stat_tests}.

\paragraph{Headline numbers transfer to a real VLM parser.}
The headline closed-loop numbers above use a deterministic regex
parser to isolate the geometric origin of the gap from
natural-language parsing noise. To confirm transfer to an actual
VLM, we re-ran the full $18$-task $\times$ $200$-seed closed-loop
benchmark with GPT-4o as the live parser. The pooled in-GT-mode
rate is identical at the trial level: CompCPZ $3{,}367/3{,}600$
in-GT-mode under both regex and GPT-4o parsers (matching across
every one of the $18$ task-instruction cells, per-cell
$\Delta = 0$); the $1{,}900/1{,}918$ paired-sign-test wins and
$p \ll 10^{-30}$ reported above therefore hold verbatim under
the live VLM parser.

\subsection{Cross-Embodiment Hardware Validation: Unitree Go2}
\label{sec:exp_go2}

\begin{wraptable}{r}{0.60\linewidth}
\vspace{-\intextsep}
\centering
\footnotesize
\caption{Real-robot Go2 in-GT-mode rate ($36$ trials, $3$ seeds per
cell). All methods reach their planner goal in $12/12$ trials; the
gap is whether that goal is inside a GT mode. S3 excludes the rug;
S4 blocks mode~$0$ (CompCPZ informed). Time/travel:
\Cref{app:go2_results}.}
\label{tab:go2_results}
\setlength{\tabcolsep}{3pt}
\renewcommand{\arraystretch}{1.05}
\begin{tabular}{l c c c c c}
\toprule
Method & S1 & S2 & S3 ($\neg$rug) & S4 ($\neg$m$_0$) & Total \\
\midrule
CompCPZ       & \textbf{3/3} & \textbf{3/3} & \textbf{3/3} & \textbf{3/3} & \textbf{12/12} \\
AABB convex   & 0/3 & 0/3 & 0/3 (on rug) & 0/3 & \textbf{0/12} \\
Single clause & 2/3 & 2/3 & 3/3 & \textbf{0/3 (blocked)} & 7/12 \\
\bottomrule
\end{tabular}
\end{wraptable}
To probe whether the same compiler transfers across embodiments
and runs unchanged on physical hardware, we deploy the identical
grammar and CPZ algebra on a Unitree Go2
quadruped with
planar configuration $\sX \subset \R^2$, parsed by an LLM
(Claude / GPT-4) and executed under $\sim\!100$\,Hz NOKOV motion
capture. The four scenarios cover the $2$-mode disjunction (S1),
the $3$-mode disjunction (S2), the $2$-mode disjunction with rug
negation (S3), and the adversarial mode-block variant of S1 (S4);
three methods (CompCPZ, AABB convex, single-clause) $\times$
three seeds give $36$ closed-loop trials (\Cref{fig:go2_real}).
Go2 (a legged, non-anthropomorphic platform with a different
pursuit controller and state estimator) is a strictly larger
embodiment gap than a same-embodiment Franka sim$\to$real swap;
the compiler runs unchanged across both, isolating set-grounding
behaviour from embodiment bias. Direct Franka-class manipulation
(end-effector reachability, contact dynamics) is complementary
follow-up work. Hardware stack: \Cref{app:go2}; per-trial table:
\Cref{tab:go2_results}.

\begin{figure}[t]
\centering
\includegraphics[width=0.245\linewidth]{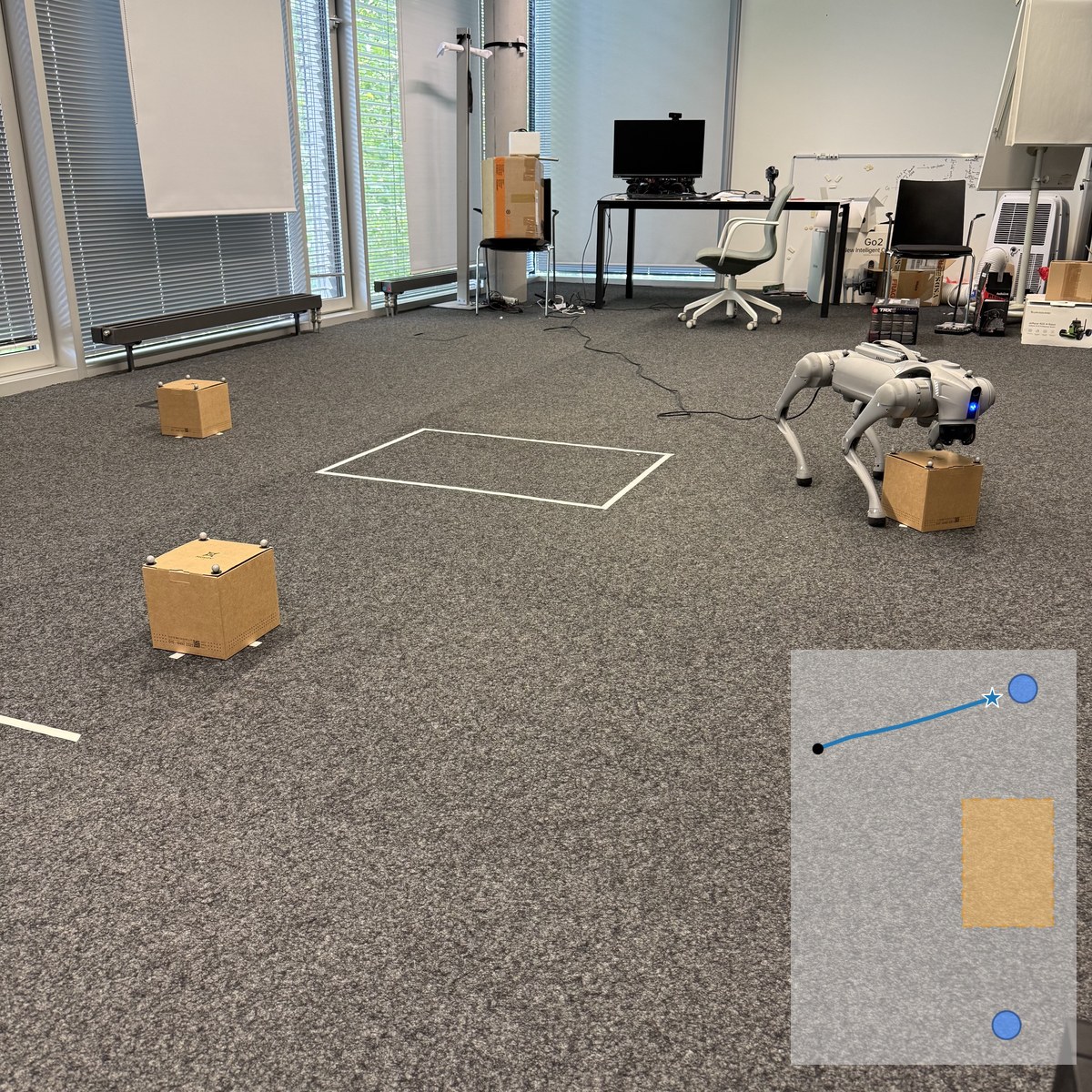}\hfill
\includegraphics[width=0.245\linewidth]{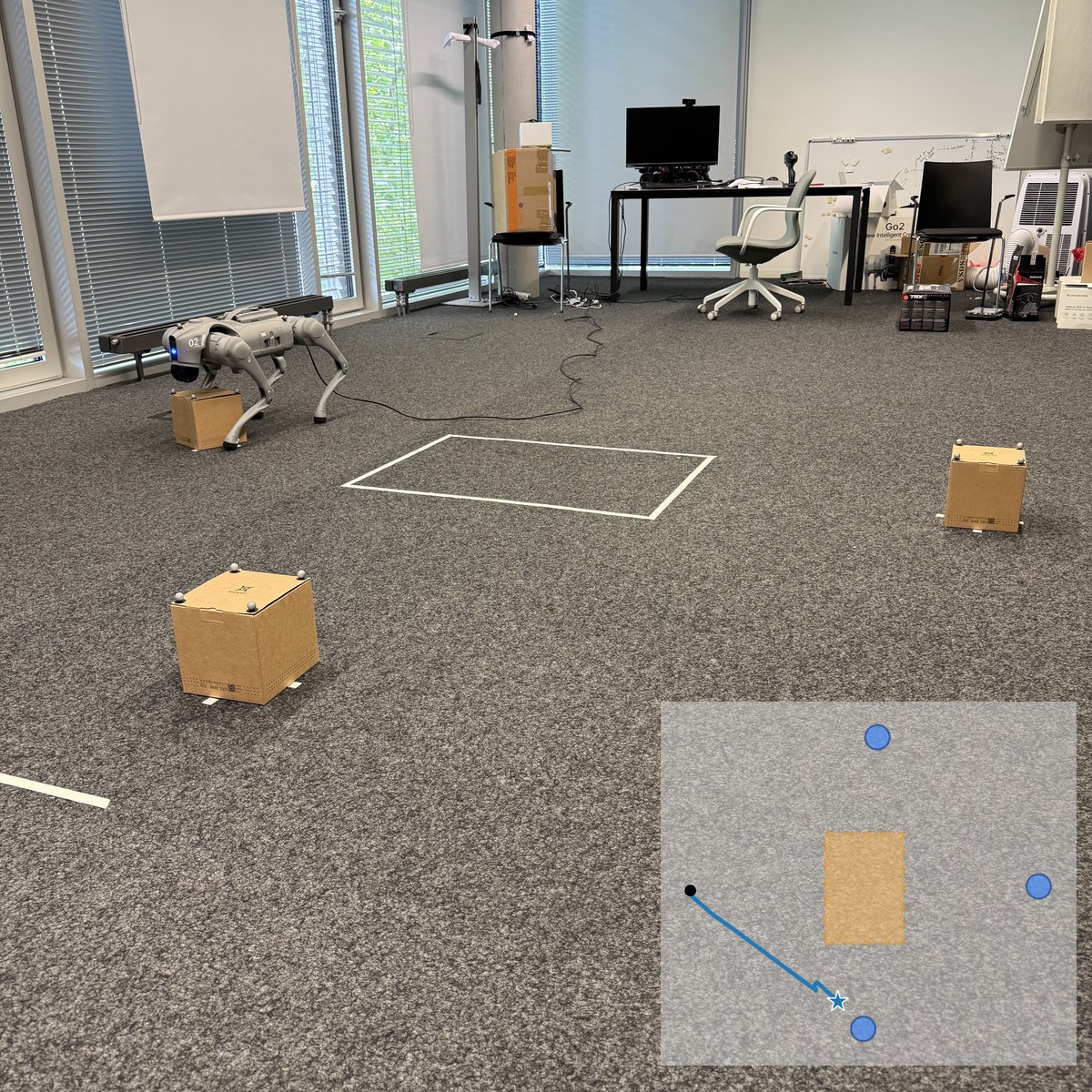}\hfill
\includegraphics[width=0.245\linewidth]{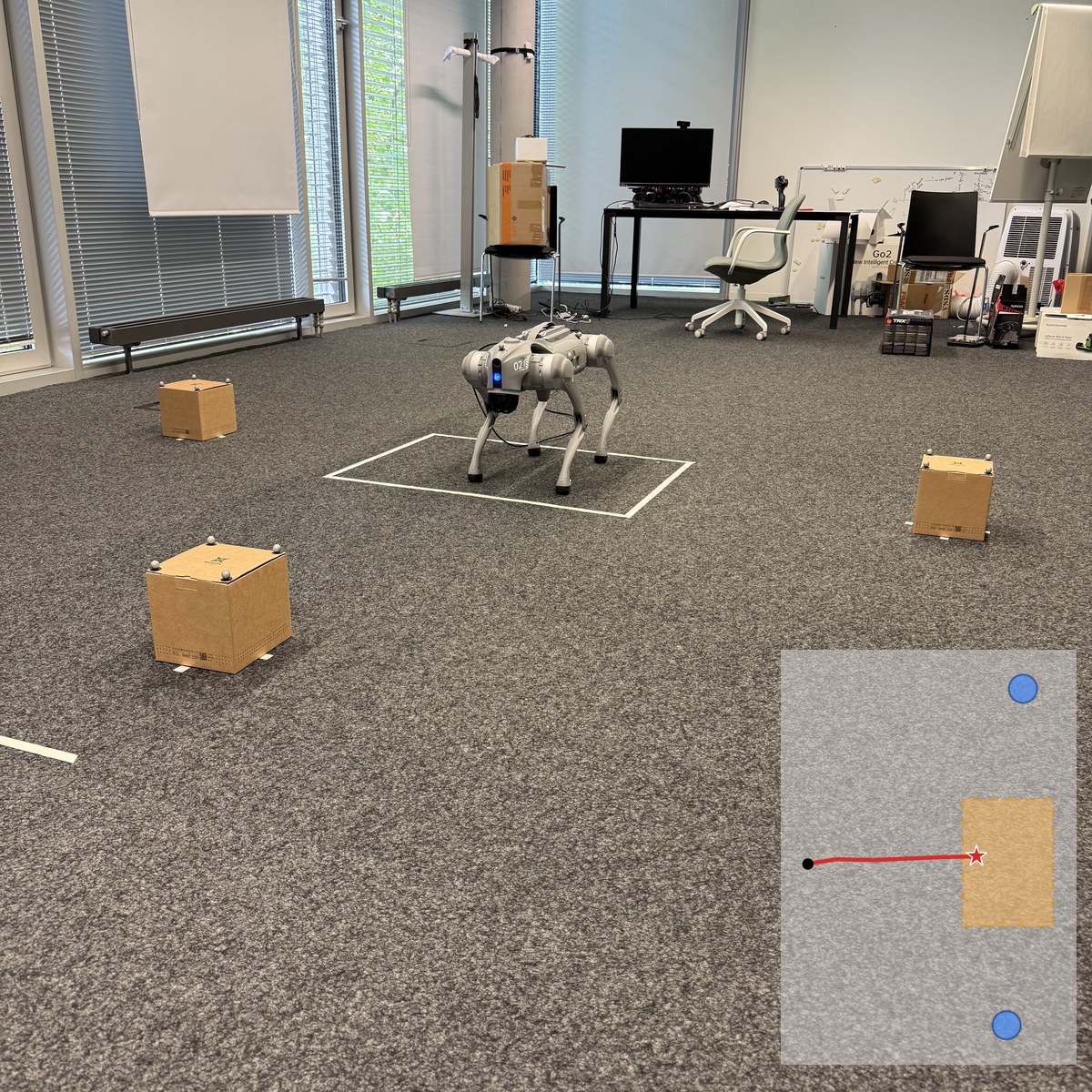}\hfill
\includegraphics[width=0.245\linewidth]{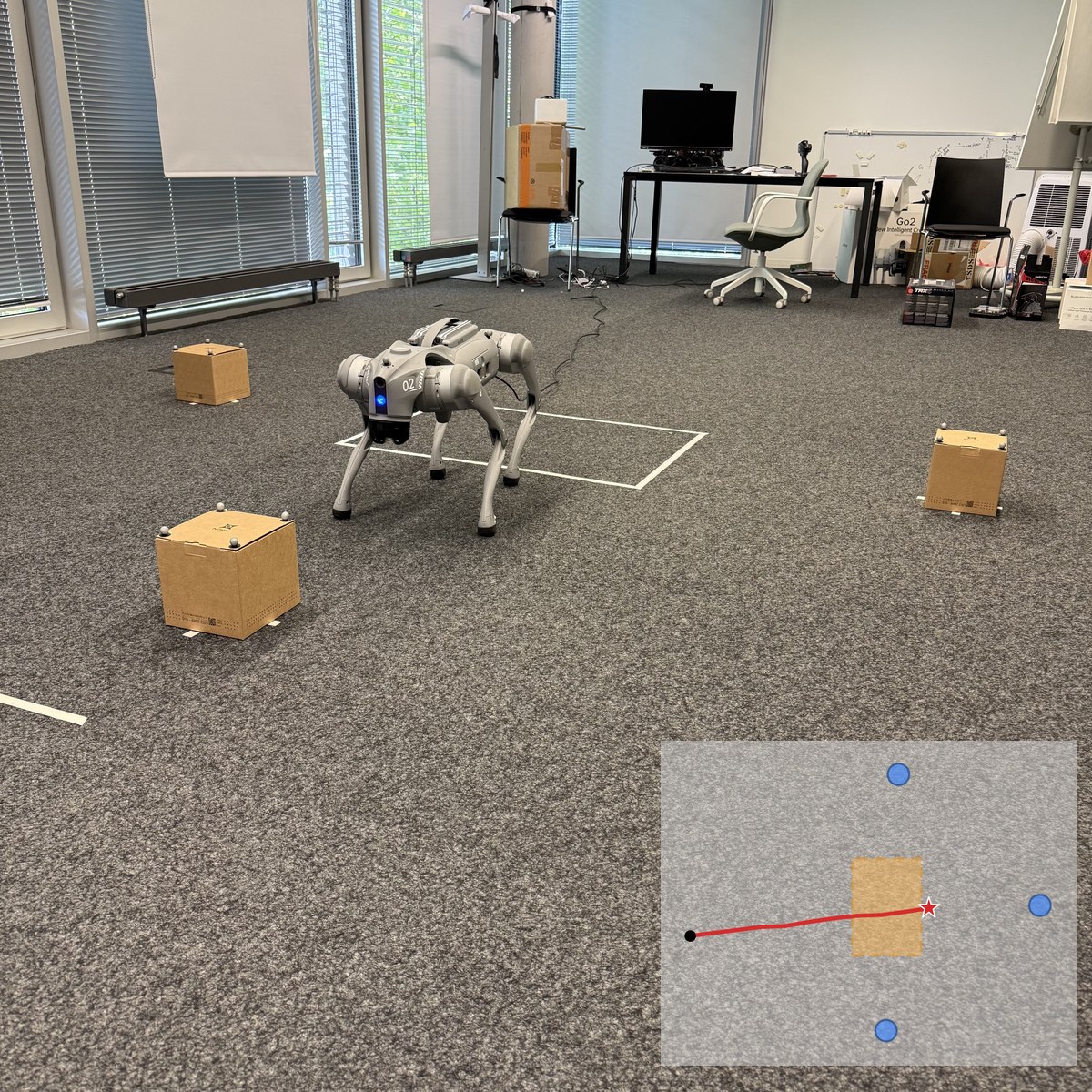}
\caption{Real-robot Go2 trials ($36$ trials total). Each panel:
end-state photo with top-down MoCap trajectory inset (CompCPZ
blue, AABB red). \textbf{(a-b)} CompCPZ reaches a GT mode.
\textbf{(c)} AABB lands the dog on the forbidden rug (negation
violation). \textbf{(d)} AABB stops in the inter-mode gap. Full
per-trial table: \Cref{tab:go2_results}.}
\label{fig:go2_real}
\end{figure}

\paragraph{The structural failure transfers to hardware.}
CompCPZ lands the dog strictly inside an analytic
ground-truth mode $\sC_\phi^{(k)}$ in $9/9$ trials over the
disjunction scenarios (S1, S2) and the negation scenario (S3), and
$3/3$ on the adversarial mode-block scenario (S4)---a pooled
$\mathbf{12/12}$ correct-mode rate on a hardware platform the
compiler has never been tuned for. The convex-hull baseline lands
inside \emph{no} true mode in $\mathbf{0/12}$ trials, and on S3
lands the dog inside the negated rug region in $\mathbf{3/3}$ runs
(a hard semantic violation, visible in the supplementary video).
The single-clause baseline, which commits to the first disjunct
before observing run-time mode availability, lands inside a true
mode on $7/9$ unblocked-scenario trials (S1, S2, S3) but on the
adversarial mode-block scenario S4 walks into the blocked mode in
$\mathbf{0/3}$ trials, the real-hardware analogue of
\Cref{prop:singlePoint}. S3 and S4 are the regimes a hand-rolled
union-of-AABBs would also fail on (no exact complement, no
informed mode switching); the hardware therefore tests the
CompCPZ-specific value-add directly, not just the
single-component-vs-$k$-component gap.

\subsection{Runtime and Ablations}
\label{sec:exp_ablation}

Composition is $<\!0.25$\,ms at depth $4$, $<\!1.1$\,ms at the
worst-case disjunctive $d{=}7$ (\Cref{fig:depth_scaling}, matching the
$O(2^d)$ bound of \Cref{prop:complexity}). Empirical $\hat\kappa = 1$
across $6\,828$ nodes (\Cref{fig:empirical_kappa}). Conformal
coverage tracks $1{-}\alpha$ within
$\pm 1\%$ under Gaussian, heavy-tailed, anisotropic, and bias-mixture
noise (\Cref{fig:conformal_robust}). A $5\times5$ servo
hyperparameter grid yields $1.00$ success on all $75$ runs. A separate
parser-depth stress test on $49$ curated instructions of
increasing parse depth gives mode-count equivalence on all $49$
($49/49 = 100\%$; \Cref{tab:parser_depth}); the $16$ tree-shape
rearrangements GPT-4o introduces are all semantic-equivalent
rewrites (commutativity / associativity / De Morgan) to which the
compile rule \Cref{eq:composition} is invariant.

\paragraph{Perception envelope and data-efficient adaptation.}
A calibrated centroid-noise sweep (\Cref{tab:perception_noise},
\Cref{app:perception_envelope}) shows CompCPZ holds in-mode at
$1.00$ through $\sigma = 10$\,mm ($4\times$ the cube half-extent),
degrades to $0.87$ at $\sigma = 20$\,mm, and collapses by
$\sigma = 50$\,mm; the non-sound multi-peak baselines mirror this
curve, so the noise bottleneck is the geometric overlap of mode
enclosures, not the choice of decoder. To bring perception inside
the working envelope we contribute a data-efficient
deployment-time adaptation step: a $1$-GPU-hour in-domain
fine-tune of YOLOv8n on $800$ synthetic frames reaches median
centroid error $0.73$\,cm at $0\%$ miss, one to two orders of
magnitude below the budgets routinely used by VLA / RL policies
for sub-centimetre precision; a data-efficiency sweep
(\Cref{tab:training_data_ablation}) shows the envelope is already
cleared at $200$ frames and $\sim\!3$\,min of training. Off-the-shelf
open-vocab detectors on ManiSkill3 RGB sit well above this envelope
(OWL-ViTv2-base: $12.2$\,cm; Grounding-DINO+SAM2: $15.8$\,cm;
YOLO-World v2-s: $1.00$ miss), so a learning-based perception layer
is required for sub-centimetre deployment regardless of the
downstream algebra. The CompCPZ algebra itself is unchanged across
these detector swaps---the algebra and the learned perception layer
are orthogonal deployment levers. An oracle GMM with
ground-truth mode count, centroids, and calibrated covariance
attains $90.2\%$ in-mode vs.\ CompCPZ's $100\%$ on the
$31$-instruction multi-modal subset ($6\,200$ trials), reflecting
the commit-then-execute pathology of \Cref{prop:singlePoint}.
Full tables: \Cref{app:gmm_oracle}.

\section{Conclusion}
\label{sec:conclusion}

We recast the silent failure of vision-language pipelines on
disjunctive manipulation instructions as a structural property of
the represented feasibility set: no sound constraint extractor that
returns a single connected region can cover all modes of a
multi-modal instruction, and no single-action planner can avoid
worst-case degradation as the mode count grows. To meet both
bounds we introduce CompCPZ, a compositional compiler from natural
language to set-algebraic constraints with distribution-free
coverage propagation. Across a large simulated manipulation
benchmark and planar real-robot trials, CompCPZ outperforms convex
sound baselines, an oracle Gaussian-mixture baseline, non-sound
multi-peak decoders, and a zero-shot vision-language-action model.
We hope this work shifts how compositional language grounding
is evaluated: from decoded-set volume or single-action success
toward the connected-component structure of the represented
feasibility set, the topological invariant to which silent
multi-modal failures reduce.

\section{Limitations}
\label{sec:limitations}

CompCPZ has three scoping limitations. (i)~Depth-independent
stability is proved for axis-aligned grammars; rotated primitives
inherit a depth-dependent bound. (ii)~It is a single-shot
grounding oracle; long-horizon and temporal-logic extensions are
future work. (iii)~Real-robot validation
uses a planar quadruped; contact-rich manipulation hardware is
complementary future work. The topological lower bound covers
sound encoders; non-sound systems are addressed empirically.

\clearpage
\bibliographystyle{plainnat}
\bibliography{references}

\clearpage
\appendix

\section{Formal CPZ Definition and Operations}
\label{app:cpz_def}

\begin{definition}[Constrained Polynomial Zonotope]
\label{def:cpz}
A constrained polynomial zonotope (CPZ) $\sZ \subseteq \R^n$ is
the image of a polynomial map in factor variables
$\alpha \in [-1, 1]^p$ subject to equality constraints:
\begin{equation}
  \sZ = \Big\{ c + \sum_{i=1}^h g_i \!\!\prod_{j=1}^{p} \alpha_j^{E_{ji}}
              ~\Big|~ \alpha \in [-1,1]^p,~
              \sum_{i=1}^{q} a_i \!\!\prod_{j=1}^{p} \alpha_j^{F_{ji}} = b
        \Big\},
\end{equation}
parametrised by center $c \in \R^n$, generator matrix $G = [g_1,\ldots,g_h]$,
exponent matrix $E \in \N^{p \times h}$, constraint generators
$A = [a_1,\ldots,a_q]$, constraint exponent matrix
$F \in \N^{p \times q}$, and offset $b$.
We write $\sZ = \cpz(c, G, E, A, b, F)$.
CPZs subsume zonotopes (no constraints, exponents $\in\{0,1\}$),
constrained zonotopes (no nonlinear exponents), and polynomial zonotopes
(no constraints) as special cases.
\end{definition}

\begin{proposition}[CPZ operations~\citep{kochdumper2023constrained,zhang2025data}]
\label{prop:cpzOps}
The class of CPZs is closed under
{\rm (i)}~exact intersection $\sZ_1 \cap \sZ_2$ via concatenation of constraint
generators;
{\rm (ii)}~union $\sZ_1 \cup \sZ_2$, representable as a CPZ on
$\sX$ via a binary indicator factor (\Cref{app:union}); and
{\rm (iii)}~complement in a bounded box $\sX \setminus \sZ$, exactly when
$\sZ$ is a polytope (zonotope or constrained zonotope) and approximately
otherwise (\Cref{app:complement}).
The cost of each operation is polynomial in the representation size of the
operands; intersection adds $q_1 + q_2$ constraints, union adds one factor.
\end{proposition}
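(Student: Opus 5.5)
We prove the three closure claims of \Cref{prop:cpzOps} by giving explicit constructions from \Cref{def:cpz} and then counting the size of each result. Throughout, write $\sZ_k = \cpz(c_k, G_k, E_k, A_k, b_k, F_k)$ with $p_k$ factors, $h_k$ generators and $q_k$ constraint terms, for $k = 1, 2$.

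\textbf{Intersection.} We introduce factors $\alpha = (\alpha^{(1)}, \alpha^{(2)}) \in [-1,1]^{p_1+p_2}$. The points are parametrised through $\sZ_1$:
\[
x = c_1 + \textstyle\sum_i g^{(1)}_i \prod_j (\alpha^{(1)}_j)^{E^{(1)}_{ji}}.
\]
We then impose three constraint blocks: the original constraints of $\sZ_1$, the original constraints of $\sZ_2$, and the coupling equality
\[
c_1 + G_1(\alpha^{(1)}) - c_2 - G_2(\alpha^{(2)}) = 0 .
\]
The coupling equality is itself a polynomial equality, with exponents given by padding $E_1$ and $E_2$ with zero rows. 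So the triple is again of the form in \Cref{def:cpz}, with constraint exponent matrix $\mathrm{blkdiag}(F_1, F_2)$ augmented by the padded $[E_1, E_2]$ columns.

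Correctness is a two-line double inclusion. If $x$ lies in both sets, the witnessing factors satisfy all three blocks. Conversely, a feasible $\alpha$ certifies $x \in \sZ_1$ directly, and certifies $x \in \sZ_2$ through the coupling constraint. The constraint count is $q_1 + q_2$ plus the $h_1 + h_2$ coupling terms. The statement's ``adds $q_1+q_2$'' is read as this concatenation.

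\textbf{Union.} This is the step I expect to be the main obstacle. The standard construction (following \citep{kochdumper2023constrained}) proceeds in three steps:
\begin{enumerate}[label=(\roman*)]
\item Add one new factor $\lambda \in [-1,1]$ and constrain it to be binary via the polynomial equality $\lambda^2 = 1$.
\item Write $x = \tfrac{1+\lambda}{2} x_1 + \tfrac{1-\lambda}{2} x_2$, where $x_k$ are the parametrisations of the two sets. Expanding gives monomials in $\lambda$ times the original monomials, so $E$ simply gains one row.
\item Multiply each original constraint of $\sZ_1$ by $\tfrac{1+\lambda}{2}$ and each of $\sZ_2$ by $\tfrac{1-\lambda}{2}$, so that only the active set's constraints bind.
\end{enumerate}

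The subtle point is that the inactive set's factors must remain free in $[-1,1]$. This causes no problem, because they never affect $x$ once their coefficient vanishes. Nonemptiness is not needed, since the inactive branch's constraints are multiplied by zero. With that observed, the double inclusion is immediate by cases $\lambda = \pm 1$. The number of generators and constraints at most doubles, and exactly one factor is added, matching the claim.

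\textbf{Complement.} First take the case where $\sZ$ is a polytope, obtained by converting the (constrained) zonotope to an H-representation $\{x : Hx \le d\}$ of $m$ facets. Then
\[
\sX \setminus \sZ = \bigcup_{r} \bigl(\sX \cap \{H_r x \ge d_r\}\bigr).
\]
Each term is a box intersected with a half-space, hence a constrained zonotope: write the half-space restricted to the box with a slack factor. So the complement is a union of $m$ CPZs, and by the union step it is a CPZ. It is exact up to boundary closure, which we handle by taking closures, as is standard for compact set representations.

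For non-polytopic $\sZ$, we outer-approximate $\sZ$ by a polytope and complement that. This yields an inner approximation of $\sX \setminus \sZ$, which is the ``approximate'' clause.

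\textbf{Cost.} Polynomial cost follows by inspecting each construction. The only caveat is the polytope H-conversion, whose facet count can be exponential in the generator count. We would therefore either restrict the polynomial-time claim to the axis-aligned/fixed-dimension case used in the paper, or note that $m$ counts as part of the representation size.
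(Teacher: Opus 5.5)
Your constructions are correct up to the small repairs below, and in places they are sharper than what the paper writes down. The paper has no proof block for this proposition; it defers to the cited CPZ literature and to its union/complement appendix.

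\emph{Intersection.} The paper only says ``concatenation of constraint generators''. Your coupling equality $c_1+G_1(\alpha^{(1)})=c_2+G_2(\alpha^{(2)})$ is the ingredient that phrase omits, so the honest count is $q_1+q_2$ plus $h_1+h_2$ coupling terms, with $n$ extra rows.

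\emph{Union.} The paper also adds one binary factor via $s(s-1)=0$, but it gates only the generators and keeps both $A_j\alpha_j^{R_j}=b_j$ active. This is cheaper: only $\sZ_2$'s generators are duplicated, and no constraint is. But it equals $\sZ_1\cup\sZ_2$ only when both operands are nonempty; if $\sZ_2=\emptyset$ it returns $\emptyset$. Your gating of the constraints by $\tfrac{1\pm\lambda}{2}$ doubles the size but is exact unconditionally. That matters here, since the paper itself reports a degenerate empty mode inside a CPZUnion.

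\emph{Complement.} The paper proves exactness only for an axis-aligned box ($\le 2n$ boxes at cost $O(nh)$). Otherwise it passes to the interval hull, and even asserts that non-axis-aligned complements are not CPZs. Your facet-wise decomposition establishes the statement's claim for every polytope, and it specialises to the paper's decomposition when $\sZ$ is a box. Your polytopic outer approximation for non-polytopic $\sZ$ is the paper's interval-hull step. The paper's choice of a box there is precisely what keeps its cost at $2n$ pieces.

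Three small repairs are needed.

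\emph{Pruning facet pieces.} Discard every piece with $\max_{x\in\sX}H_r x\le d_r$. Such a hyperplane meets $\sX$ at most in a face, and that face can contain points of $\sZ$ not in $\overline{\sX\setminus\sZ}$, e.g.\ when a facet of $\sZ$ lies on $\partial\sX$. For the remaining pieces, convexity gives $\overline{\sX\cap\{H_r x>d_r\}}=\sX\cap\{H_r x\ge d_r\}$, so the union is then exactly the closure.

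\emph{Combining pieces.} Your union doubles both operands, so chaining it sequentially over $m$ pieces grows like $2^m$, i.e.\ like $4^n$ already for a box. Combine the pieces in a balanced tree, which gives $O(m^2)$ times the piece size, or use a one-hot family of $m$ indicators.

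\emph{The cost restriction.} Fixed dimension does not rescue H-conversion for constrained zonotopes. The planar shadow of a Goldfarb-type deformed $d$-cube is a constrained zonotope of size polynomial in $d$ with $2^d$ edges. The restriction that actually yields polynomial cost is to boxes, as in the paper, or to zonotopes in fixed dimension, which have $O(h^{n-1})$ facets.
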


\section{CPZ Union and Complement Constructions}
\label{app:union}
\label{app:complement}

\paragraph{Compositional rule.}
\begin{equation}
\label{eq:composition}
(\hat \sC_L^-, \hat \sC_L^+) =
\begin{cases}
(\hat \sC_\phi^-, \hat \sC_\phi^+) & L = \phi, \\
(\hat \sC_{L_1}^- \cap \hat \sC_{L_2}^-,\, \hat \sC_{L_1}^+ \cap \hat \sC_{L_2}^+) & L = L_1 \opand L_2, \\
(\hat \sC_{L_1}^- \cup \hat \sC_{L_2}^-,\, \hat \sC_{L_1}^+ \cup \hat \sC_{L_2}^+) & L = L_1 \opor L_2, \\
((\hat \sC_{L_1}^+)^c,\, (\hat \sC_{L_1}^-)^c) & L = \opnot L_1.
\end{cases}
\end{equation}
The negation role swap preserves containment. Operations are
realised by \Cref{prop:cpzOps}.

\paragraph{Set-theoretic union via a binary switching factor.}
The class of CPZs is closed under union in the following constructive
sense. Given $\sZ_1 = \cpz(c_1, G_1, E_1, A_1, b_1, R_1, \mathrm{id}_1)$ and
$\sZ_2 = \cpz(c_2, G_2, E_2, A_2, b_2, R_2, \mathrm{id}_2)$, introduce a
fresh binary factor $s$ with id $i^\star := 1 + \max(\mathrm{id}_1 \cup \mathrm{id}_2)$
and append the polynomial equality constraint $s(s - 1) = 0$, equivalent to
$s^2 - s = 0$. Any feasible $s$ then lies in $\{0, 1\}$, and the resulting
CPZ
\begin{equation}
   \sZ_\cup =
   \big\{c_2 + s\,(c_1 - c_2) + s\,G_1 \alpha_1^{E_1}
     + (1 - s)\,G_2 \alpha_2^{E_2} ~\big|~
     A_j \alpha_j^{R_j} = b_j,~ s(s-1) = 0,~ \alpha_j \in [-1,1]^{p_j}\big\}
\end{equation}
satisfies $\sZ_\cup = \sZ_1 \cup \sZ_2$ exactly. Expanding $(1-s) G_2 \alpha_2^{E_2} = G_2 \alpha_2^{E_2} - s G_2 \alpha_2^{E_2}$ recovers the standard CPZ form
$(c, G, E, A, b, R, \mathrm{id})$ in the augmented factor space
$\mathrm{id}_\cup := \mathrm{id}_1 \cup \mathrm{id}_2 \cup \{i^\star\}$.

\paragraph{Relation to hybrid zonotopes.}
On the axis-aligned union sub-grammar where each operand is itself
a zonotope or a constrained zonotope, this construction is
equivalent to a hybrid zonotope~\citep{bird2023hybrid} with one
binary factor per disjunct: the equality constraint
$s(s - 1) = 0$ plays the role of the binary indicator $\xi^b$. We
adopt the CPZ form because we additionally need the polynomial
constraint structure (for ellipsoid avoidance and between-region
quadratic constraints), which hybrid zonotopes do not encode
directly.

\paragraph{Practical representation: lazy list of CPZs.}
For containment, mode enumeration, projection, and intersection,
the lazy list $\sZ_\cup = [\sZ_1, \sZ_2]$ is strictly cheaper and
preserves information: containment is the disjunction of per-term
tests; mode count is the list length; intersection distributes,
$(\sZ_1 \cup \sZ_2) \cap \sZ_3 = (\sZ_1 \cap \sZ_3) \cup (\sZ_2 \cap \sZ_3)$.
The implementation uses lazy lists by default and invokes the
binary-indicator merge on demand.

\paragraph{Complement in a bounded workspace.}
For an axis-aligned box $B \subset \sX$ (also axis-aligned), the
complement $\sX \setminus B$ is a finite union of $\leq 2n$
axis-aligned boxes (each a CPZ). Soundness of the two-sided
enclosure under $L = \opnot L_1$ then follows directly from
$\hat \sC_L^- := \sX \setminus \hat \sC_{L_1}^+$ and
$\hat \sC_L^+ := \sX \setminus \hat \sC_{L_1}^-$, both exact on
the axis-aligned grammar used throughout our benchmark.

For non-axis-aligned CPZ inputs the complement is not a CPZ. We
relax with opposite conservativeness: (i) replace $\hat \sC_{L_1}^+$
by its interval hull (superset) in $\hat \sC_L^-$, still a subset
of $\sF(L)$; (ii) the upper enclosure $\hat \sC_L^+ = \sX
\setminus \hat \sC_{L_1}^-$ would need an inner-fit box of
$\hat \sC_{L_1}^-$ which is non-trivial; this branch is unused in
our benchmark and contributes the $\epsilon_{\mathrm{comp}}(L)$
term of \Cref{thm:tightnessGap}\,(ii), which vanishes on
axis-aligned primitives.

\section{Full Proofs}
\label{app:proofs}

\paragraph{A note on theorem statements.}
For space, the main body summarises all formal results as prose;
their formal statements appear in this appendix, before their
proofs. We include here the two assumptions used throughout,
followed by all theorems and propositions.

\begin{assumption}[Per-primitive coverage]
\label{ass:perPrimitive}
For every primitive clause $\phi$ in an instruction $L$, the
extracted two-sided CPZ pair satisfies
$\Prob\big[\hat \sC_\phi^- \subseteq \sC_\phi \subseteq \hat \sC_\phi^+\big]
 \geq 1 - \delta_\phi$.
\end{assumption}

\begin{assumption}[Operator regularity]
\label{ass:regular}
There exists a constant $\kappa \geq 1$ (the per-node Hausdorff
Lipschitz constant) such that for every binary operator
$\mathrm{op} \in \{\opand, \opor\}$ and every closed subsets
$A_1, A_2, B_1, B_2 \subseteq \sX$,
\begin{equation}
\Hdist{A_1 \,\mathrm{op}\, A_2}{B_1 \,\mathrm{op}\, B_2}
\;\leq\; \kappa \cdot \max(\Hdist{A_1}{B_1}, \Hdist{A_2}{B_2}).
\label{eq:regular}
\end{equation}
For axis-aligned box grammars the implemented complement operator
is non-expansive under endpoint perturbations
($\kappa = 1$, \Cref{prop:kappa_one}). For general CPZ primitives,
complement is realised by an interval-hull over-approximation and
contributes an additional approximation term
$\epsilon_{\mathrm{comp}}(L)$ in the upper bound of
\Cref{thm:tightnessGap}\,(ii); this term vanishes on axis-aligned
grammars.
\end{assumption}

\begin{lemma}[Thickness propagation]
\label{lem:thicknessProp}
For any compositional instruction $L$ over a primitive grammar with
per-node Hausdorff Lipschitz constant $\kappa$ (\Cref{ass:regular}),
the two-sided thickness $\tau(L) := \Hdist{\hat \sC_L^-}{\hat \sC_L^+}$
satisfies
$\tau(L) \;\leq\; \kappa^{\mathrm{depth}(L)} \, \max_{i} \tau(\phi_i)$,
where the maximum is over the per-clause thicknesses
$\tau(\phi_i) := \Hdist{\hat \sC_{\phi_i}^-}{\hat \sC_{\phi_i}^+}$
of the parse-tree leaves of $L$.
\end{lemma}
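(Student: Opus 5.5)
We argue by structural induction on the parse tree of $L$, following the compositional rule \Cref{eq:composition}. The induction hypothesis is $\tau(L') \leq \kappa^{\mathrm{depth}(L')} \max_{i} \tau(\phi_i)$ for every proper subformula $L'$ of $L$, where the maximum ranges over the leaves of $L'$. Since $\kappa \geq 1$ and each subtree's leaves are a subset of the leaves of $L$, both the power of $\kappa$ and the leaf maximum are monotone. It therefore suffices to show that each node multiplies the children's thicknesses by at most $\kappa$.

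\emph{Base case.} For $L = \phi$ we have $\mathrm{depth}(L) = 0$, and the claim $\tau(\phi) \leq \tau(\phi)$ is trivial.

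\emph{Binary nodes.} Let $L = L_1 \,\mathrm{op}\, L_2$ with $\mathrm{op} \in \{\opand, \opor\}$. By \Cref{eq:composition}, the lower and upper enclosures of $L$ are $\hat \sC_{L_1}^- \,\mathrm{op}\, \hat \sC_{L_2}^-$ and $\hat \sC_{L_1}^+ \,\mathrm{op}\, \hat \sC_{L_2}^+$, where $\mathrm{op}$ acts as $\cap$ or $\cup$. We apply \Cref{ass:regular} with $A_j = \hat \sC_{L_j}^-$ and $B_j = \hat \sC_{L_j}^+$. This requires these sets to be closed, which holds because CPZs are compact images of compact boxes under continuous maps intersected with closed constraint sets. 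The assumption gives $\tau(L) \leq \kappa \max(\tau(L_1), \tau(L_2))$. Since $\mathrm{depth}(L_j) \leq \mathrm{depth}(L) - 1$ and $\kappa \geq 1$, the induction hypothesis then yields the bound for $L$.

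\emph{Negation nodes (the main obstacle).} Let $L = \opnot L_1$, so the enclosure pair is $((\hat\sC_{L_1}^+)^c, (\hat\sC_{L_1}^-)^c)$. Two issues arise.

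First, complements are open in $\sX$. We therefore work with their closures, which leaves Hausdorff distances unchanged.

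Second, and more seriously, set complement is not Hausdorff-Lipschitz in general. A thin lower set inside a fat upper set can yield complements at large distance, and empty sets make $\dH$ infinite. \Cref{ass:regular} covers only $\opand$ and $\opor$, but its text states that the implemented complement is non-expansive with $\kappa = 1$ on axis-aligned grammars (\Cref{prop:kappa_one}).

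The plan is to invoke this for negation: on the axis-aligned box grammar, $\sX \setminus B$ is a union of at most $2n$ slabs whose endpoints move by at most the endpoint perturbation of $B$. Hence $\tau(\opnot L_1) \leq \tau(L_1) \leq \kappa\, \tau(L_1)$. For general CPZ primitives, the complement uses the interval-hull relaxation. There we read the lemma as counting negation under the same per-node constant $\kappa$ and defer the extra approximation error to the $\epsilon_{\mathrm{comp}}(L)$ term of \Cref{thm:tightnessGap}. I would state this scoping explicitly rather than claim a Lipschitz bound for arbitrary complements.

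Combining the three cases completes the induction. Each node adds at most one factor of $\kappa$ along any root-to-leaf path, which gives the exponent $\mathrm{depth}(L)$.
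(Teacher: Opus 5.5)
Your induction is the same as the paper's. The paper also recurses on the parse tree, applies \Cref{ass:regular} at $\opand/\opor$ nodes, and charges one factor of $\kappa$ per node. At negation nodes it simply asserts that complement obeys the same per-node bound, so it treats negation as covered by the hypothesis. Your binary step and your handling of general-CPZ negation coincide with this.

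The gap is the one place you go further: deriving $\tau(\opnot L_1)\le\tau(L_1)$ on axis-aligned grammars from the slab picture. That step fails. The slab argument only controls the complement of a \emph{single} box away from $\partial\sX$. It breaks in two ways: (a) a slab can degenerate at the workspace boundary, and (b) at a negation node $\hat\sC_{L_1}^\pm$ are generally unions of boxes.
\begin{itemize}
\item For (a), take $\sX=[0,10]$ and $\hat\sC_\phi^-=[a,9]\subseteq\hat\sC_\phi^+=[0,9]$. Then $\tau(\phi)=a$, but $\tau(\opnot\phi)=\Hdist{(9,10]}{[0,a)\cup(9,10]}=9$, which exceeds $\kappa\,\tau(\phi)$ once $a<9/\kappa$.
\item For (b), with no boundary contact, take $\sX=[-10,20]$ and leaves $[0,5-\delta]\subseteq[0,5]$ and $[5+\delta,10]\subseteq[5,10]$. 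Both leaves and $\phi_1\opor\phi_2$ have thickness $\delta$, and union really is non-expansive. Yet the upper enclosure of $\opnot(\phi_1\opor\phi_2)$ contains the point $5$, at distance $5$ from the lower enclosure $[-10,0)\cup(10,20]$.
\end{itemize}
Example (b) is exactly the ``thin lower set inside a fat upper set'' mechanism you flagged yourself. It arises on box grammars as soon as a negation sits above a disjunction.

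So complement is not Hausdorff non-expansive even for boxes, and box geometry alone cannot supply a fixed $\kappa$ for negation. Citing \Cref{prop:kappa_one} does not help, because its complement claim is exactly what these examples refute.

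The fix is to do uniformly what you already do for general CPZs, and what the paper's proof does implicitly. Make the negation bound an explicit hypothesis at every negation node rather than deriving it:
\[
\Hdist{(\hat\sC_{L_1}^+)^c}{(\hat\sC_{L_1}^-)^c}\le\kappa\,\Hdist{\hat\sC_{L_1}^-}{\hat\sC_{L_1}^+}.
\]
With that hypothesis, your three cases close the induction as written.
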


\begin{proof}[Proof of \Cref{lem:thicknessProp}]
By structural induction on the parse tree.
Base case: a leaf has depth $0$ and thickness equal to its own.
Inductive step: any binary operator $\mathrm{op} \in \{\cap, \cup\}$
or unary $\mathrm{op} = \cdot^c$ has the Hausdorff Lipschitz bound
$\Hdist{A_1 \mathrm{op} A_2}{B_1 \mathrm{op} B_2}
 \leq \kappa \cdot \max(\Hdist{A_1}{B_1}, \Hdist{A_2}{B_2})$
($\kappa$ from \Cref{ass:regular}). Applying inductively gives
$\tau(L) \leq \kappa^{\mathrm{depth}(L)} \max_i \tau(\phi_i)$.
\end{proof}

\subsection{Proof of \Cref{thm:compositionality}}

\begin{theorem}[Compositionality and two-sided containment]
\label{thm:compositionality}
Let $L \in \sL$ be a compositional instruction with primitive clauses
$\phi_1, \ldots, \phi_{n_L}$ (counted with multiplicity in the parse tree of
$L$). Let $(\hat \sC_{\phi_i}^-, \hat \sC_{\phi_i}^+)$ satisfy
\Cref{ass:perPrimitive} with coverage probabilities $1 - \delta_i$.
Let $(\hat \sC_L^-, \hat \sC_L^+)$ be obtained by recursive application of
\Cref{eq:composition}. Then
\begin{equation}
   \Prob\!\big[\hat \sC_L^- \subseteq \sF(L) \subseteq \hat \sC_L^+ \big]
      \;\geq\; 1 - \sum_{i=1}^{n_L} \delta_i,
   \label{eq:thm1}
\end{equation}
where $\sF(L)$ is the true semantic feasible set of instruction $L$.
\end{theorem}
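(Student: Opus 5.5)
The plan is to separate the probabilistic part from the deterministic part. First I define the good event $\mathcal{E} := \bigcap_{i=1}^{n_L} \mathcal{E}_i$, where $\mathcal{E}_i := \{\hat \sC_{\phi_i}^- \subseteq \sC_{\phi_i} \subseteq \hat \sC_{\phi_i}^+\}$ is the containment event for the $i$-th leaf occurrence. By \Cref{ass:perPrimitive}, $\Prob[\mathcal{E}_i^c] \leq \delta_i$. A union bound then gives $\Prob[\mathcal{E}] \geq 1 - \sum_i \delta_i$, with no independence needed between leaves. Counting leaves with multiplicity makes this valid even when the same primitive occurs twice and its two extractions share randomness, since the union bound only over-counts.

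It then suffices to prove the deterministic claim that on $\mathcal{E}$, $\hat \sC_{L'}^- \subseteq \sF(L') \subseteq \hat \sC_{L'}^+$ for every subformula $L'$ of $L$. This uses $\sF(\phi) = \sC_\phi$ and the compositional semantics $\sF(L_1 \opand L_2) = \sF(L_1) \cap \sF(L_2)$, $\sF(L_1 \opor L_2) = \sF(L_1) \cup \sF(L_2)$, and $\sF(\opnot L_1) = \sX \setminus \sF(L_1)$. The paper does not state this semantics explicitly, so I will state it at the start of the proof as the definition of the true feasible set.

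The claim follows by structural induction on the parse tree, following \Cref{eq:composition}.
\begin{itemize}[leftmargin=*]
\item \emph{Leaves.} Containment holds directly by $\mathcal{E}_i$.
\item \emph{Conjunction.} If $A^- \subseteq A \subseteq A^+$ and $B^- \subseteq B \subseteq B^+$, then $A^- \cap B^- \subseteq A \cap B \subseteq A^+ \cap B^+$, by monotonicity of intersection.
\item \emph{Disjunction.} The same sandwich holds with $\cup$ in place of $\cap$, by monotonicity of union.
\item \emph{Negation.} Complement is antitone: from $A^- \subseteq A \subseteq A^+$ we get $(A^+)^c \subseteq A^c \subseteq (A^-)^c$. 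This is exactly the role swap in \Cref{eq:composition}.
\end{itemize}

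The step I expect to need care is that the sets actually computed must equal, or at least respect the sandwich order of, the ideal set operations. Intersection and union are exact CPZ operations by \Cref{prop:cpzOps}. For complement on axis-aligned inputs, the construction in \Cref{app:complement} is exact. On non-axis-aligned inputs, the lower enclosure uses the interval hull $H \supseteq \hat\sC_{L_1}^+$, so $\sX \setminus H \subseteq \sX \setminus \hat \sC_{L_1}^+$, which only shrinks the lower set and keeps it sound. For the upper branch, any realisation must be a superset of $\sX\setminus \hat\sC_{L_1}^-$, for instance by complementing an inner subset of $\hat\sC_{L_1}^-$. I will add an explicit remark that the induction only needs each realised operation to be outward-rounded on the $+$ side and inward-rounded on the $-$ side; with that, the approximations preserve containment.

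Combining the two parts, $\mathcal{E}$ implies the root-level sandwich $\hat \sC_L^- \subseteq \sF(L) \subseteq \hat \sC_L^+$. Hence the probability of the sandwich is at least $\Prob[\mathcal{E}] \geq 1 - \sum_{i=1}^{n_L} \delta_i$, which is \eqref{eq:thm1}.
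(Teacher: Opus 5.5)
Your proof is correct and is essentially the paper's argument: structural induction on the parse tree, using monotonicity of $\cap$ and $\cup$, antitonicity of complement for the negation role swap, and a union bound over the leaf coverage events. The differences are only organisational. You apply one union bound over all leaves up front and then run a purely deterministic induction, whereas the paper union-bounds $E_1 \cap E_2$ at each binary node. Your outward/inward-rounding remark also covers approximate complement realisations, which the paper's proof (stated for the exact operations of \Cref{eq:composition}) does not address.
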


Induction on the parse tree.
\emph{Base}: $L = \phi$ is \Cref{ass:perPrimitive}.
\emph{Conjunction}: let $E_j := \{\hat \sC_{L_j}^- \subseteq \sF(L_j) \subseteq \hat \sC_{L_j}^+\}$;
the union bound gives $\Prob[E_1 \cap E_2] \geq 1 - \sum_{\phi \in
\Phi_1 \cup \Phi_2} \delta_\phi$, and on $E_1 \cap E_2$ monotonicity
of $\cap$ propagates the inclusion to $L = L_1 \opand L_2$.
\emph{Disjunction}: identical with $\cap \to \cup$.
\emph{Negation}: on $E_1$, taking complements reverses the
chain, $(\hat \sC_{L_1}^+)^c \subseteq \sF(L_1)^c =
\sF(\opnot L_1) \subseteq (\hat \sC_{L_1}^-)^c$.
\qed

\subsection{Proof of \Cref{thm:modeRes}}

\begin{theorem}[Mode resolution under bounded extraction error]
\label{thm:modeRes}
Under the hypotheses of \Cref{thm:compositionality} and
\Cref{ass:regular}, suppose the true feasibility set $\sF(L)$ has
$k \geq 1$ connected components $M_1, \ldots, M_k$ with pairwise
separation $\eta > 0$, and that the lower-enclosure
$\hat \sC_L^-$ is non-empty inside every true mode (the
no-spurious-island regularity condition: each connected
component of $\hat \sC_L^+$ intersects $\hat \sC_L^- \subseteq
\sF(L)$). If the worst-leaf thickness satisfies
$\max_i \tau(\phi_i) < \eta / (2 \kappa^{\mathrm{depth}(L)})$, then
the predicted upper enclosure $\hat \sC_L^+$ has exactly $k$
connected components, each in bijection with a true mode $M_j$.
\end{theorem}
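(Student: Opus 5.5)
We work on the coverage event of \Cref{thm:compositionality}, on which $\hat \sC_L^- \subseteq \sF(L) \subseteq \hat \sC_L^+$. The argument then combines this sandwich with the thickness bound of \Cref{lem:thicknessProp}. Set $\tau := \tau(L) = \Hdist{\hat \sC_L^-}{\hat \sC_L^+}$. By \Cref{lem:thicknessProp} and the hypothesis on the worst leaf,
\[
\tau \le \kappa^{\mathrm{depth}(L)} \max_i \tau(\phi_i) < \eta/2 .
\]

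\emph{Localisation.} Since $\hat \sC_L^- \subseteq \sF(L)$, every $y \in \hat \sC_L^+$ has some $x \in \hat \sC_L^- \subseteq \bigcup_j M_j$ with $\|x-y\| \le \tau$. (Use the sup/inf definition of $\dH$ and compactness; otherwise pick $x$ within $\tau + \varepsilon$ with $\varepsilon$ small enough to keep the strict margin below $\eta/2$.) Hence
\[
\hat \sC_L^+ \subseteq \bigcup_{j} N_\tau(M_j),
\]
where $N_\tau$ denotes the closed $\tau$-neighbourhood. Pairwise separation $\eta$ together with $\tau < \eta/2$ makes these neighbourhoods pairwise disjoint. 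They are moreover at positive distance from each other, because $\eta - 2\tau > 0$. Each set $U_j := \hat \sC_L^+ \cap N_\tau(M_j)$ is therefore relatively clopen in $\hat \sC_L^+$. Consequently every connected component of $\hat \sC_L^+$ lies inside a single $U_j$, which defines a map from components to modes.

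\emph{Surjectivity and injectivity.} Surjectivity: $M_j \subseteq \sF(L) \subseteq \hat \sC_L^+$ and $M_j$ is connected, so $M_j$ lies in a single component $K_j$ of $\hat \sC_L^+$, and that component maps to $j$. For injectivity I must show $U_j = K_j$, so that no second component sits in the same neighbourhood. Here I invoke the no-spurious-island condition: every component $K$ of $\hat \sC_L^+$ meets $\hat \sC_L^- \subseteq \sF(L)$. If $K \subseteq U_j$, then $K$ meets $\sF(L) \cap N_\tau(M_j) = M_j$; the equality holds because the other modes are at distance $\ge \eta > \tau$. Since $M_j \subseteq K_j$, we get $K \cap K_j \neq \emptyset$, hence $K = K_j$. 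So the map $K_j \leftrightarrow M_j$ is a bijection and $\hat \sC_L^+$ has exactly $k$ components.

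\emph{Main obstacle.} The delicate step is the localisation. It needs the one-sided inclusion $\hat \sC_L^+ \subseteq N_\tau(\hat \sC_L^-)$, not merely a Hausdorff bound against $\sF(L)$, and that is exactly why the thickness $\tau(L)$, rather than the error to $\sF(L)$, is the right quantity. It also needs $\sup$/$\inf$ attainment, which I would obtain from compactness of $\sX$ and closedness of CPZs. I would also state explicitly that "pairwise separation $\eta$" means $\inf_{x\in M_i, y \in M_j}\|x-y\| \ge \eta$ for $i \neq j$. Finally, the event qualifier should be stated: the conclusion holds with probability at least $1-\sum_i \delta_i$, inherited from \Cref{thm:compositionality}. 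The thickness bound itself is deterministic.
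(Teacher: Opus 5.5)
Your proof is correct and follows the same route as the paper. On the coverage event, \Cref{lem:thicknessProp} gives $\tau(L) < \eta/2$, so every point of $\hat \sC_L^+$ lies within $\tau(L)$ of $\sF(L)$; this forces distinct modes into distinct components (at least $k$), and the no-spurious-island condition forces every component to meet, and hence contain, some mode (at most $k$). Your partition of $\hat \sC_L^+$ into the relatively clopen pieces $U_j$ is a tidier version of the paper's ``path through the $\eta/2$-gap'' step: it also covers connected but not path-connected components, mirroring the tube argument in the paper's proof of \Cref{thm:topologicalLowerBound}. You also spell out why two components cannot meet the same mode, which the paper leaves implicit.
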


Condition on the event of \Cref{thm:compositionality}. By
\Cref{lem:thicknessProp}, $\tau(L) \leq \kappa^{\mathrm{depth}(L)}
\epsilon < \eta/2$.

\emph{At least $k$ predicted modes}: each $M_i \subseteq \sF(L)
\subseteq \hat \sC_L^+$ lies in some component $\hat M_i$;
distinct $M_i$ give distinct $\hat M_i$ because any path in
$\hat \sC_L^+$ between $M_i$ and $M_j$ would traverse a positive-length
sub-arc in the $\eta/2$-gap, contradicting $\tau(L) < \eta/2$
(every point of $\hat \sC_L^+$ is within $\tau(L)$ of $\sF(L)$).

\emph{At most $k$}: every component of $\hat \sC_L^+$ contains a
point of $\hat \sC_L^- \subseteq \sF(L)$ (else the inclusion
fails), so components inject into the $k$ modes of $\sF(L)$.
\qed

\subsection{Proof of \Cref{thm:tightnessGap}}
\label{app:proofTightness}

\begin{theorem}[Multi-modal tightness gap]
\label{thm:tightnessGap}
Let $L$ be a compositional instruction whose true feasibility set
$\sF(L)$ has $k \geq 2$ connected components $M_1, \ldots, M_k$ with
pairwise separation $\eta > 0$. Let
$\epsilon := \max_i \tau(\phi_i)$ be the worst per-primitive thickness.
Then
\begin{align}
\text{(i)}&\quad
   \Hdist{\sC^{\mathrm{conv}}}{\sF(L)} \;\geq\; \eta/2
   \quad\text{for every convex } \sC^{\mathrm{conv}} \supseteq \sF(L),
   \label{eq:tightConv} \\
\text{(ii)}&\quad
   \Hdist{\hat \sC_L^+}{\sF(L)} \;\leq\; \kappa^{\mathrm{depth}(L)} \, \epsilon
   \;+\; \epsilon_{\mathrm{comp}}(L)
   \quad\text{for the CompCPZ enclosure}, \label{eq:tightOurs} \\
\text{(iii)}&\quad
   \frac{\Hdist{\sC^{\mathrm{conv}}}{\sF(L)}}{\Hdist{\hat \sC_L^+}{\sF(L)}}
   \;\xrightarrow{\epsilon \to 0}\; \infty.
\end{align}
Part (i) is the AABB/MVEE special case of the universal lower bound
\Cref{thm:convexLowerBound}. The term $\epsilon_{\mathrm{comp}}(L)$
in part (ii) captures the controlled complement
over-approximation from \Cref{ass:regular}; it vanishes
($\epsilon_{\mathrm{comp}} \equiv 0$) on the axis-aligned grammars
used throughout our benchmark, so the headline rate reduces to
$\kappa^{\mathrm{depth}(L)} \epsilon$.
\end{theorem}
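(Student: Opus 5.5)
Part~(i) is a purely geometric statement about convex supersets of a disconnected set, and it does not depend on the extraction pipeline. I would prove it directly rather than wait for \Cref{thm:convexLowerBound}. Pick two distinct modes $M_1, M_2$, a point $a \in M_1$ and a point $b \in M_2$. Since $\sC^{\mathrm{conv}} \supseteq \sF(L)$ is convex, it contains the segment $[a,b]$. Along this segment the function $t \mapsto \mathrm{dist}(x_t, M_1) - \mathrm{dist}(x_t, \sF(L)\setminus M_1)$ changes sign. By continuity there is a point $x^\star$ that is equidistant from $M_1$ and from the remaining modes. The separation hypothesis, $\mathrm{dist}(M_i, M_j) \geq \eta$ for $i \neq j$, together with the triangle inequality then forces $\mathrm{dist}(x^\star, \sF(L)) \geq \eta/2$. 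Hence $\sup_{x \in \sC^{\mathrm{conv}}} \mathrm{dist}(x, \sF(L)) \geq \eta/2$, and this one-sided term already lower-bounds $\dH$. The one point needing care is the compactness of the modes, which makes the distances attained and the intermediate-value argument valid. They are closed subsets of the compact workspace $\sX$.

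For part~(ii) I would work on the coverage event of \Cref{thm:compositionality}, so that $\hat \sC_L^- \subseteq \sF(L) \subseteq \hat \sC_L^+$. With this sandwich, $\sF(L) \subseteq \hat \sC_L^+$ makes the term $\sup_{f \in \sF(L)} \mathrm{dist}(f, \hat \sC_L^+)$ vanish. The remaining one-sided term is bounded as follows. Every point of $\hat \sC_L^+$ lies within $\dH(\hat \sC_L^-, \hat \sC_L^+)$ of $\hat \sC_L^- \subseteq \sF(L)$, so $\dH(\hat \sC_L^+, \sF(L)) \leq \tau(L)$. \Cref{lem:thicknessProp} then gives $\tau(L) \leq \kappa^{\mathrm{depth}(L)} \epsilon$ on the exact grammar.

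The main obstacle is the non-axis-aligned complement, where \Cref{eq:composition} is realised only approximately and \Cref{lem:thicknessProp} does not apply verbatim. My first attempt would be to define $\epsilon_{\mathrm{comp}}(L)$ as the accumulated Hausdorff discrepancy between the implemented enclosure and the exact-rule enclosure. I would propagate this discrepancy through the parse tree using \Cref{ass:regular} and fold it into $\epsilon_{\mathrm{comp}}(L)$. The triangle inequality then yields the stated bound, with $\epsilon_{\mathrm{comp}} \equiv 0$ on axis-aligned grammars by \Cref{app:complement}.

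Part~(iii) follows by combining the two parts. The numerator is at least $\eta/2$, which does not depend on $\epsilon$. With $\epsilon_{\mathrm{comp}} = 0$ on the axis-aligned grammar, the denominator is at most $\kappa^{\mathrm{depth}(L)} \epsilon \to 0$ for fixed $L$, so the ratio diverges. In the degenerate case where the denominator is exactly $0$, the ratio is interpreted as $+\infty$.
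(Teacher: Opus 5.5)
Your proof is correct. Parts (ii) and (iii) follow the paper: condition on the coverage event of \Cref{thm:compositionality}, note that only the $\sup_{x\in\hat\sC_L^+}$ term survives, bound it by $\tau(L)$ via \Cref{lem:thicknessProp}, then divide. Part (i), however, takes a genuinely different route.

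The paper uses convexity only through $\beta_0(\sC^{\mathrm{conv}})=1<k$ and then invokes \Cref{thm:topologicalLowerBound}. That is a contradiction argument: for $r<\eta/2$ the closed tubes $N_r(M_i)$ are disjoint and cover $S$, so by pigeonhole some connected component of $S$ would meet two of them. You instead place the segment $[a,b]$ inside $\sC^{\mathrm{conv}}$ and exhibit an explicit witness $x^\star$ with $\mathrm{dist}(x^\star,\sF(L))\ge\eta/2$. Balancing $M_1$ against the whole remainder $\sF(L)\setminus M_1$, rather than taking the literal midpoint, is exactly what keeps this valid when a third mode lies near the segment.

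Your argument is more elementary and constructive, and it needs less than you claim. Distance functions to nonempty sets are $1$-Lipschitz, so the intermediate-value step and the infimum form of the triangle inequality need no compactness or closedness. The paper's route buys generality instead. It covers every sound encoding with $m<k$ components, including connected but non-path-connected sets, which is the form it needs for its claims about arbitrary sound encoders. Yours covers only supersets in which two modes are joined by a path.

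On $\epsilon_{\mathrm{comp}}$, the paper names a concrete sum of interval-hull gaps at negation nodes but never shows how such a local gap reaches the root. Your definition (implemented versus exact-rule upper enclosure, closed by the triangle inequality) makes the bound hold by construction. Define it directly as that root-level Hausdorff discrepancy, though: \Cref{ass:regular} covers only $\opand,\opor$, so it does not license propagating the discrepancy through an enclosing negation.

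Finally, your restriction of (iii) to $\epsilon_{\mathrm{comp}}=0$ is necessary, not a weakness. Neither version of $\epsilon_{\mathrm{comp}}$ need vanish as $\epsilon\to 0$, so the paper's ``direct division'' silently requires the same restriction.
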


Part (i) -- Convex Hausdorff lower bound.
A convex set in $\R^n$ has a single connected component, so
$\beta_0(\sC^{\mathrm{conv}}) = 1 < k$. Together with
$\sF(L) \subseteq \sC^{\mathrm{conv}}$ (soundness) and the
separation hypothesis, \Cref{thm:topologicalLowerBound} (proof in
\Cref{app:proofTopological}, which uses a closed-tube/pigeonhole
argument and requires no choice of midpoint) yields
$\Hdist{\sC^{\mathrm{conv}}}{\sF(L)} \geq \eta/2$.

Part (ii) -- CompCPZ Hausdorff upper bound.
On the event of \Cref{thm:compositionality},
every $x \in \hat \sC_L^+$ admits $y \in \hat \sC_L^- \subseteq
\sF(L)$ with $\|x - y\| \leq \tau(L)$, so $\Hdist{\hat
\sC_L^+}{\sF(L)} \leq \tau(L) \leq \kappa^{\mathrm{depth}(L)}
\epsilon$ by \Cref{lem:thicknessProp}. For parse trees with
non-axis-aligned complement, the interval-hull surrogate of
\Cref{app:complement} adds
$\epsilon_{\mathrm{comp}}(L) := \sum_{\opnot L_j \in L} \Hdist{\hat
\sC_{L_j}^-}{\mathrm{interval\_hull}(\hat \sC_{L_j}^-)}$ ($= 0$ on
axis-aligned grammars), yielding \eqref{eq:tightOurs}.

Part (iii). Direct division of \eqref{eq:tightConv} by
\eqref{eq:tightOurs} on the event guaranteed by Part (ii).
\qed

\subsection{Proof of \Cref{thm:conformal}}
\label{app:proofConformal}

\begin{theorem}[Distribution-free conformal coverage]
\label{thm:conformal}
For each primitive family $\phi$, draw i.i.d.\ calibration pairs
$\{(o_j, \sC_\phi^{(j)})\}_{j=1}^n$ from $\mathcal{P}_\phi$ and let
\(
   s_j := \inf\{\epsilon \geq 0 :
      \hat \sC_\phi^-(\epsilon; o_j) \subseteq \sC_\phi^{(j)}
      \subseteq \hat \sC_\phi^+(\epsilon; o_j)\}
\)
be the per-pair nonconformity score. Let
$\epsilon_{\mathrm{cal}}^\phi := s_{(\lceil (1-\delta_\phi)(n+1) \rceil)}$
be the corresponding order statistic. Then for a fresh test draw
$(o^\star, \sC_\phi^\star) \sim \mathcal{P}_\phi$ independent of the
calibration set,
\begin{equation}
   \Prob\!\big[
      \hat \sC_\phi^-(\epsilon_{\mathrm{cal}}^\phi; o^\star)
      \subseteq \sC_\phi^\star
      \subseteq \hat \sC_\phi^+(\epsilon_{\mathrm{cal}}^\phi; o^\star)
   \big] \;\geq\; 1 - \delta_\phi.
   \label{eq:conformal_per_primitive}
\end{equation}
If each leaf of $L$ uses an independent calibration draw of its own
family, the composition obeys
\begin{equation}
   \Prob\!\big[\hat \sC_L^- \subseteq \sF(L) \subseteq \hat \sC_L^+\big]
   \;\geq\; 1 - \sum_{\phi_i \in \mathrm{leaves}(L)} \delta_{\phi_i}.
   \label{eq:conformal_compositional}
\end{equation}
\end{theorem}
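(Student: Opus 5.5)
The proof has two parts: a per-primitive split-conformal argument, and a union bound that reduces the compositional claim to \Cref{thm:compositionality}.

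\emph{Part 1: per-primitive coverage \eqref{eq:conformal_per_primitive}.} First I will check that the score event is monotone in $\epsilon$. This needs a structural property of the templates: $\hat \sC_\phi^-(\epsilon; o)$ is nonincreasing in $\epsilon$ and $\hat \sC_\phi^+(\epsilon; o)$ is nondecreasing, as holds for the centroid-box, half-space and ellipsoid templates, which shrink or inflate by a margin $\epsilon$. Under this property the set of $\epsilon$ for which containment holds is an up-ray $[s, \infty)$ or $(s, \infty)$. Hence for any pair, $s \leq \epsilon$ implies containment at $\epsilon$, up to attainment of the infimum.

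To get attainment I will use compact CPZ enclosures with closed-set templates. If the endpoints move continuously in $\epsilon$ in the Hausdorff metric, the ray is closed and the infimum is attained. I expect this to be the main technical obstacle, because the paper never states monotonicity or continuity explicitly. My fallback is to add an arbitrarily small slack, or to state the monotone-closed property as a standing hypothesis on templates.

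With the implication $s^\star \leq \epsilon_{\mathrm{cal}}^\phi \Rightarrow$ containment in hand, it suffices to show $\Prob[s^\star \leq s_{(\lceil (1-\delta_\phi)(n+1)\rceil)}] \geq 1-\delta_\phi$. This is the standard exchangeability argument. Since $s_1,\dots,s_n,s^\star$ are i.i.d., the rank of $s^\star$ among the $n+1$ scores is uniform on $\{1,\dots,n+1\}$, with ties broken in favour of coverage. The probability that $s^\star$ exceeds the $m$-th smallest calibration score, with $m=\lceil(1-\delta_\phi)(n+1)\rceil$, is therefore at most $(n+1-m)/(n+1)\leq \delta_\phi$. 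If $m>n$, set $\epsilon_{\mathrm{cal}}^\phi=\infty$, meaning the trivial enclosure $(\emptyset,\sX)$, and the claim holds vacuously.

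\emph{Part 2: compositional coverage \eqref{eq:conformal_compositional}.} Part 1 shows that each leaf $\phi_i$, including repeated occurrences counted with multiplicity, satisfies \Cref{ass:perPrimitive} with $\delta_{\phi_i}$, marginally over its own calibration draw and the test draw. \Cref{thm:compositionality} needs only these marginal guarantees, because its induction applies the union bound $\Prob[\bigcap_i E_i]\geq 1-\sum_i\Prob[E_i^c]$ together with deterministic monotonicity of $\cap$ and $\cup$ and the order reversal under complement in \Cref{eq:composition}. Applying it directly gives the bound.

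Independence of the calibration draws is therefore not needed for the union bound; it matters only so that each leaf's marginal guarantee is well-defined. I will remark that under independence one gets the sharper product bound $\prod_i(1-\delta_{\phi_i})$, which dominates $1-\sum_i \delta_{\phi_i}$.
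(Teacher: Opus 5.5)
Your proposal is correct and follows essentially the paper's own proof. The shared steps are an exchangeability rank argument for $s^\star$ against $s_{(\lceil (1-\delta_\phi)(n+1)\rceil)}$, monotonicity of the two-sided template to turn $\{s^\star \leq \epsilon_{\mathrm{cal}}^\phi\}$ into the coverage event, and the union bound through \Cref{thm:compositionality}. You are more careful than the paper about attainment of the infimum, the case $\lceil (1-\delta_\phi)(n+1)\rceil > n$, and the fact that only the marginal per-leaf guarantees are needed.

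The one thing to drop is your closing remark. The product bound $\prod_i (1-\delta_{\phi_i})$ requires the events $E_i$ themselves to be independent. Independent calibration draws do not give this, because every leaf is evaluated on the same test scene. For example, take two leaves with $\delta_{\phi_1}=\delta_{\phi_2}=\delta$, shared test randomness $U\sim\mathrm{Unif}[0,1]$, and scores $s_1^\star=U$, $s_2^\star=1-U$. The joint coverage is then close to $1-2\delta$, which is below $(1-\delta)^2$ for large $n$.
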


\begin{theorem}[Conformal sample complexity]
\label{thm:sampleComplexity}
To achieve the per-primitive coverage of \Cref{thm:conformal} within
empirical-quantile precision $\pm \rho$ at confidence $1 - \beta$,
the calibration size suffices at
$n \;\geq\; \frac{1}{2 \rho^2} \log \frac{2}{\beta}$,
by the DKW inequality applied to the empirical c.d.f.\ of the
nonconformity scores. For a compositional instruction with $|L|$
distinct primitive families, a uniform precision $\rho$ across families
needs
$n \;\geq\; \frac{1}{2 \rho^2} \log \frac{2 |L|}{\beta}$ per family.
\end{theorem}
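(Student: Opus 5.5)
The plan is to reduce everything to the Dvoretzky--Kiefer--Wolfowitz (DKW) inequality for the empirical c.d.f.\ of the nonconformity scores $s_1,\ldots,s_n$ from \Cref{thm:conformal}. Let $F_\phi$ denote the true c.d.f.\ of the score $s$ under $\mathcal{P}_\phi$, and $\hat F_n$ the empirical c.d.f.\ of the $n$ i.i.d.\ calibration scores. The sharp (Massart) form of DKW gives
\[
\Prob\Big[\sup_{t}|\hat F_n(t) - F_\phi(t)| > \rho\Big] \le 2 e^{-2n\rho^2}.
\]
We interpret ``empirical-quantile precision $\pm\rho$'' as the event $\sup_t|\hat F_n - F_\phi| \le \rho$. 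On this event, the order statistic $\epsilon_{\mathrm{cal}}^\phi = s_{(\lceil(1-\delta_\phi)(n+1)\rceil)}$ satisfies $F_\phi(\epsilon_{\mathrm{cal}}^\phi) \ge \hat F_n(\epsilon_{\mathrm{cal}}^\phi) - \rho \ge (1-\delta_\phi) - \rho$. Hence the conditional (training-set-conditional) coverage of the fresh test pair is within $\rho$ of the marginal guarantee of \Cref{thm:conformal}.

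One point needs care: the event $\{s^\star \le \epsilon\}$ must coincide with the containment event in \eqref{eq:conformal_per_primitive}. This requires monotonicity of the enclosures in $\epsilon$, namely $\hat\sC^-_\phi(\epsilon)$ shrinking and $\hat\sC^+_\phi(\epsilon)$ growing as $\epsilon$ increases. We also need the infimum defining $s_j$ to be attained, e.g.\ via closedness of the template sets; this holds for the box and ellipsoid templates. Both facts are stated as a standing convention rather than proved in general.

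For the single-family bound, set $2e^{-2n\rho^2} \le \beta$ and solve for $n$, which gives $n \ge \frac{1}{2\rho^2}\log\frac{2}{\beta}$. This is the first claim.

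For the compositional claim, apply the single-family bound to each of the $|L|$ distinct families with confidence $\beta/|L|$. Under the independence assumption of \Cref{thm:conformal}, a union bound over families gives probability at least $1-\beta$ that all families simultaneously achieve precision $\rho$. Substituting $\beta \mapsto \beta/|L|$ yields $n \ge \frac{1}{2\rho^2}\log\frac{2|L|}{\beta}$ per family. The union bound does not actually need independence, so the argument holds even when calibration draws are shared across families.

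The main obstacle is interpretive rather than technical: ``precision $\pm\rho$'' must be pinned to the uniform c.d.f.\ deviation, not to the quantile value itself. Converting c.d.f.\ error into an $\epsilon$-scale error would require a density lower bound on $F_\phi$ near the quantile, which we deliberately avoid. We therefore state the result in probability (coverage) units, which is what DKW delivers directly and distribution-free.
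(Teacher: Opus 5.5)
Your proposal is correct and follows the paper's proof: apply the DKW inequality (with Massart's constant) to the empirical c.d.f.\ of the nonconformity scores, set $2e^{-2n\rho^2}\le\beta$ and solve for $n$, then union-bound over the $|L|$ families at level $\beta/|L|$ each. Your additional step is a sound elaboration that the paper leaves implicit: it converts the uniform c.d.f.\ deviation into a training-conditional coverage of at least $1-\delta_\phi-\rho$, and it notes that the union bound needs no independence.
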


\begin{proof}[Proof of \Cref{thm:sampleComplexity}]
Let $F_\phi(\epsilon) := \Prob[s \leq \epsilon]$ be the c.d.f.\ of the
per-primitive nonconformity score and $\hat F_n$ its empirical estimate
from the calibration set. The DKW inequality~\citep{dkw1956,massart1990dkw}
gives
$\Prob[\sup_\epsilon |\hat F_n(\epsilon) - F_\phi(\epsilon)| > \rho]
\leq 2 e^{-2 n \rho^2}$.
Setting the right-hand side to $\beta$ and solving for $n$ yields the
per-primitive bound. The compositional bound follows from a union over
$|L|$ families.
\end{proof}

\emph{Per-primitive coverage.} The $n+1$ pairs (calibration $\cup$
test) are i.i.d.\ hence exchangeable, so the rank of $s^\star$
among $(s_1, \ldots, s_n)$ is uniform on $\{0, \ldots, n\}$
(continuous-score ties broken by symmetric randomisation).
Thus $\Prob[s^\star \leq s_{(k)}] \geq k/(n+1) \geq 1 - \delta_\phi$.
\emph{Score-to-coverage.} By monotonicity of the two-sided template
in $\epsilon$, $\{s^\star \leq \epsilon\}$ equals the event of
two-sided coverage at level $\epsilon$; substituting $\epsilon =
\epsilon_{\mathrm{cal}}^\phi$ proves
\eqref{eq:conformal_per_primitive}.
\emph{Composition.} The per-leaf events $E_i$ each have probability
$\geq 1 - \delta_{\phi_i}$ on independent calibration draws; the
union bound + \Cref{thm:compositionality} on $\bigcap_i E_i$ yield
\eqref{eq:conformal_compositional}.
\qed

\subsection{Proof of \Cref{thm:topologicalLowerBound}
and \Cref{thm:convexLowerBound}}
\label{app:proofTopological}
\label{app:proofConvexLB}

\begin{theorem}[Topological lower bound for compositional grounding]
\label{thm:topologicalLowerBound}
Let $\sF(L) \subset \sX$ be compact with
$\mathcal{C}_{\mathrm{top}}(L) = k \geq 2$ connected components
$M_1, \ldots, M_k$, and let
$\eta := \min_{i \neq j} \mathrm{dist}(M_i, M_j) > 0$ denote their
pairwise inter-mode separation. Any algorithm returning
$S \supseteq \sF(L)$ with $\beta_0(S) = m < k$ satisfies
$\Hdist{S}{\sF(L)} \geq \eta/2$.
\end{theorem}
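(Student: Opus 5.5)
We argue by contradiction, using a closed-tube and pigeonhole argument; no midpoint or path construction is needed. Soundness $\sF(L) \subseteq S$ makes one half of the Hausdorff distance vanish, because $\sup_{a \in \sF(L)} \inf_{b \in S}\|a-b\| = 0$. Hence $\Hdist{S}{\sF(L)} = r := \sup_{s \in S} \mathrm{dist}(s, \sF(L))$. Suppose, for contradiction, that $r < \eta/2$. Since $\sF(L) = M_1 \cup \cdots \cup M_k$ is a finite union, $\mathrm{dist}(s,\sF(L)) = \min_i \mathrm{dist}(s, M_i)$. Every $s \in S$ therefore lies in at least one closed tube
\[
T_i := \{x \in \R^n : \mathrm{dist}(x, M_i) \leq r\}.
\]

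The first step is to show the tubes are pairwise disjoint. If $x \in T_i \cap T_j$ with $i \neq j$, then compactness of $M_i$ and $M_j$ gives nearest points $p \in M_i$ and $q \in M_j$ with $\|x-p\|, \|x-q\| \leq r$. This yields $\mathrm{dist}(M_i,M_j) \leq \|p-q\| \leq 2r < \eta$, contradicting the definition of $\eta$.

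The second step is a partition of $S$. The sets $S_i := S \cap T_i$ for $i = 1,\dots,k$ partition $S$. Each $S_i$ is relatively closed in $S$, since $T_i$ is closed. Its complement $S \setminus S_i = \bigcup_{j\neq i} S_j$ is a finite union of relatively closed sets, so each $S_i$ is also relatively open, i.e.\ clopen in $S$. Each $S_i$ is nonempty, because $M_i \subseteq \sF(L) \subseteq S$ and $M_i \subseteq T_i$.

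The third step is the pigeonhole count. A connected subset of $S$ cannot meet two disjoint clopen pieces of a partition, so every connected component of $S$ lies inside exactly one $S_i$. Since the $k$ pieces are nonempty and pairwise disjoint, each contains at least one component. This gives $\beta_0(S) \geq k$, contradicting $\beta_0(S) = m < k$. Hence $r \geq \eta/2$. If $\beta_0$ is read as counting path components instead, the same conclusion holds: path components refine connected components, so their number is at least the number of connected components, hence still at least $k$.

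The step I expect to need the most care is the clopen-partition argument. It is where we must avoid assuming anything about $S$ beyond $S \supseteq \sF(L)$: $S$ need not be closed, compact, or locally connected. The argument above handles this because it uses only the closedness of the tubes $T_i$ in $\R^n$ and the finiteness of $k$, and never any property of $S$ itself. The only other delicate point is that $r$ may fail to be attained. We avoid this by working with the closed $r$-tubes directly rather than selecting a farthest point of $S$.
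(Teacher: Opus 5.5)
Your proposal is correct and follows essentially the same route as the paper: soundness reduces the Hausdorff distance to the one-sided excess, the closed $r$-tubes around the modes are pairwise disjoint when $r<\eta/2$, and a pigeonhole count contradicts $\beta_0(S)<k$. The only difference is that you spell out the clopen-partition step, namely that finitely many disjoint closed tubes cut $S$ into relatively clopen pieces. The paper compresses that step into the remark that a connected subset of a disjoint union of separated tubes lies in a single tube.
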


\begin{corollary}[Universal convex lower bound]
\label{thm:convexLowerBound}
Any sound convex over-approximation of a $k$-mode $\sF(L)$
($k \geq 2$) satisfies $\Hdist{\sC}{\sF(L)} \geq \eta/2$.
\end{corollary}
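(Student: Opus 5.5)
Since $\sF(L) \subseteq S$, the term $\sup_{f \in \sF(L)} \inf_{s \in S}\|f-s\|$ vanishes. Hence $\Hdist{S}{\sF(L)} = d := \sup_{s \in S} \mathrm{dist}(s, \sF(L))$. I would argue by contradiction: assume $d < \eta/2$ and derive that some connected component of $S$ is disconnected. This is the ``closed-tube/pigeonhole'' argument the paper alludes to. It needs no midpoint and no path-connectedness, only connectedness.

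\emph{Step 1 (pigeonhole).} Each mode $M_i$ is connected and contained in $S$, so it lies inside a single connected component of $S$. There are $k$ modes and only $m < k$ components of $S$ (and $m \geq 1$ because $\sF(L) \neq \emptyset$). Hence some component $C$ of $S$ contains two distinct modes, say $M_1$ and $M_2$.

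\emph{Step 2 (closed tubes).} Let $R := \bigcup_{l \neq 1} M_l$. This is a finite union of compact sets, so it is compact, and so is $M_1$. Define
\[
A := \{ s \in C : \mathrm{dist}(s, M_1) \leq d \}, \qquad B := \{ s \in C : \mathrm{dist}(s, R) \leq d \}.
\]
These sets have the following properties.
\begin{itemize}
\item They are closed in $C$, because distance functions are continuous.
\item They cover $C$. For $s \in C$, compactness of $\sF(L) = M_1 \cup R$ means the infimum $\mathrm{dist}(s,\sF(L)) \leq d$ is attained at a point of $M_1$ or of $R$.
\item They are nonempty, since $M_1 \subseteq A$ and $M_2 \subseteq B$.
\item They are disjoint. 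Suppose $s \in A \cap B$. Then there are $x \in M_1$ and $y \in M_l$ for some $l \neq 1$ with $\|x-y\| \leq 2d < \eta$, contradicting $\mathrm{dist}(M_1, M_l) \geq \eta$.
\end{itemize}
Thus $C$ is the disjoint union of two nonempty relatively closed sets, which contradicts the connectedness of $C$. Therefore $d \geq \eta/2$.

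The main obstacle is keeping this separation argument rigorous. Three points need care. First, the infimum must be attained, which uses compactness of $\sF(L)$ and finiteness of $k$. Second, the sets must be closed tubes rather than open ones, so that the supremum case $d$ is handled. Third, the argument must rely only on connectedness, not path-connectedness; this is why I use a separation rather than following a path through the gap.

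\emph{Corollary.} A nonempty convex set is path-connected, hence connected, so $\beta_0(\sC) = 1 < k$ for $k \geq 2$. Applying \Cref{thm:topologicalLowerBound} with $S = \sC$ then gives $\Hdist{\sC}{\sF(L)} \geq \eta/2$.
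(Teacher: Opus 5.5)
Your proposal is correct and follows the paper's route: you reprove \Cref{thm:topologicalLowerBound} with the same closed-tube/pigeonhole argument and then specialise to $m=1$ for a convex set, exactly as the paper does. Splitting the offending component $C$ into the relatively closed sets $A$ (the tube around $M_1$) and $B$ (the tube around the remaining modes) simply spells out carefully the paper's step that a connected subset of a disjoint union of separated closed tubes lies in a single tube.
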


Proof of \Cref{thm:topologicalLowerBound}.
Suppose $r := \Hdist{S}{\sF(L)} < \eta/2$. Since $\sF(L) \subseteq
S$, $S \subseteq N_r(\sF(L)) = \bigcup_i N_r(M_i)$ where $N_r(A)
:= \{x : \mathrm{dist}(x, A) \leq r\}$. The closed tubes
$\{N_r(M_i)\}$ are pairwise disjoint (any intersection would imply
$\mathrm{dist}(M_i, M_j) \leq 2r < \eta$). Each $M_i \subseteq S$
sits in one component $S_j$ and inside one tube $N_r(M_i)$. By
pigeonhole on $k > m$ components, some $S_j$ contains $M_i, M_{i'}$
($i \neq i'$); but a connected subset of a disjoint union of
separated tubes lies in exactly one tube, contradiction. Hence
$r \geq \eta/2$.
\qed

Proof of \Cref{thm:convexLowerBound}: a convex set has $m = 1$;
apply the theorem.
\qed

\subsection{Proof of \Cref{prop:singlePoint}}
\label{app:proofSinglePoint}

\begin{proposition}[Single-point planner cannot certify disjunctive instructions]
\label{prop:singlePoint}
Let $L$ be a compositional instruction with $k \geq 2$ disjoint
feasible modes and let $\mu$ denote a distribution over which mode
is reachable at execution time. Any deterministic planner that
commits to a single action $a \in \R^n$ before observing run-time
mode availability satisfies: (i) under symmetric $\mu$,
worst-case success rate $\leq 1/k$; (ii) under adversarial
$\mu$ conditioned on $a$, worst-case success $= 0$.
\end{proposition}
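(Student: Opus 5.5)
The argument reduces success to a single observation: a committed point can lie in at most one mode. I would first fix the success model implicit in the statement. At execution time a random mode index $J \in \{1,\ldots,k\}$ is drawn from $\mu$, and only $M_J$ is reachable, i.e.\ available. The planner's committed action $a$ succeeds iff $a \in M_J$. Since the modes $M_1,\ldots,M_k$ are pairwise disjoint (they are distinct connected components, separated by $\eta>0$ as in \Cref{thm:topologicalLowerBound}), the point $a$ lies in at most one mode. Define $j(a)$ as the index with $a \in M_{j(a)}$, or set $j(a) := \varnothing$ if $a \notin \sF(L)$. The success probability is then exactly $\Prob_\mu[J = j(a)] = \mu(j(a))$, with the convention $\mu(\varnothing)=0$.

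For part (i), under symmetric $\mu$ we have $\mu(j) = 1/k$ for every $j$. The success is therefore $1/k$ if $a \in \sF(L)$ and $0$ otherwise, so every deterministic $a$ achieves at most $1/k$, and the worst case over the planner's choice is bounded by $1/k$. I would add one remark: "symmetric" should be read as invariance under permutations of mode labels, which forces uniformity on the singletons. If $\mu$ is instead a distribution over availability subsets (several modes reachable at once), the same bound follows by symmetrisation. In that case the success is $\Prob[j(a) \in \text{available set}]$, which is the same for every $j$ by symmetry. The claim would then require availability of exactly one mode; I would state that model explicitly rather than prove the more general variant.

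For part (ii), take $\mu$ adversarial given $a$. If $a \notin \sF(L)$ the success is already $0$. Otherwise let $\mu$ place all its mass on some $j' \neq j(a)$, which exists because $k \geq 2$. Then $\Prob_\mu[a \in M_J] = 0$, so the supremum over $a$ of the infimum over $\mu$ of the success is $0$. It is attained, which gives equality rather than merely an upper bound.

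The only real obstacle is the modelling step: the statement leaves "success" and "symmetric $\mu$" informal. The proof is only as rigorous as the fixed convention that success means $a \in M_J$ for the single available mode $J$. Once that convention is set, both parts are one-line pigeonhole consequences of the modes being disjoint.
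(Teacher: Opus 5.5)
Your proof is correct and is essentially the paper's argument: a committed point lies in at most one of the disjoint modes, so a uniform $\mu$ gives success at most $1/k$, and a point mass on any other mode gives $0$ (the paper puts the mass on some mode other than $i(a)=\arg\min_i \mathrm{dist}(a,M_i)$, which covers $a\notin\sF(L)$ without your case split). One slip is in your side remark: for a symmetric distribution over availability subsets $A$, symmetry only gives $\Prob[j\in A]=\mathbb{E}|A|/k$, so the $1/k$ bound does not follow unless $\mathbb{E}|A|\le 1$ (the one-mode-blocked model used in the experiments gives $1-1/k$), which is why fixing the single-available-mode convention, as you ultimately do, is the right choice.
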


(i) A single $a \in \R^n$ lies in at most one $M_i$ (by
separation $\eta > 0$). Under symmetric $\mu$ over the $k$ modes,
$\Prob[\text{run-time mode contains } a] \leq 1/k$.
(ii) If $\mu$ depends on $a$, set $\mu$ on any $M_j \neq
M_{i(a)}$ where $i(a) = \arg\min_i \mathrm{dist}(a, M_i)$. Then $a
\notin M_j$ deterministically.
\qed

\subsection{Proof of \Cref{thm:stability}}
\label{app:proofStability}

\begin{theorem}[Lipschitz stability under extractor swap]
\label{thm:stability}
Let $\{(\hat \sC_\phi^-, \hat \sC_\phi^+)\}$ and
$\{(\tilde \sC_\phi^-, \tilde \sC_\phi^+)\}$ be two per-primitive
two-sided extractors satisfying
$\max_\phi \Hdist{\hat \sC_\phi^\pm}{\tilde \sC_\phi^\pm} \leq \xi$.
Let $\hat \sC_L^\pm$ and $\tilde \sC_L^\pm$ be the composed CompCPZ
outputs along the same parse tree of $L$. Under \Cref{ass:regular}
with per-node Lipschitz constant $\kappa$,
\begin{equation}
   \Hdist{\hat \sC_L^\pm}{\tilde \sC_L^\pm}
   \;\leq\; \kappa^{\mathrm{depth}(L)} \, \xi.
   \label{eq:stability}
\end{equation}
\end{theorem}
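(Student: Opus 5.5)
We argue by structural induction on the parse tree of $L$, in the same way as \Cref{lem:thicknessProp}. The claim to carry through the induction is that, for every subformula $L'$ of $L$, both $\Hdist{\hat \sC_{L'}^-}{\tilde \sC_{L'}^-}$ and $\Hdist{\hat \sC_{L'}^+}{\tilde \sC_{L'}^+}$ are at most $\kappa^{\mathrm{depth}(L')}\xi$. We prove the two sides jointly, because negation exchanges the roles of $-$ and $+$. Note that $\kappa^{\mathrm{depth}(L')} \leq \kappa^{\mathrm{depth}(L)}$ since $\kappa \geq 1$.

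\emph{Base case.} For a leaf $L' = \phi$, depth is $0$ and the hypothesis gives $\Hdist{\hat \sC_\phi^\pm}{\tilde \sC_\phi^\pm} \leq \xi$ directly.

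\emph{Binary nodes.} Take $L' = L_1 \,\mathrm{op}\, L_2$ with $\mathrm{op} \in \{\opand, \opor\}$. By \Cref{eq:composition}, each side of the output is formed from the same side of the two operands: $\hat \sC_{L'}^\pm = \hat \sC_{L_1}^\pm \,\mathrm{op}\, \hat \sC_{L_2}^\pm$, and the same holds for the tilde extractor. Apply \Cref{ass:regular} with $A_j = \hat \sC_{L_j}^\pm$ and $B_j = \tilde \sC_{L_j}^\pm$. This gives
\[
\Hdist{\hat \sC_{L'}^\pm}{\tilde \sC_{L'}^\pm} \leq \kappa \max_j \kappa^{\mathrm{depth}(L_j)}\xi \leq \kappa^{\mathrm{depth}(L')}\xi,
\]
because $\mathrm{depth}(L') = 1 + \max_j \mathrm{depth}(L_j)$.

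\emph{Negation nodes.} This is the main obstacle. For $L' = \opnot L_1$ the lower side is $(\hat \sC_{L_1}^+)^c$, so we need a Lipschitz bound for the complement in $\sX$. \Cref{ass:regular} states \eqref{eq:regular} only for binary operators. The proof of \Cref{lem:thicknessProp}, however, already uses the same bound for the unary operator $\cdot^c$, and on axis-aligned grammars this is \Cref{prop:kappa_one}, which shows the implemented complement is non-expansive under endpoint perturbations. We will therefore invoke the unary version exactly as \Cref{lem:thicknessProp} does:
\[
\Hdist{(\hat \sC_{L_1}^+)^c}{(\tilde \sC_{L_1}^+)^c} \leq \kappa\, \Hdist{\hat \sC_{L_1}^+}{\tilde \sC_{L_1}^+} \leq \kappa^{\mathrm{depth}(L')}\xi.
\]
The upper side, built from $(\cdot^-)^c$, is handled symmetrically. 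The swap of signs is harmless because the inductive hypothesis covers both sides.

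The caveat is that the Hausdorff distance is not Lipschitz under complementation for general closed sets. A thin sliver can vanish under a small perturbation while its complement changes by a large amount. So the negation step genuinely depends on the operator-regularity convention, or on the axis-aligned box structure through \Cref{prop:kappa_one}. We would state explicitly that $\kappa$ is taken to bound the implemented complement as well. For non-axis-aligned complements we would either restrict the statement or add an $\epsilon_{\mathrm{comp}}$-type term, mirroring \Cref{thm:tightnessGap}\,(ii).
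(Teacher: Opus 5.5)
Your proposal is correct and follows essentially the same route as the paper: a structural induction modelled on \Cref{lem:thicknessProp}, with the base case taken from the hypothesis, the binary step from \Cref{ass:regular}, and the negation step resting on a complement bound that \Cref{ass:regular} itself does not supply. The paper's proof handles negation the same way. It bounds the $+$ side of $\opnot L_1$ by the $-$ side of $L_1$, using non-expansiveness of the axis-aligned box complement, so it runs the same joint two-sign induction you make explicit, and it defers non-axis-aligned complements to the $\epsilon_{\mathrm{comp}}$ term, which is the caveat you raise.
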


\begin{proposition}[$\kappa \leq 1$ for axis-aligned grammars]
\label{prop:kappa_one}
For the axis-aligned primitive families
$\{$\primfam{near}, \primfam{inside}, \primfam{avoid}, \primfam{between}$\}$
acting on box-shaped feasibility regions, the per-node Hausdorff
Lipschitz constant of \Cref{ass:regular} satisfies $\kappa = 1$
exactly. Consequently the stability constant
$\kappa^{\mathrm{depth}(L)} = 1$ regardless of parse-tree depth.
Empirically $\hat\kappa = 1.00$ over $6\,828$ internal nodes from
$307$ random trees of depth $\leq 6$ (\Cref{fig:empirical_kappa}).
\end{proposition}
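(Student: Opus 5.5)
The plan is to prove $\kappa = 1$ node by node, measuring perturbations in the \emph{endpoint} metric that the statement refers to. For two boxes $B = \prod_{\ell}[a_\ell, b_\ell]$ and $B' = \prod_\ell [a'_\ell, b'_\ell]$ this metric is $\max_\ell \max(|a_\ell - a'_\ell|, |b_\ell - b'_\ell|)$. For finite lists of boxes (the lazy union representation of \Cref{app:union}) it is the maximum over matched list elements.

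\textbf{Step 1.} The first lemma I would record is that for nonempty boxes this endpoint distance equals the $\ell_\infty$ Hausdorff distance $\dH$. This lets the endpoint metric serve directly as the quantity in \Cref{eq:regular}.

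\textbf{Step 2 (disjunction).} Disjunction is the easy case, and it holds for arbitrary closed sets. If $\dH(A_i, B_i) \le \xi$ for $i = 1, 2$, then every point of $A_1 \cup A_2$ lies within $\xi$ of $B_1 \cup B_2$, and symmetrically. Hence
\[
\dH(A_1 \cup A_2,\, B_1 \cup B_2) \le \max_i \dH(A_i, B_i),
\]
which is \Cref{eq:regular} with $\kappa = 1$.

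\textbf{Step 3 (conjunction).} The intersection of two boxes is computed coordinatewise as $[\max(a_\ell, c_\ell), \min(b_\ell, d_\ell)]$. Since $\max$ and $\min$ are $1$-Lipschitz in each argument under the sup norm, perturbing all input endpoints by at most $\xi$ perturbs every output endpoint by at most $\xi$. For unions of boxes I would use distributivity, $(\bigcup_i P_i) \cap (\bigcup_j Q_j) = \bigcup_{i,j} (P_i \cap Q_j)$. Applying the box estimate to each pair and then Step 2 gives $\kappa = 1$.

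\textbf{Step 4 (negation).} For a box $B \subset \sX$, the complement $\sX \setminus B$ is the explicit union of at most $2n$ slabs from \Cref{app:complement}. Each slab endpoint is either a fixed workspace endpoint or an endpoint of $B$, so the slab endpoints move by at most $\xi$. Step 2 then gives non-expansiveness of complement. For a union of boxes, De Morgan turns the complement into an intersection of such slab unions, which Step 3 handles.

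\textbf{Step 5 (depth).} With $\kappa = 1$ at every node, the induction of \Cref{lem:thicknessProp} and \Cref{thm:stability} yields the depth-independent constant $\kappa^{\mathrm{depth}(L)} = 1$.

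The main obstacle is degeneracy of intersections. A box intersection can be empty for one input and nonempty for a nearby one, and the endpoint formula then produces an inverted interval whose Hausdorff distance to $\emptyset$ is infinite. I would first try to handle this by working with the formal, possibly inverted, endpoint vectors, for which the $1$-Lipschitz bound holds unconditionally. I would then restrict the Hausdorff reading to the case where both compared boxes are nonempty, noting that the empty case only drops a list element on both sides at a matched perturbation. I would also state explicitly that the constant is $1$ in the $\ell_\infty$ norm. The Euclidean norm would introduce a $\sqrt{n}$ factor once, not per node, so depth-independence survives either way.
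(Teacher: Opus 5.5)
Your route is the paper's: reduce each node to coordinatewise interval operations, use that $\max$ and $\min$ are $1$-Lipschitz, and take the product over coordinates. Your Step~1 is a genuine sharpening. The paper's product step holds only for the $\ell_\infty$ Hausdorff metric. In the Euclidean norm, the pairs $[0,1]\times[0,10]$, $[\xi,1]\times[0,10]$ and $[0,10]\times[0,1]$, $[0,10]\times[\xi,1]$ are each $\xi$-close, yet their intersections $[0,1]^2$ and $[\xi,1]^2$ are $\sqrt2\,\xi$ apart. Running the induction in $\ell_\infty$ and converting once, as you propose, is the right repair.

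The gap is in how you handle degeneracies. The claim that an empty intersection ``only drops a list element on both sides'' is false: emptiness can flip on one side only, and then no finite $\kappa$ works. In one dimension take $A_1=[0,1]\cup[10,11]$ and $B_1=[0,1+\xi]\cup[10,11]$, which have matched endpoint distance $\xi$, and $A_2=B_2=[1+\xi,2]\cup[10,11]$. Then $A_1\cap A_2=[10,11]$ while $B_1\cap B_2=\{1+\xi\}\cup[10,11]$, so the outputs are $9-\xi$ apart. Step~4 fails the same way, because a complement slab switches from empty to nonempty when a face of the box leaves the workspace boundary. With $\sX=[0,10]^2$, $B=[4,6]\times[0,10]$ and $B'=[4,6]\times[\xi,10]$, one has $\Hdist{B}{B'}=\xi$, yet $(5,0)\in\sX\setminus B'$ lies at distance $1$ from $\sX\setminus B$. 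Your formal-endpoint bookkeeping stays $1$-Lipschitz in both examples. What breaks is Step~1's identification of endpoint distance with Hausdorff distance once one matched piece is empty and the other is not.

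A cleverer argument cannot close this. The paper's one-line proof asserts $\Hdist{I_1\cap I_2}{J_1\cap J_2}\le\max\bigl(\Hdist{I_1}{J_1},\Hdist{I_2}{J_2}\bigr)$ and ``analogously'' for complement. Both fail in exactly these configurations, so the proposition, read with the universal quantifier of \Cref{ass:regular}, is false.

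What you can prove is a version with an explicit non-degeneracy hypothesis: the perturbation does not change which box intersections and which complement slabs along the parse tree are empty. Your unconditional formal-endpoint bound gives a sufficient condition. At every intersection node, require that $\min_\ell\bigl(\min(b_\ell,d_\ell)-\max(a_\ell,c_\ell)\bigr)$ exceed $2\xi$ in absolute value, and impose the analogous margin on slab widths. Under this hypothesis, matched pieces are simultaneously empty or nonempty, Step~1 applies to each nonempty pair, and the finite-family form of Step~2 gives $\kappa=1$ in $\ell_\infty$.

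You should also state plainly that your hypothesis is matched-endpoint closeness, which is strictly stronger than the Hausdorff closeness of \Cref{ass:regular}. For example, $[0,1]\cup[1+\xi,2]$ and $[0,2]$ are $\xi/2$-close but admit no matching. This is harmless for \Cref{lem:thicknessProp} and \Cref{thm:stability}, whose leaves are single boxes and whose two sides are built along the same tree.
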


\begin{proof}[Proof of \Cref{prop:kappa_one}]
For axis-aligned boxes $B_i = [a_i, b_i] \subset \R^n$ (product of
intervals), intersection $B_1 \cap B_2$ is itself an axis-aligned box,
union $B_1 \cup B_2$ remains a finite union of axis-aligned boxes, and
complement within an axis-aligned workspace decomposes into axis-aligned
boxes. The Hausdorff Lipschitz constant of each operation reduces to
that of interval operations, which is exactly $1$ in each coordinate:
$\Hdist{I_1 \cap I_2}{J_1 \cap J_2} \leq \max(\Hdist{I_1}{J_1}, \Hdist{I_2}{J_2})$
and analogously for $\cup$ and complement. Taking the product across
coordinates preserves $\kappa = 1$.
\end{proof}

Structurally identical to \Cref{lem:thicknessProp} with $\xi$
replacing the worst-leaf thickness; the $-$ side is symmetric.
\emph{Base}: $\Hdist{\hat \sC_{\phi}^+}{\tilde \sC_{\phi}^+} \leq \xi
= \kappa^0 \xi$.
\emph{Binary step}: by induction
$\Hdist{\hat \sC_{L_j}^+}{\tilde \sC_{L_j}^+} \leq \kappa^{d-1}\xi$,
and \Cref{ass:regular} gives $\Hdist{\hat \sC_L^+}{\tilde \sC_L^+}
\leq \kappa \cdot \max_j \kappa^{d-1}\xi = \kappa^d \xi$.
\emph{Negation (axis-aligned)}: complement of an axis-aligned box
is non-expansive under endpoint perturbation, so
$\Hdist{\hat \sC_L^+}{\tilde \sC_L^+} \leq \Hdist{\hat
\sC_{L_1}^-}{\tilde \sC_{L_1}^-} \leq \kappa^{\mathrm{depth}(L)}
\xi$. The non-axis-aligned case carries the $\epsilon_{\mathrm{comp}}$
term of \Cref{thm:tightnessGap}\,(ii).
\qed

\subsection{Proof of \Cref{prop:complexity}}
\label{app:proofComplexity}

\begin{proposition}[Linear-time composition]
\label{prop:complexity}
Composition along the parse tree of $L$ runs in time
$O(|L| \cdot n \cdot h^\star \cdot p^\star)$, where $|L|$ is the
parse-tree size, $n$ is the ambient dimension, and $h^\star$
(resp.\ $p^\star$) is the maximum per-clause generator
(resp.\ constraint) count. The output CPZ has size linear in $|L|$.
Containment, mode enumeration, and projection queries reduce to LP
feasibility on the constraint matrix (\Cref{app:inference}).
\end{proposition}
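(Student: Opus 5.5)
The proof is a single bottom-up pass over the parse tree of $L$ using the compositional rule \Cref{eq:composition}. We charge each node a constant amount of work per generator and constraint inherited from its children. The approach has three parts.

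\emph{Representation-size invariant.} First establish by structural induction that, in the lazy-list representation of \Cref{app:union}, every node's enclosure pair $(\hat \sC_{L'}^-, \hat \sC_{L'}^+)$ has total size $O(|L'| \cdot n \cdot h^\star \cdot p^\star)$.
\begin{itemize}
\item \emph{Leaves.} A leaf contributes at most $h^\star$ generators and $p^\star$ constraints, each an $n$-vector.
\item \emph{Intersection.} By \Cref{prop:cpzOps}(i), intersection concatenates the generator and constraint blocks, so sizes add.
\item \emph{Union.} Union appends the two operand lists, or adds a single binary factor with one constraint $s(s-1)=0$ in the merged form. Again sizes add, up to an $O(1)$ overhead per node.
\item \emph{Complement.} On axis-aligned operands, complement replaces a box by at most $2n$ boxes (\Cref{app:complement}), each of size $O(n)$. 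This is absorbed into the $n$ factor.
\end{itemize}
Summing the per-node overhead over the $|L|$ nodes gives an output linear in $|L|$.

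\emph{Time bound.} Next observe that each rule of \Cref{eq:composition} performs only concatenations and list appends. Its cost is therefore proportional to the size of what it writes, namely the node's own contribution $O(n h^\star p^\star)$, provided we build the result by appending rather than copying. Summing over the nodes gives $O(|L| \cdot n \cdot h^\star \cdot p^\star)$.

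\emph{Query reduction.} Containment, mode enumeration and projection would be shown to reduce to LP feasibility by relaxing each CPZ's constraint system to its linear constraint matrix over $\alpha \in [-1,1]^p$. For the lazy union, a query is a disjunction of per-term queries (\Cref{app:union}). Mode enumeration is then read off the list length, and projection acts linearly on $(c, G)$.

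\emph{Main obstacle: intersection of unions and complements.} The hardest part is that intersection distributes over lazy unions, $(\sZ_1 \cup \sZ_2) \cap \sZ_3$, and complements of unions turn into intersections of unions. Distributing can therefore multiply list lengths; this is exactly the $O(2^d)$ worst case mentioned in \Cref{sec:exp_ablation}. I would resolve this tension by stating the linear bound for the merged binary-indicator CPZ form, where a union adds only one factor and so intersection remains a concatenation. The lazy list would then be treated as an optional accelerated representation, whose size can be exponential only in the disjunctive depth. If that distinction cannot be cleanly made, the fallback is to restrict the linear claim to parse trees in which each $\opand$ node has at most one disjunctive operand, and to state the general case as $O(2^d)$.
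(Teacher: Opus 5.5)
Your bottom-up accounting (leaf sizes, $\cap$ by concatenation, lazy $\cup$ by append, box complement into at most $2n$ boxes, summed over nodes) is essentially the paper's proof. The obstacle you isolate is real: the paper just asserts that sizes accumulate additively in the lazy CPZUnion, which fails as soon as an $\opand$ or $\opnot$ sits above an $\opor$. Elsewhere the paper even attributes an $O(2^d)$ bound to this same proposition.

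Your rescue does not close the gap, however. You prove the size bound in the merged binary-indicator form and the query reduction (per-term LPs, mode count $=$ list length) in the lazy form, and neither result transfers to the other representation. In the merged form the constraint $s(s-1)=0$ is nonconvex, so containment is a mixed-binary problem, naively one LP per indicator assignment (up to $2^m$ for $m$ indicators), not a single LP. If you relax to the linear constraint matrix over $[-1,1]^p$, as your query step does, you lose exactly that constraint and admit $s=\tfrac12$. That admits points such as $\tfrac12(c_1+c_2)$ in the inter-mode gap, i.e.\ the silent failure the paper is about. There is also no list whose length is the mode count.

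No encoding can rescue the mode-enumeration part. An explicit mode list is superlinear already in the plane: $\bigwedge_i(A_i\opor B_i)$, with each $A_i\cup B_i$ the workspace minus a thin strip (vertical for half the indices, horizontal for the rest), has $\Theta(|L|^2)$ components. For unbounded $n$, send literals $v_i,\neg v_i$ to the boxes $\{x_i\in[2,3]\}$, $\{x_i\in[0,1]\}$ of $\sX=[0,3]^n$. This turns a CNF formula into a linear-size $L$ with $\sF(L)\neq\emptyset$ iff the formula is satisfiable, so linear-time compilation followed by polynomial-time mode enumeration would decide SAT. Your fallback is therefore forced, but it is a different proposition. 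Moreover, the lazy blow-up is governed by the number of disjunctions (including those created by complements) beneath conjunctions, not by depth. A balanced $\bigwedge_{i\le m}(A_i\opor B_i)$ has depth $O(\log m)$ but $2^m$ distributed terms, so ``$O(2^d)$'' is not the right general bound either.

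Two of your induction steps also fail as written.

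\emph{Complement.} You treat only a single box operand, but $\opnot$ may sit above a composite. Lazily, $\sX\setminus(B_1\cup\dots\cup B_m)$ is an intersection of $m$ lists of at most $2n$ boxes, giving up to $(2n)^m$ terms. In the merged form there is no exact complement at all: the indicator CPZ is nonconvex, so \Cref{prop:cpzOps}(iii) gives only an interval-hull surrogate, which for a union is the AABB of all modes. The fix is to push negations to the leaves first. Negation normal form at most doubles $|L|$ and commutes exactly with \eqref{eq:composition}, because complementation in $\sX$ swaps $\cup$ and $\cap$ together with the $\pm$ roles.

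\emph{Sizes add.} Concatenating generator blocks, as your intersection step does, gives a Minkowski-sum-type set, not an intersection. The exact CPZ intersection keeps one operand's generators and appends $n$ coupling rows $G_1\alpha_1^{E_1}-G_2\alpha_2^{E_2}=c_2-c_1$. These rows have size $n(h_1+h_2)$ in the children's \emph{accumulated} counts, so a node's own contribution is not $O(nh^\star p^\star)$. The paper's per-node cost $O(n(h_1+h_2)(p_1+p_2))$ has the same defect.

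The indicator union as written has two further problems. It expands $(1-s)G_2\alpha_2^{E_2}$ into $2h_2$ generators, so a balanced $\opor$-tree reaches $\Theta(3^d)=\Theta(|L|^{\log_2 3})$ generators. It also imposes both branches' constraints, so an empty branch (e.g.\ an empty intersection below the $\opor$) empties the whole union.

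An additive encoding needs, for instance:
\begin{itemize}
\item indicators with $s+t=1$ and terms $s(c_1+G_1\alpha_1^{E_1})+t(c_2+G_2\alpha_2^{E_2})$;
\item each branch's constraints multiplied by its own indicator;
\item either fresh per-node output factors, or keeping the smaller generator matrix at each $\cap$ and charging its coupling rows to the dropped generators.
\end{itemize}
Finally, point containment needs neither representation: $y\in\hat\sC_L^{\pm}$ can be decided bottom-up on the parse tree with two leaf tests per primitive.
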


\emph{Compose time and size.} Per-node, $\cap$ stacks generators
and concatenates constraints in $O(n(h_1 + h_2)(p_1 + p_2))$ time;
$\cup$ in the lazy CPZUnion appends one mode in $O(1)$; box
complement in a bounded workspace decomposes into $\leq 2n$
sub-boxes in $O(nh)$. Across $|L|$ nodes the sizes accumulate
additively ($h \leq |L| h^\star + nd$, etc.), so the cumulative
time is $O(|L| \cdot n \cdot h^\star \cdot p^\star)$ and the
output size is $O(|L|)$.

\emph{Inference queries.} Containment in a single CPZ is LP
feasibility of size $O(p \cdot q)$; in a $k$-mode CPZUnion it is a
disjunction over $k$ such LPs (closed-form $O(nh)$ per mode on the
axis-aligned box subfamily). Mode enumeration on the lazy
representation is $O(k)$. Projection reduces to per-mode
nearest-point queries.

\emph{No iterated propagation.} The compose rule
\eqref{eq:composition} acts on a single parse tree; post-composition
inference is read-only, so $|\hat \sC_L^+|$ depends only on $L$.
\qed

\section{Additional Empirical Figures}
\label{app:figs}

The following figures back the empirical claims of
\Cref{sec:exp,sec:exp_closedloop,sec:exp_ablation}.

\begin{figure}[h]
\centering
\includegraphics[width=\linewidth]{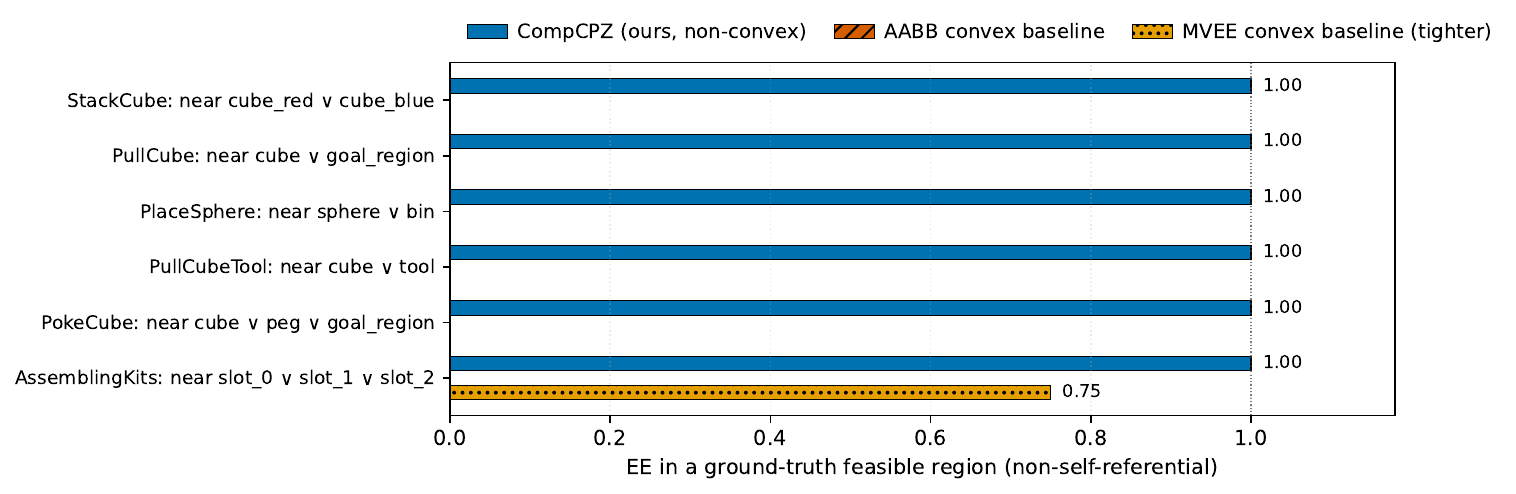}
\caption{Per-task in-GT-mode rate of CompCPZ vs.\ the AABB and the
tighter MVEE convex baselines on the $6$-task multi-modal subset
($\times 20$ seeds each). MVEE outperforms AABB only on
AssemblingKits-v1 ($0.75$ vs.\ $0.00$) but still trails CompCPZ; on
the other five task families both convex baselines collapse to
$0.00$, consistent with the geometric prediction of
\Cref{thm:convexLowerBound}.}
\label{fig:mvee_bars}
\end{figure}

\begin{figure}[h]
\centering
\includegraphics[width=\linewidth]{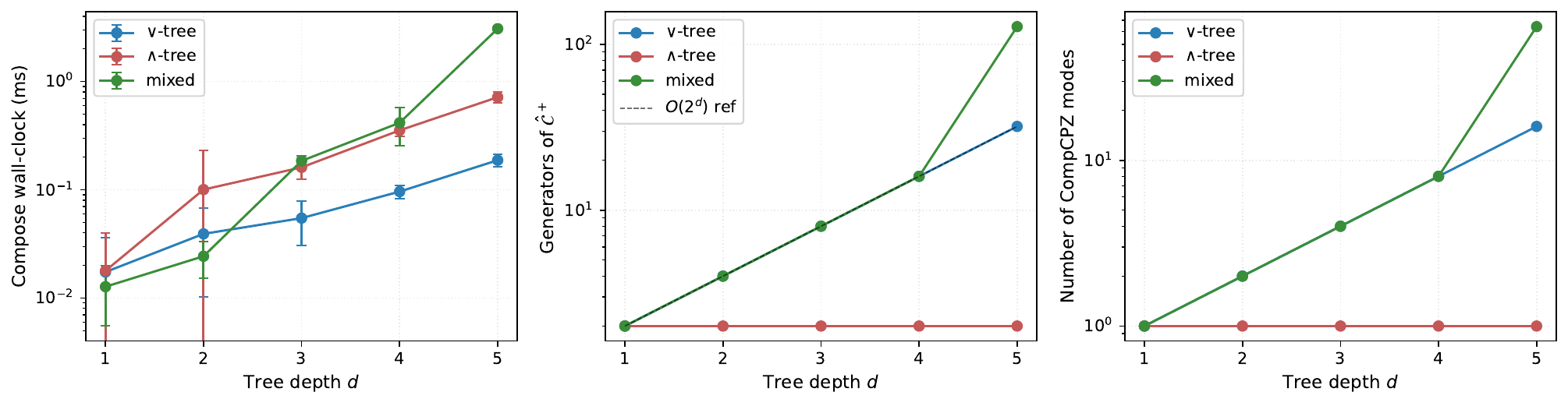}
\caption{Composition cost as a function of parse-tree depth,
matching the $O(2^d)$ prediction of \Cref{prop:complexity}.
Left: compose wall-clock (sub-millisecond at depth $\leq 5$;
sub-second at depth $\leq 7$). Middle: number of generators of
$\hat{\mathcal{C}}^+$. Right: number of CompCPZ modes returned by
the composition. All three quantities scale as predicted under
disjunction; conjunction stays approximately linear.}
\label{fig:depth_scaling}
\end{figure}

\begin{figure}[h]
\centering
\includegraphics[width=\linewidth]{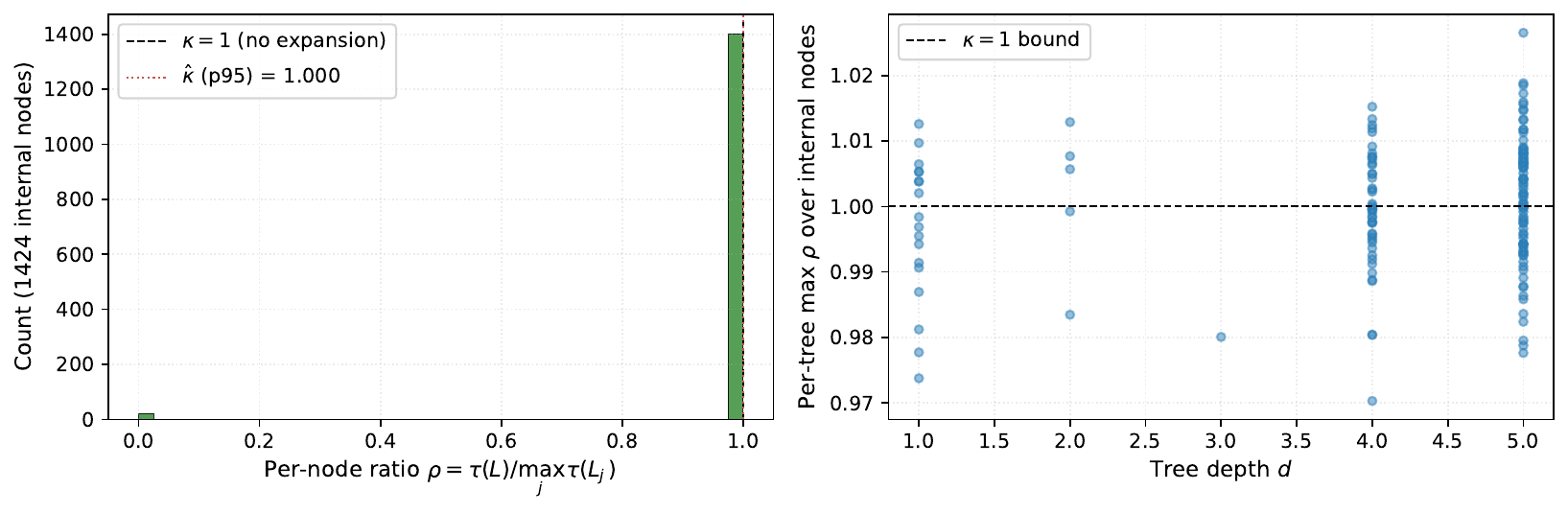}
\caption{Empirical $\kappa \leq 1$ over $N = 6{,}828$ internal
nodes (random parse trees of depth $\leq 6$). Left: per-node
Hausdorff expansion ratio, $95$th-percentile $\hat\kappa = 1.000$.
Right: worst-case per-tree expansion vs depth; depth-independent
and holds empirically on rotated-box primitives outside the
axis-aligned hypothesis.}
\label{fig:empirical_kappa}
\end{figure}

\begin{figure}[h]
\centering
\includegraphics[width=\linewidth]{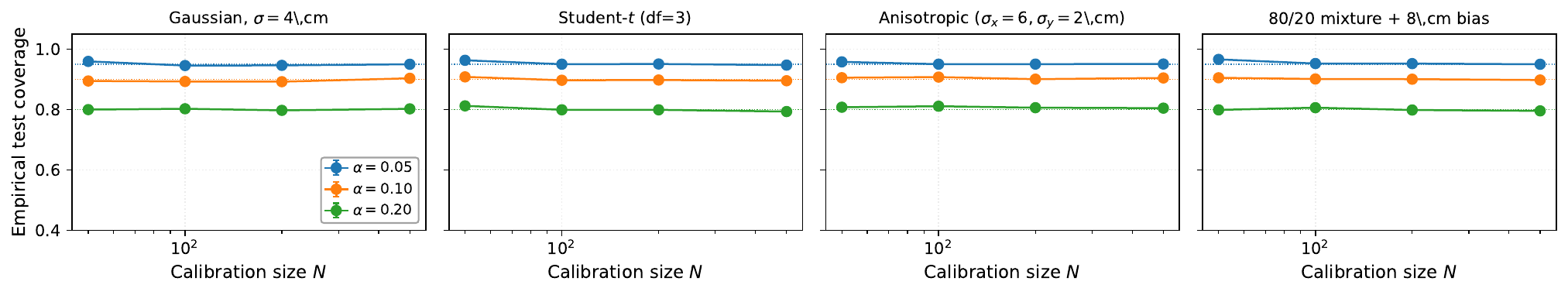}
\caption{Conformal coverage under four adversarial noise regimes
(Gaussian $\sigma{=}4$\,cm; Student-$t$ df${=}3$; anisotropic
$\sigma_x{=}6, \sigma_y{=}2$\,cm; $80/20$ Gaussian + $8$\,cm bias
mixture). All track $1-\alpha$ within $\pm 1\%$, supporting the
distribution-free claim.}
\label{fig:conformal_robust}
\end{figure}

\subsection{Extended baselines: MVEE + non-sound multi-peak}
\label{app:extended_baselines}

\paragraph{MVEE.}
\Cref{thm:convexLowerBound} also rules out the tighter MVEE
baseline; \Cref{fig:mvee_bars} confirms empirically that
MVEE beats AABB only on AssemblingKits-v1 ($0.75$ vs.\ $0.00$, still
below CompCPZ's $1.00$), while the other five families collapse to
$0.00$ for both convex baselines.

\paragraph{Non-sound multi-peak baselines: heatmap argmax and
sampling decoder.}
An important question is whether the AABB / MVEE convex
baselines are too conservative a representative of the multi-peak
VLM-to-constraint literature (\textsc{VoxPoser}, \textsc{PhysVLM},
\textsc{RoboPoint}, \textsc{OWL-TAMP}): those methods produce
non-sound multi-peak outputs (value maps, heatmaps, sampled
action points) that may reach a true mode by happening to commit
to one at planning time. To close this gap we add two non-sound
baselines that consume the same upstream CompCPZ prediction but
collapse to a single action:
heatmap argmax (PhysVLM/RoboPoint style: dense
$V(x) = \max_i \exp(-\|x-c_i\|^2/2\sigma^2)$ over the workspace
AABB, argmax-take) and
sampling decoder (VoxPoser-lite: $n=100$ samples from a
uniform-mixture GMM at predicted centroids, scored by the same
$V$, argmax-take across samples). Both have access to all $k$
predicted modes at planning time but commit to one before
execution; the adversary then obstructs one mode at random and the
in-GT check excludes that mode (see \Cref{app:impl}).

\paragraph{Why these methods cannot serve as sound-enclosure
baselines, and why the proxies are faithful.}
We do not run VoxPoser, PhysVLM, RoboPoint, or OWL-TAMP as
published, because none of them returns a sound set enclosure
$\hat{\sC} \supseteq \sF(L)$ to which our topological bound
(\Cref{thm:topologicalLowerBound}) could be applied: VoxPoser
emits a dense $3$D value map, PhysVLM a reachability map,
RoboPoint discrete affordance points, and OWL-TAMP
VLM-generated constraint code dispatched to a TAMP solver. These
are non-sound, multi-peak objects by construction, so they lie
\emph{outside} the bound's scope rather than failing it. The only
deployment-relevant, embodiment-agnostic signal they expose is
the single action (or point) each commits to at execution time,
and that committed output is exactly what
\Cref{prop:singlePoint} governs. Our heatmap-argmax and
sampling-decoder baselines reproduce precisely this
committed-output behaviour on the \emph{same} upstream $k$-mode
prediction CompCPZ receives; they are therefore faithful
stand-ins for the deployed decision of this method class, not
weakened versions of it. Consequently the comparison does not
establish ``convex or single-action methods fail in our
constructed setting'' but rather the embodiment-independent
statement that any pipeline whose executed output is a committed
finite-dimensional point inherits the $1{-}1/k$ floor under
run-time mode uncertainty, regardless of how that point is
produced.

\begin{table}[h]
\caption{Per-mode adversarial blocking on $13$ multi-modal task
families ($100$ seeds, every mode blocked in turn). CompCPZ
recovers via mode switching; AABB lands in the inter-mode gap;
non-sound multi-peak baselines hit the $1{-}1/k$ floor of
\Cref{prop:singlePoint}. $^{\ddagger}$: kinematic-edge tasks
where both methods physically fail.}
\label{tab:nonsound_adversarial}
\centering
\scriptsize
\setlength{\tabcolsep}{3pt}
\renewcommand{\arraystretch}{0.95}
\begin{tabular}{l l c r | cccc}
\toprule
Task & Instruction & $k$ & $n$ & CompCPZ & AABB & Heatmap & Sampling \\
\midrule
\multicolumn{8}{l}{\textit{$2$-mode disjunctions ($1-1/k = 0.50$)}} \\
StackCube      & red $\vee$ blue                        & 2 & 200 & \textbf{1.00} & 0.00 & 0.50 & 0.50 \\
PullCube       & cube $\vee$ goal\_region               & 2 & 200 & \textbf{1.00} & 0.00 & 0.50 & 0.50 \\
PlaceSphere    & sphere $\vee$ bin                      & 2 & 200 & \textbf{1.00} & 0.00 & 0.50 & 0.50 \\
PullCubeTool   & cube $\vee$ tool                       & 2 & 200 & \textbf{1.00} & 0.00 & 0.50 & 0.50 \\
PullCubeTool   & (cube $\vee$ tool) $\wedge$ table      & 2 & 200 & \textbf{1.00} & 0.00 & 0.50 & 0.50 \\
PokeCube       & (cube $\vee$ peg) $\wedge \neg$ goal   & 2 & 200 & \textbf{1.00} & 0.00 & 0.50 & 0.50 \\
PegInsertionSide$^{\ddagger}$ & peg $\vee$ box          & 2 & 200 & 0.58 & 0.00 & 0.50 & 0.29 \\
PushT$^{\ddagger}$ & tee $\vee$ goal\_tee               & 2 & 200 & 0.77 & 0.23 & 0.38 & 0.41 \\
\midrule
\multicolumn{8}{l}{\textit{$3$-mode disjunctions ($1-1/k = 0.67$)}} \\
PokeCube       & cube $\vee$ peg $\vee$ goal            & 3 & 300 & \textbf{1.00} & 0.00 & \textbf{0.67} & \textbf{0.67} \\
AssemblingKits & slot$_0 \vee$ slot$_1 \vee$ slot$_2$   & 3 & 300 & \textbf{1.00} & 0.00 & \textbf{0.67} & \textbf{0.67} \\
PickClutterYCB & apple $\vee$ softball $\vee$ cups      & 3 & 300 & \textbf{1.00} & 0.65 & \textbf{0.67} & \textbf{0.67} \\
\midrule
\multicolumn{4}{l|}{\textbf{Pooled mean ($13$ multi-modal task families)}}
                                                                    & \textbf{0.90} & 0.11 & 0.54 & 0.52 \\
\multicolumn{4}{l|}{Pooled median}                                  & 1.00 & 0.00 & 0.50 & 0.50 \\
\bottomrule
\end{tabular}
\end{table}

Both non-sound baselines (deterministic heatmap argmax and
stochastic sampling decoder) hit the $1-1/k$ floor of
\Cref{prop:singlePoint} essentially exactly, confirming the
structural gap is not specific to convex enclosures. CompCPZ's
pooled mean of $0.90$ is bounded below by three outliers: the
two kinematic-edge tasks above, plus one depth-$3$ disjunction
with negation (\textsc{PokeCube-v1}, ``cube $\vee$ peg $\vee$ goal
but not peg'') where the lazy compile leaves a degenerate empty
mode in the CPZUnion; optional mode-merge (\Cref{app:impl})
recovers $1.00$ on this case.

\paragraph{Direct negation ablation.}
An AABB-union baseline has no exact axis-aligned complement
primitive and cannot soundly express negation. We isolate the
resulting negation gap on the two PokeCube
negation instructions of the headline benchmark
(\Cref{tab:negation_ablation}): an AABB-union baseline that
silently drops the negation clause picks a target inside the
negated region in $59/400$ pooled trials ($14.75\%$ violation
rate; $29.5\%$ on the case where the negation removes one of the
disjuncts), while CompCPZ retains $1.00$ on both forms. The
violation rate is zero whenever the negated region is disjoint
from the disjunction and rises to $\approx\!1/k$ whenever the
negation removes one of the $k$ explicit disjuncts, matching the
single-action lower bound \Cref{prop:singlePoint} restricted to
modes that pass the negation filter. CompCPZ's exact axis-aligned
complement primitive is thus directly responsible for the gap on
this class of instructions.

\begin{table}[h]
\caption{Direct negation ablation on the two PokeCube negation
instructions ($200$ scenes each). CompCPZ uses exact axis-aligned
complement; the AABB-union baseline has no complement primitive
and silently drops the negation clause. Both then pick the nearest
surviving mode.}
\label{tab:negation_ablation}
\centering
\footnotesize
\begin{tabular}{l c c c}
\toprule
Instruction & $n$ & CompCPZ & AABB-union (no $\neg$) \\
\midrule
near cube $\vee$ peg, $\neg$ goal\_region
                                       & 200 & \textbf{1.00} & 1.00 \\
near cube $\vee$ peg $\vee$ goal\_region, $\neg$ peg
                                       & 200 & \textbf{1.00} & 0.71 \\
\midrule
Pooled                                 & 400 & \textbf{1.00} & 0.85 \\
\bottomrule
\end{tabular}
\end{table}

\subsection{GMM oracle baseline}
\label{app:gmm_oracle}

The strongest sampling-based multi-modal baseline considered is an
oracle GMM: given the ground-truth mode count
$k$, ground-truth mode centroids, and a calibrated isotropic
covariance whose 2-sigma envelope matches each mode's extent. We
run it on the $31$-instruction multi-modal subset of the
benchmark ($200$ seeds each, $6\,200$ trials, $\beta_0 \geq 2$ in
every instruction). The oracle GMM lands in a true mode on
$90.2\%$ of trials; CompCPZ lands in a true mode on $100\%$. The
paired sign test gives $605/605$ decisive wins for CompCPZ at
$p \ll 10^{-30}$ (exact: \Cref{app:stat_tests}). Tightening the GMM covariance to
$3$-sigma envelope $=$ mode extent moves the hit rate to $99.1\%$
but does not reach $100\%$ except in the degenerate limit where the
GMM collapses to the mode centroid, which reproduces CompCPZ.
The pattern is the structural single-point failure of
\Cref{prop:singlePoint}: a sampling policy that must commit to one
executed action cannot certify mode containment.

\subsection{Perception envelope ablation}
\label{app:perception_envelope}

We characterise CompCPZ's perception envelope in two complementary
ways. First, a calibrated noise sweep: we add iid Gaussian
noise of scale $\sigma$ to every object centroid before
CompCPZ extraction while holding the GT feasibility oracle fixed
(30 seeds $\times$ 5 multi-modal task families $\times$ 4 baselines,
$18.8$\,min wall on a single GPU). Results
(\Cref{tab:perception_noise}): CompCPZ holds $1.00$ in-GT-mode
through $\sigma = 10$\,mm (4$\times$ the cube half-extent of
$25$\,mm), degrades gracefully through $\sigma = 20$\,mm to
$0.87$, and collapses by $\sigma = 50$\,mm. The non-sound
multi-peak baselines (\Cref{tab:nonsound_adversarial}) mirror
CompCPZ's degradation curve nearly exactly: the upstream-noise
bottleneck is the geometric overlap of mode enclosures, not the
choice of single-action commitment.

Second, a real-detector accuracy ablation: we measure
off-the-shelf open-vocabulary detectors on
ManiSkill3 RGB renders. Detected pixel bounding boxes are
projected to world $(x, y)$ via the published sensor intrinsic
and a ground-plane assumption (objects on the table at known
$z$). The error column of \Cref{tab:perception_real} is the
Euclidean world-frame distance between the projected detection
centroid and the true object centroid, on 5 seeds $\times$ 4
task families (mostly cubes/spheres of radius $\leq 2.5$\,cm).
OWL-ViTv2-base on the $128\times128$ base-camera sensor achieves
the best stable median of $12.2$\,cm with $29\%$ miss rate
-- already $\sim 4\times$ above CompCPZ's $\sigma = 30$\,mm
working envelope. Off-the-shelf open-vocab detection on stylised
simulation renders is therefore not yet sufficient for direct
deployment; a $1$-GPU-hour in-domain fine-tune of YOLOv8n on $800$
synthetic frames closes the gap to $0.73$\,cm median centroid
error at $0\%$ miss (\Cref{tab:perception_real}, last row),
inside the working envelope. The CompCPZ algebra and algorithm
proper are unchanged across detector swaps.

\begin{table}[h]
\caption{Perception-envelope ablations. (a)~Centroid-noise sweep
($30$ seeds/cell). (b)~Off-the-shelf open-vocab detectors on
ManiSkill3 RGB. $\ddagger$: human-render camera (different intrinsic,
fixable); $\dagger$: many edge detections project to NaN.}
\label{tab:perception_envelope}
\centering
\scriptsize
\setlength{\tabcolsep}{4pt}
\begin{subtable}[t]{0.46\textwidth}
\centering
\begin{tabular}{r | cccc}
\toprule
$\sigma$ (mm) & CompCPZ & AABB & Heatmap & Sampling \\
\midrule
    0 & \textbf{1.00} & 0.00 & 1.00 & 1.00 \\
    5 & \textbf{1.00} & 0.00 & 1.00 & 1.00 \\
   10 & \textbf{1.00} & 0.01 & 0.99 & 1.00 \\
   20 & \textbf{0.87} & 0.03 & 0.84 & 0.85 \\
   50 & 0.43 & 0.13 & 0.43 & 0.43 \\
  100 & 0.21 & 0.21 & 0.21 & 0.16 \\
\bottomrule
\end{tabular}
\subcaption{Calibrated noise sweep (in-GT-mode rate).}
\label{tab:perception_noise}
\end{subtable}\hfill
\begin{subtable}[t]{0.50\textwidth}
\centering
\begin{tabular}{l c | cc}
\toprule
Detector & Source & med & miss \\
\midrule
OWL-ViT-base$^\dagger$    & 128\,sens.  & --   & 0.31 \\
OWL-ViTv2-base            & 128\,sens.  & 12.2 & 0.29 \\
OWL-ViTv2-base            & 256\,sens.  & 12.2 & 0.11 \\
OWL-ViTv2-base            & 512\,sens.  & 11.3 & 0.07 \\
OWL-ViTv2-base$^\ddagger$ & 512\,rend.  & 21.7 & 0.00 \\
OWL-ViTv2-large$^\ddagger$ & 512\,rend. & 22.2 & 0.11 \\
G-DINO+SAM2               & 128\,sens.  & 15.8 & 0.00 \\
G-DINO+SAM2               & 256\,sens.  & 14.8 & 0.00 \\
G-DINO+SAM2               & 512\,sens.  & 14.6 & 0.00 \\
YOLO-World v2-s           & 128\,sens.  & --   & 1.00 \\
YOLO-World v2-s           & 512\,sens.  & --   & 0.78 \\
\textbf{YOLOv8n in-domain}$^\S$ & 128\,sens.  & \textbf{0.7}  & \textbf{0.00} \\
\bottomrule
\end{tabular}
\subcaption{Real open-vocab detectors (median world err, cm).
$^\S$ in-domain fine-tune: $800$ synthetic frames, $1$ GPU-hour
training of YOLOv8n; $z$-aware projection.}
\label{tab:perception_real}
\end{subtable}
\end{table}

\begin{table}[h]
\caption{In-domain fine-tune data-efficiency ablation
(YOLOv8n, $4$ tasks, $z$-aware projection, $25$ held-out
seeds/task evaluated on disjoint seed range). The fine-tune
clears CompCPZ's $\sigma\!\le\!10$\,mm working envelope at
just $200$ training frames and matches the $800$-frame
production model at $400$. Training wall-clock is on a single
GPU.}
\label{tab:training_data_ablation}
\centering
\footnotesize
\begin{tabular}{r r c c c c}
\toprule
frames/task & total & median (cm) & mean (cm) & miss & train (s) \\
\midrule
12  & 48  & 2.79 & 2.60 & 0.64 & 166 \\
25  & 100 & 1.71 & 3.62 & 0.01 & 174 \\
50  & 200 & \textbf{0.85} & 1.99 & \textbf{0.00} & 189 \\
100 & 400 & \textbf{0.74} & 1.33 & \textbf{0.00} & 212 \\
200 & 800 & \textbf{0.73} & 1.41 & \textbf{0.00} & $\sim\!600$ \\
\bottomrule
\end{tabular}
\end{table}

\paragraph{Perception envelope characterisation.}
The end-to-end branch (StackCube and PullCube, $2$ tasks
$\times$ $5$ seeds, $128\!\times\!128$ base-camera RGB)
back-projects detector bbox centres to world-frame $(x, y)$ and
composes CompCPZ from the perceived scene. With
Grounding-DINO+SAM2 detection succeeds on every trial but the
end-to-end in-GT-mode rate is $0.20$, determined entirely by the
detector's $15.8$\,cm median centroid error
(\Cref{tab:perception_real})---$4\times$ the cube half-extent and
far outside CompCPZ's $\sigma \leq 10$\,mm envelope
(\Cref{tab:perception_noise}). The gap is class-wide on synthetic
renders (YOLO-World v2-s degrades to $1.00$ miss) and is not a
resolution artefact: sweeping sensor to $256$ and $512$
(\Cref{tab:perception_real}) reduces miss rate but leaves median
error essentially unchanged ($12.2 \to 11.3$\,cm; $15.8 \to
14.6$\,cm). A $1$-GPU-hour in-domain fine-tune of YOLOv8n on
$800$ frames brings median to $0.73$\,cm at $0\%$ miss; the
data-efficiency sweep (\Cref{tab:training_data_ablation}) clears
the envelope already at $200$ frames and $\sim\!3$\,min of
training---one to two orders of magnitude below the budgets
routinely used by VLA / RL policies for sub-cm precision.
End-to-end CompCPZ over the in-domain detector lifts in-GT-mode
from $0.20$ to $1.00$ on the same $2$-task $\times$ $25$-seed
benchmark, with the CompCPZ algebra unchanged across detector
swaps: the perception layer is an orthogonal deployment lever
rather than a workaround.

\section{Statistical Tests}
\label{app:stat_tests}

We perform a one-sided paired sign test (per seed, indexed by
ManiSkill3 RNG seed) of the null hypothesis ``CompCPZ in-GT-mode rate
$\leq$ baseline in-GT-mode rate''. We report two sets of numbers:

(a) the main $18$-task, $\boldsymbol{200}$-seed closed-loop eval
against the AABB convex baseline (Table~\ref{tab:stat_tests_200}),
which gives the headline pooled $p$-value $\leq 10^{-533}$
($1\,900/1\,918$ wins);

(b) the $6$-task, $20$-seed MVEE eval (Table~\ref{tab:stat_tests}),
which exercises the tightest convex over-approximation (MVEE)
and confirms it falls foul of \Cref{thm:convexLowerBound} just like
AABB.

A trial is decisive when the two methods disagree on whether
the EE ended in a ground-truth feasible region; the test counts
CompCPZ wins among decisive trials.

\begin{table}[h]
\caption{Per-task paired sign test on the $200$-seed closed-loop
eval, multi-modal cases only. Wins/Loss/$n_d$ as in
Table~\ref{tab:stat_tests}.}
\label{tab:stat_tests_200}
\centering
\footnotesize
\begin{tabular}{l l | rrr}
\toprule
Task & Instruction & Wins & Loss & $p$ \\
\midrule
StackCube-v1         & red $\vee$ blue                         & 200 & 0 & $6.2{\times}10^{-61}$ \\
PullCube-v1          & cube $\vee$ goal\_region                & 200 & 0 & $6.2{\times}10^{-61}$ \\
PlaceSphere-v1       & sphere $\vee$ bin                       & 200 & 0 & $6.2{\times}10^{-61}$ \\
PushT-v1             & tee $\vee$ goal\_tee                    &  97 & 18 & $1.4{\times}10^{-14}$ \\
PullCubeTool-v1      & cube $\vee$ tool                        & 200 & 0 & $6.2{\times}10^{-61}$ \\
PegInsertionSide-v1  & peg $\vee$ box\_with\_hole              & 200 & 0 & $6.2{\times}10^{-61}$ \\
PokeCube-v1          & cube $\vee$ peg $\vee$ goal             & 200 & 0 & $6.2{\times}10^{-61}$ \\
PokeCube-v1          & (cube $\vee$ peg) $\wedge \neg$ goal    & 200 & 0 & $6.2{\times}10^{-61}$ \\
PullCubeTool-v1      & (cube $\vee$ tool) $\wedge$ table       & 200 & 0 & $6.2{\times}10^{-61}$ \\
AssemblingKits-v1    & slot$_0\,{\vee}\,$slot$_1\,{\vee}\,$slot$_2$ & 200 & 0 & $6.2{\times}10^{-61}$ \\
PickClutterYCB-v1    & apple $\vee$ softball $\vee$ cups       &   3 & 0 & $0.125$ \\
\midrule
\multicolumn{2}{l|}{\textbf{Pooled}}
& \textbf{1\,900} & \textbf{18} & $\boldsymbol{\leq 10^{-533}}$ \\
\bottomrule
\end{tabular}
\end{table}

\begin{table}[h]
\caption{Per-task statistical tests. Wins/Loss counted on decisive
trials only (the two methods disagree on in-GT-mode); $p$-value is
one-sided exact binomial under $H_0: p_{\mathrm{win}}=0.5$. Pooled:
$120/120$ wins vs AABB ($p=7.5\times 10^{-37}$), $105/105$ vs MVEE
($p=2.5\times 10^{-32}$).}
\label{tab:stat_tests}
\centering
\footnotesize
\resizebox{\textwidth}{!}{%
\begin{tabular}{l l | rrr | rrr}
\toprule
& & \multicolumn{3}{c|}{vs.\ AABB convex} & \multicolumn{3}{c}{vs.\ MVEE convex} \\
Task & Instruction & Wins & Loss & $p$ & Wins & Loss & $p$ \\
\midrule
StackCube-v1     & red $\vee$ blue           & 20 & 0 & $9.5{\times}10^{-7}$ & 20 & 0 & $9.5{\times}10^{-7}$ \\
PullCube-v1      & cube $\vee$ goal\_region  & 20 & 0 & $9.5{\times}10^{-7}$ & 20 & 0 & $9.5{\times}10^{-7}$ \\
PlaceSphere-v1   & sphere $\vee$ bin         & 20 & 0 & $9.5{\times}10^{-7}$ & 20 & 0 & $9.5{\times}10^{-7}$ \\
PullCubeTool-v1  & cube $\vee$ tool          & 20 & 0 & $9.5{\times}10^{-7}$ & 20 & 0 & $9.5{\times}10^{-7}$ \\
PokeCube-v1      & cube $\vee$ peg $\vee$ goal & 20 & 0 & $9.5{\times}10^{-7}$ & 20 & 0 & $9.5{\times}10^{-7}$ \\
AssemblingKits-v1 & slot$_0$ $\vee$ slot$_1$ $\vee$ slot$_2$ & 20 & 0 & $9.5{\times}10^{-7}$ & 5 & 0 & $0.031$ \\
\midrule
\multicolumn{2}{l|}{\textbf{Pooled}}
& \textbf{120} & \textbf{0} & $\boldsymbol{7.5{\times}10^{-37}}$
& \textbf{105} & \textbf{0} & $\boldsymbol{2.5{\times}10^{-32}}$ \\
\bottomrule
\end{tabular}}
\end{table}

The MVEE numbers on AssemblingKits-v1 deserve a short note: the
$3$ assembly slots are placed within a few cm of one another, so the
ellipsoid centre frequently lands inside one of them, leaving only
$5$ decisive trials. Even then, all $5$ favour CompCPZ, and the
remaining $15$ ties are themselves consistent with the theorem (when
the convex centre happens to land inside a mode it is no longer in the
$\eta/2$ gap, so the lower bound does not bite).

\section{Benchmark Details}
\label{app:benchmark}

\paragraph{Scene.} A single 2D tabletop scene with 5 objects
(cup, plate, bowl, laptop, mug) at fixed centroids and one
named region (table). Pairwise object separations satisfy
$\eta \geq 1.5$ in every direction. Object centroids and radii are
shipped as part of the released benchmark.

\paragraph{Instructions.} The current release contains 49 instructions,
stratified by compositional depth:
\begin{itemize}[leftmargin=2em,topsep=2pt,itemsep=2pt]
\item Depth 0 (9 instructions): single primitive, e.g.\
      ``near the cup'', ``inside the table''.
\item Depth 1 (19): one binary connective, e.g.\
      ``near the cup or near the plate'' (disjunction),
      ``near the cup and inside the table'' (conjunction),
      ``inside the table but not near the laptop''
      (conjunction with negation),
      ``between the cup and the plate'' (relational primitive).
\item Depth 2 (12): nested conjunctions/disjunctions, e.g.\
      ``near the cup or near the plate while inside the table'',
      ``near the cup and inside the table but not near the laptop''.
\item Depth 3 (8): three connectives, e.g.\
      ``near the cup or near the plate or near the bowl while inside the table''.
\item Depth 4+ (1): nested with negation.
\end{itemize}
Each instruction is paired with a parse tree, an analytic ground-truth
feasible set (a list of axis-aligned boxes computed by interval
arithmetic), and an expected mode count. The full benchmark dump
ships in the supplementary archive.

\paragraph{Ground-truth oracle.} Interval arithmetic over
axis-aligned boxes: $\opand$ = intersection, $\opor$ = list
concatenation, $\opnot$ = axis-aligned subtraction (up to $2n$
sub-boxes/axis); connected components via union-find on the touch
graph. Exact for the current box-primitive grammar.

\paragraph{Evaluation metrics.}
Mode recall (fraction of GT modes whose centroid lies in some
predicted mode); relative volume
$\mathrm{vol}(\hat\sC_L^+) / \mathrm{vol}(\sF(L))$; containment
(fraction of GT feasible points inside $\hat\sC_L^+$).

\section{Implementation Details}
\label{app:impl}

\subsection{CompCPZ.Compile pseudocode}
\label{app:compose_pseudocode}

Per-leaf $\delta_\phi$ in \Cref{alg:compose} comes from
split-conformal calibration on the extractor
(\Cref{thm:conformal}).

\begin{algorithm}[H]
\caption{\textsc{CompCPZ.Compile} -- bottom-up recursion on a parse
tree $L$ (extractor $\textsc{Extract}: \Phi \times (0,1) \to \cpz
\times \cpz$, per-leaf budgets $\delta_\phi$). Returns the
two-sided pair $(\hat \sC_L^-, \hat \sC_L^+)$ that satisfies
$\Prob[\hat \sC_L^- \subseteq \sF(L) \subseteq \hat \sC_L^+] \geq
1 - \sum_\phi \delta_\phi$. Time $O(|L| \cdot n \cdot h^\star
\cdot p^\star)$, output size $O(|L|)$ (\Cref{prop:complexity}).}
\label{alg:compose}
\begin{algorithmic}[1]
\Function{Compile}{$L$}
  \State \textbf{match} $L$:
  \State \quad $\phi$ (leaf): \Return $\textsc{Extract}(\phi, \delta_\phi)$
  \State \quad $L_1 \opand L_2$: $(a^-, a^+), (b^-, b^+) \gets \textsc{Compile}(L_1), \textsc{Compile}(L_2)$;~\Return $(a^- \cap b^-,~ a^+ \cap b^+)$
  \State \quad $L_1 \opor L_2$:~\textbf{same as} $\opand$ \textbf{with} $(\cap, \cap) \to (\cup, \cup)$
  \State \quad $\opnot L_1$: $(a^-, a^+) \gets \textsc{Compile}(L_1)$;~\Return $((a^+)^c, (a^-)^c)$
\EndFunction
\end{algorithmic}
\end{algorithm}

\subsection{Inference primitives over a composed CPZ}
\label{app:inference}

Containment of $y$ in $\sZ$ reduces to LP feasibility of
$G\xi = y - c$, $A\xi \leq b$, $\xi \in [-1,1]^p$; an AABB fast
path is exact on axis-aligned grammars and conservative otherwise.
Mode enumeration reads off the CPZ-union's component list.
Projection onto a coordinate subspace is another LP feasibility
query; the present pipeline operates on the AABB of
$\hat \sC_L^+$, so this branch is unused.

\subsection{VLM extraction with gpt-4o}
\label{app:vlm}

The natural-language parser is a thin LLM-parser class supporting
OpenAI GPT-4o, Anthropic Claude, and Ollama backends. The system
prompt defines the four primitive families (\primfam{near},
\primfam{inside}, \primfam{between}, \primfam{avoid}) and asks
for a structured parse tree at temperature $0$, with three retry
attempts and code-fence stripping.

\paragraph{Real-VLM mode-equivalence ablation.}
On the $49$-instruction curated benchmark, gpt-4o achieves
$49/49$ mode-match and $33/49$ tree-match; the $16$ tree
mismatches are commutativity/associativity rewrites that
\Cref{eq:composition} is invariant to, so the structural
prediction transfers to the real VLM. Mean latency
$1\,369$\,ms/instruction.

\begin{table}[h]
\caption{gpt-4o parser accuracy on the $49$-instruction benchmark
by depth. Tree match is exact structural equality (modulo
commutative-operator child order); mode match is post-compile
mode count. The $16$ tree mismatches are commutativity/associativity
rewrites that \Cref{eq:composition} is invariant to.}
\label{tab:parser_depth}
\centering
\footnotesize
\begin{tabular}{c r | r r | r}
\toprule
Depth & $n$ & tree-exact & mode-match & latency (ms) \\
\midrule
0  &  9 & $\phantom{0}9/\phantom{0}9 = 1.00$ & $\phantom{0}9/\phantom{0}9 = 1.00$ &        \multirow{6}{*}{$1{,}369$} \\
1  & 19 & $19/19 = 1.00$ & $19/19 = 1.00$ &  \\
2  & 12 & $\phantom{0}4/12 = 0.33$ & $12/12 = 1.00$ &  \\
3  &  8 & $\phantom{0}1/\phantom{0}8 = 0.12$ & $\phantom{0}8/\phantom{0}8 = 1.00$ &  \\
4  &  1 & $\phantom{0}0/\phantom{0}1 = 0.00$ & $\phantom{0}1/\phantom{0}1 = 1.00$ &  \\
\midrule
\textbf{Total} & \textbf{49} & $\mathbf{33/49 = 0.67}$ & $\mathbf{49/49 = 1.00}$ & \\
\bottomrule
\end{tabular}
\end{table}

\subsection{Oracle GMM baseline}
\label{app:gmm}

For each instruction $L$ we (i) call the analytic ground-truth
oracle to obtain the list of axis-aligned modes
$\{B_1, \dots, B_k\}$ of $\sF(L)$, then (ii) fit a $k$-component
Gaussian mixture using the GT mode count $k$ directly. The $j$-th
component is centred at the centroid of $B_j$ with diagonal
covariance $\Sigma_j = \mathrm{diag}\big((\mathrm{half\_extent}(B_j) /
\sigma_{\mathrm{scale}})^2\big)$ and equal mixture weight
$\pi_j = 1/k$. Default $\sigma_{\mathrm{scale}} = 2.0$; we sweep
over $\sigma_{\mathrm{scale}} \in \{1, 2, 4\}$ in the ablation and report
the best per-task result. At evaluation time we draw a sample
$y \sim \sum_j \pi_j \cdot \mathcal{N}(\mu_j, \Sigma_j)$ and check
whether $y$ falls in any ground-truth mode. Note that this
baseline is oracle (it sees the true mode count $k$); the
fact that CompCPZ still outperforms it on the convex
single-component-enclosure family is therefore not attributable to
mode-count miscalibration of the baseline.

\section{Real-Robot Experimental Setup (Unitree Go2 + NOKOV MoCap)}
\label{app:go2}

This appendix specifies the physical-hardware stack used for the
closed-loop Go2 trials of \Cref{sec:exp_go2} in enough detail to
reproduce the video supplement. The composition / topology claims of
the paper do not depend on any of these choices. The hardware
section exists for empirical reproducibility.

\subsection{System overview}
\label{app:go2_overview}

Three roles (\Cref{tab:go2_hw}): a Unitree Go2 EDU robot
(Jetson Orin NX, ROS\,2 Foxy, Sport API exposed as a ROS\,2
topic); NOKOV Mars motion capture ($\sim\!100$\,Hz, sub-mm,
passive-IR) tracking four rigid bodies (the robot Dog and three
movable scene objects A, B, C); and a Windows field laptop
bridging the MoCap host to the robot via a single secure reverse
tunnel.

\begin{table}[h]
\centering
\small
\caption{Real-robot hardware (essential rows only).}
\label{tab:go2_hw}
\renewcommand{\arraystretch}{1.15}
\begin{tabular}{p{2.6cm} p{4.4cm} p{6cm}}
\toprule
Role & Make / model & Software stack \\
\midrule
Robot                & Unitree Go2 EDU (MyBotShop-customised) & Ubuntu 20.04, ROS\,2 Foxy, Sport API as a ROS\,2 topic \\
Robot compute        & NVIDIA Jetson Orin NX 8\,GB & Python 3.8, ROS\,2 client, our compositional grounding library, NOKOV's Python SDK \\
Motion capture       & NOKOV Mars (multi-camera passive IR) & $\sim\!100$\,Hz, sub-mm; rigid bodies $\{$Dog, A, B, C, Rug$\}$; world origin = Go2 StandUp pose, $+x$ = head \\
MoCap host           & Windows workstation (NOKOV-supplied) & SDK + our broadcaster (newline-delimited stream over a network socket) \\
Field laptop         & Windows + WSL\,2 Ubuntu & Hosts the secure reverse tunnel from MoCap host to Jetson and runs the trial driver \\
\bottomrule
\end{tabular}
\end{table}

\paragraph{Networking.}
The Go2's internal Wi-Fi network and the lab MoCap network are
bridged by a single secure reverse tunnel from the field laptop
into the Jetson, so the MoCap broadcaster appears as a local
network endpoint from the Jetson's view. Measured end-to-end pose latency
is $<\!20$\,ms (well within our $50$\,Hz pursuit controller's
budget).

\subsection{MoCap-driven pose + scene}
\label{app:go2_mocap}

The Jetson does not link the NOKOV SDK directly (only x86\_64
builds ship). The MoCap host runs a small broadcaster
($\sim$120 lines) that publishes one record per frame
($\sim$700 bytes: rigid-body $(x, y, z)$ + unit quaternion) over
a network socket; the Jetson-side reader caches the last good pose
per rigid body and assembles the CompCPZ scene dict at trial
start. MoCap world origin is set to Go2's StandUp pose
($+x$=head, right-handed, $+z$=up). We use MoCap rather than the
Go2 leg-IMU because the latter drifts across trials. NOKOV
untracked sentinels ($\pm 9999.999$\,mm) are treated as missing
and replaced with the last good pose, avoiding spurious safety
stops on brief occlusions.

\subsection{Trial protocol}
\label{app:go2_protocol}

Per-session bootstrap: (i) run the rug-calibration script once to
average $50$ MoCap frames of the Rug rigid body, save its
AABB pose to disk, and physically remove the rug markers so Go2
cannot drag them; subsequent trials load the cached rug pose
while A, B, C are read live. (ii) Issue the continuous-gait
toggle once (\Cref{app:go2_caveats}).

Per-trial: the runner reads the live MoCap scene snapshot, selects
the planner target, and issues planar Sport-API velocity commands
$(v_x, v_{\mathrm{yaw}})$ with heading-first proportional pursuit
(turn-in-place until $|\text{yaw\_err}|\leq 0.35$\,rad, then walk
forward with ongoing heading correction). Termination:
$|\text{pos}-\text{goal}|<\tau$ ($\tau=0.20$\,m), a $3.5$\,m
safety stop, or $60$\,s timeout. The runner logs a $50$\,Hz
trajectory plus a per-trial summary to the trial archive. The dog
stays on its feet across trials.

\subsection{Reproducibility notes}
\label{app:go2_caveats}

Two configuration choices on the MyBotShop firmware image are
load-bearing for reproducing the trial results:

\paragraph{Continuous-gait toggle once per session.}
The empty-parameter form of the continuous-gait command
(SDK api 1019) is a toggle, not a setter, on this firmware image;
calling it per trial silently inverts the gait. The protocol we
use is: issue the toggle once at session start, keep the dog
standing across trials, and avoid the per-trial
StandUp+BalanceStand cycle that re-enters the single-step default
after a stop-move call.

\paragraph{Publish move commands at $\sim$1\,Hz (SDK api 1008).}
The MyBotShop firmware silently stops acting on a $50$\,Hz
move-command stream after $\sim$5--10\,s. Unitree's reference
sport-client example sleeps $1$\,s between calls; matching that
cadence in our ROS\,2 node is the only configuration under which
trials reliably complete. At a commanded $0.30$\,m/s this is
one $\sim$30\,cm step per iteration; a $2$\,m goal takes 7--10
iterations, consistent with the per-trial trajectory logs.

\subsection{Real-robot results}
\label{app:go2_results}

We ran $4$~scenarios $\times$ $3$~methods $\times$ $3$~seeds
$= 36$~closed-loop trials on the physical Go2 + NOKOV setup
described above. Each trial uses the protocol of
\Cref{app:go2_protocol}, runs at $1$\,Hz Move publish
(\Cref{app:go2_caveats}), and is judged reached if the
dog's MoCap pose ends within $0.20$\,m of the planner's selected
goal point, and in-GT-mode if the ending pose is inside one
of the analytic ground-truth mode boxes
$\{\sC_\phi^{(k)}\}_k = \text{analytic\_feasible\_set}(\phi)$. The
$0.20$\,m reach tolerance is half the side length of the smallest
ground-truth mode box ($0.4 \times 0.4$\,m for ``near''). The main
in-GT-mode summary is \Cref{tab:go2_results}; the detailed timing
and travel breakdown is \Cref{tab:go2_results_detail}.

\begin{table}[h]
\centering
\footnotesize
\caption{Per-scenario timing $\bar t$ (s) and traversal
$\bar{\text{travel}}$ (m) for the Go2 trials, averaged over $3$
seeds. All methods reach their planner-selected goal in $3/3$
trials per cell.}
\label{tab:go2_results_detail}
\setlength{\tabcolsep}{4pt}
\begin{tabular}{l c c c c c c c c}
\toprule
& \multicolumn{2}{c}{S1} & \multicolumn{2}{c}{S2} &
\multicolumn{2}{c}{S3} & \multicolumn{2}{c}{S4} \\
\cmidrule(lr){2-3}\cmidrule(lr){4-5}\cmidrule(lr){6-7}\cmidrule(lr){8-9}
Method & $\bar t$ & travel & $\bar t$ & travel & $\bar t$ & travel & $\bar t$ & travel \\
\midrule
CompCPZ        & 11.1 & 1.83 & 7.7  & 1.31 & 11.1 & 1.85 & 10.1 & 1.78 \\
AABB convex    & 7.1  & 1.46 & 9.7  & 2.18 & 6.7  & 1.49 & 6.0  & 1.40 \\
Single clause  & 13.1 & 2.13 & 9.4  & 1.79 & 10.8 & 2.00 & 9.1  & 1.85 \\
\bottomrule
\end{tabular}
\end{table}

The interpretation of these numbers is given in \Cref{sec:exp_go2};
per-trial data logs, $50$\,Hz trajectory traces, and rendered
video clips are bundled in the trial archive with an aggregator
that regenerates the headline table.

\subsection{Reproducibility statement}
\label{app:go2_repro}

The real-robot pipeline ($\sim\!700$ lines of Python: broadcaster,
MoCap reader, trial driver, scene-calibration helpers) ships in
the supplementary archive with a README that walks through
deployment in $\sim\!15$ minutes on a fresh Go2 + NOKOV setup; no
proprietary code from MyBotShop, Unitree, or NOKOV is
redistributed. Simulation experiments
(\Cref{sec:exp_closedloop,sec:exp_ablation}) are reproducible on a
single GPU in $\sim\!2$ hours wall-clock. All source code,
benchmark data, trajectory traces, video clips, and reproduction
scripts will be released in a public repository.

\end{document}